\numberwithin{equation}{section}
\newtheorem{theorem}{Theorem}[section]
\newtheorem{definition}[theorem]{Definition}
\newtheorem{lemma}[theorem]{Lemma}
\theoremstyle{remark}
\newtheorem{remark}[theorem]{Remark}
\newtheorem*{rem*}{Remark}
\newcommand{\Alg}{\operatorname{Alg}}
\newcommand{\deterministic}{\mathrm{det}}
\newcommand{\MonteCarlo}{\mathrm{MC}}
\newcommand{\avsum}{\mathop{\mathpalette\avsuminner\relax}\displaylimits}
\newcommand\avsuminner[2]{%
	{\sbox0{$\m@th#1\sum$}%
		\vphantom{\usebox0}%
		\ooalign{%
			\hidewidth
			\smash{\vrule height\dimexpr\ht0+1pt\relax depth\dimexpr\dp0+1pt\relax}%
			\hidewidth\cr
			$\m@th#1\sum$\cr
		}%
	}%
}
\newcommand{\nrow}[1]{%
	\relax
	\vcenter to 0pt{%
		\vss
		\kern-1.5ex
		\rlap{$\left.\vphantom{\begin{matrix}0\\\vdots\\0\end{matrix}}\kern3em\right\}#1$}%
		\vss
	}%
}
\newcommand{\subjclass}[2][1991]{%
	\let\@oldtitle\@title%
	\gdef\@title{\@oldtitle\footnotetext{#1 \emph{Mathematics subject classification.} #2}}%
}
\newcommand{\keywords}[1]{%
	\let\@@oldtitle\@title%
	\gdef\@title{\@@oldtitle\footnotetext{\emph{Key words and phrases.} #1.}}%
}
\title{Sampling Complexity of Deep Approximation Spaces}
\date{}
\author{Ahmed Abdeljawad\footnotemark[1]
	\; and\; Philipp Grohs%
	\thanks{Johann Radon Institute for Computational and Applied Mathematics (RICAM),
    Austrian Academy of Sciences,
    Altenbergerstr. 69, 4040 Linz, Austria}\;
	\thanks{Faculty of Mathematics,
		University of Vienna,
		Oskar-Morgenstern-Platz~1,
		A-1090 Vienna, Austria}\;
	\thanks{Research Platform Data Science @ Uni Vienna,
		Währinger Straße 29/S6,
		A-1090 Vienna, Austria}
	\\[1ex]
	ahmed.abdeljawad@oeaw.ac.at, \;
	philipp.grohs@univie.ac.at
}
\begin{document}
\maketitle	
\begin{abstract}
While it is well-known that neural networks enjoy excellent approximation capabilities, it remains a big challenge to compute such approximations from point samples. Based on tools from Information-based complexity, recent work by Grohs and Voigtlaender [Journal of the FoCM (2023)] developed a rigorous framework for assessing this so-called "theory-to-practice gap". More precisely, in that work it is shown that there exist functions that can be approximated by neural networks with ReLU activation function at an arbitrary rate while requiring an exponentially growing (in the input dimension) number of samples for their numerical computation. The present study extends these findings by showing analogous results for the ReQU activation function. 
\end{abstract}
	
\section{Introduction}\label{sec:Introduction}

Deep learning has made remarkable impacts in a wide range of tasks such
as speech recognition, computer vision, natural language processing
and many others. 
A theoretical understanding
of the reason behind the empirical success of
deep neural networks is still missing. 
This includes the question of \emph{expressivity},
which
has been studied and used since 1989's,
dating back to the foundational works of
Cybenko \cite{Cybenko89}, Hornik, Stinchcombe and White
\cite{Hornic89} and Hornic \cite{Hornic}.

Recently, several researchers are focusing on the theoretical
foundation of neural networks from a mathematical point of view.
Indeed, many papers treat the role of depth, width,
activation functions
and architecture in approximating a given function
e.g.,
\cite{abdeljawad22, abdeljawad23, abdeljawad21, beck2018solving, Berg2018,
	Berner2020, Chen2019,elbrachter2018dnn,
	Gonon2019, Grohs2020, Grohs2019, GrohsPEB2019,
	Hana2018, Kutyniok2019, Magill2018}.
More recent works focus predominantly on the number of samples necessary
to reach certain accuracy while approximating a given function.
Some examples ---without any claim of completeness--- are
\cite{Berner2022}, in which the minimal number
of training samples when
training an algorithm
(to guarantee a given uniform accuracy 
when approximating functions with  ReLU neural networks)	
scales exponentially both in the depth and the
input dimension of the network architecture.
Or \cite{grohs2021} which considers
the sampling complexity problem 
when approximating or integrating functions that can be well approximated
by ReLU neural networks, where
authors used different norms when they measured accuracy.
Following this line of work, we focus on the impact of
the chosen activation function on the problem
of sample complexity where the accuracy is measured 
with respect to the uniform norm.

Determining the sample complexity for an algorithm $\mathcal{A}$
in order to achieve an optimal accuracy of $\epsilon$ while approximating
a given function $u$ with deep neural networks depends on several factors.
For instance, the space where the function $u$ belongs,
the norm where the accuracy is measured and the characteristics of
the algorithm $\mathcal{A}$.
Here the algorithm is an element of so-called
\emph{neural network approximation spaces}
$A^{\alpha,p}_{\boldsymbol{\ell},\boldsymbol{c},\varrho_2}([0,1]^d)$
which classifies functions depending
on the decay  of the approximation error by neural networks
with respect to the number of non-zero weights.

Basically, a function $u$ belongs to the unit ball
of $A^{\alpha,p}_{\boldsymbol{\ell},\boldsymbol{c},\varrho_2}([0,1]^d)$,
denoted by $U^{\alpha,p}_{\boldsymbol{\ell},\boldsymbol{c},\varrho_2}([0,1]^d)$,
if for any $n \in\mathbb{N}$
there exists a ReQU neural network
with at most $n$ nonzero weights of
magnitude at most $\boldsymbol{c}(n)$ and at most $\boldsymbol{\ell}(n)$ layers
approximating $u$ with accuracy less than $n^{-\alpha}$
in the $L^p ([0, 1]^d )$ norm,	
see Section \ref{sub:NeuralNetworkApproxSpaces} for more details.
Roughly speaking, a function $u$ belongs to 
$A^{\alpha,p}_{\boldsymbol{\ell},\boldsymbol{c},\varrho_2}([0,1]^d)$
(for large $\alpha$) means that 
$u$ can be well approximated by ReQU neural networks.
Neural networks approximation spaces were
first introduced in \cite{NNApproximationSpaces}
and extended
by Grohs and Voightlaender in \cite{grohs2021} to more general settings
taking into account the magnitude of the network weights and 
the maximal depth of the network.

In practice, deep neural networks use more parameters than samples
while training,	and	therefore have the capacity to overfit to the training set.
Usually one needs regularization during the training process,
or theoretically, more information about the characteristics of function
that the network is approximating.
The latter can be encoded in the \emph{target class} $U\subset C([0,1]^d)$
which contains the ground truth $u$.
In the current paper, we are interested in the complexity
of approximating a given solution mapping $S : U \rightarrow Y$,
such that $U\subset C([0,1]^d)$ and $Y$ is a Banach space,
that can be achieved by any algorithm
using only $m$ samples.
Formally, an algorithm using $m$ samples can be described by a set of samples
given by $X=(X_1, \dots, X_m)$
such that $X_i\in [0,1]^d$ for any $i\in \{1, \dots, m\}$ and
a map  $Q :\mathbb{R}^m \rightarrow Y$ such that 
\[
A(u) = Q(u(X_1), \dots, u(X_m)), \text{ for any } u\in U.
\]
The set of all such algorithms is denoted by $Alg_m (U, Y )$.
Furthermore, the optimal order for  approximating the mapping
$S : U \rightarrow Y$ using point samples can be determined as follows:

$$
err(U, S):=\sup \left\{\beta \geq 0: \exists C>0\;\forall m \in \mathbb{N}:
\inf _{A \in \operatorname{Alg}_{m}(U, Y)} \sup _{u \in U}
\|A(u)-S(u)\|_{Y} \leq C \cdot m^{-\beta}\right\}.
$$	
More details about the classes of
algorithms that we consider in this paper can be found
in Section \ref{sub:DeterministicAlgorithms} below.
The Banach space $Y$ gives information about where one would like to measure 
the error, in our analysis we select the uniform norm,
that is, $Y:= L^\infty([0,1]^d)$.
The study of the generalization error, that is,  $Y:= L^2([0,1]^d)$
is an interesting question for future work.
The choice of the activation function in the network is very critical.
Compared to the main theorems in
\cite[Section 4, Section 5]{grohs2021},
our choice of ReQU as an  activation function has a similar
impact on the sampling complexity.
We see these observations in the following
simplified version of our main results in
\Cref{thm:ErrorBoundUniformApproximation}
and \Cref{thm:UniformApproximationHardness}.
\begin{theorem}
	Let $\boldsymbol{c}: \mathbb{N} \rightarrow \mathbb{N}\cup\{\infty\}$
	be of the form
	$\boldsymbol{c}(n)\asymp n^\theta\cdot \left(\log(2n)\right)^\kappa$
	for certain $\theta \geq 0$ and $\kappa\in\mathbb{R}$,
	and let $\boldsymbol{\ell}: \mathbb{N} \rightarrow \mathbb{N}_{\geq5}\cup\{\infty\}$
	such that  $\boldsymbol{\ell}^\ast:=\sup_{n \in \mathbb{N}}\boldsymbol{\ell}(n)<\infty$.
	Then, we get
\[
\frac{1}{d} \cdot \frac{\alpha}{\alpha + (2^{{{\boldsymbol{\ell}}^\ast}}  -1)(\theta + 1/2)}\leq
err(U_{\boldsymbol{\ell},
\boldsymbol{c}, \varrho_2}^{\alpha,\infty}([0,1]^d),\iota_\infty)
\leq 
\frac{64}{d} \cdot \frac{\alpha}{8\alpha + (2^{{{\boldsymbol{\ell}}^\ast}}  -1)(\theta + 1/2)}.
\]
\end{theorem}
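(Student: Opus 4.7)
The plan is to derive both inequalities as direct specialisations of the main results \Cref{thm:ErrorBoundUniformApproximation} (achievability) and \Cref{thm:UniformApproximationHardness} (hardness); since these are genuinely independent arguments, I would carry them out in turn.

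For the lower bound on $err$ I would construct an explicit algorithm built on tensor-product polynomial reconstruction on a grid of side $N$. The decisive observation is that a ReQU network of depth $\boldsymbol{\ell}^\ast$ realises polynomials of degree up to $2^{\boldsymbol{\ell}^\ast}$ via the compositional identity $\varrho_2^{\circ k}(x)=x^{2^k}$ on $x\geq 0$, so any $u$ in the unit ball admits a polynomial-type approximant with $n$ nonzero weights of magnitude $\lesssim \boldsymbol{c}(n)\asymp n^\theta(\log 2n)^\kappa$ and $L^\infty$ error $n^{-\alpha}$. Quantifying the error of recovering such a network from its grid values via a Markov/Bernstein-type inequality yields a sampling contribution of order $n^{\theta+1/2}\cdot N^{-(2^{\boldsymbol{\ell}^\ast}-1)}$; adding the intrinsic approximation error $n^{-\alpha}$, setting $m\asymp N^d$, and optimising jointly over $n$ and $N$ produces the claimed rate $\alpha/\bigl(d\bigl[\alpha+(2^{\boldsymbol{\ell}^\ast}-1)(\theta+1/2)\bigr]\bigr)$.

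For the upper bound on $err$ I would build a \emph{fooling family} of ReQU networks. On a partition of $[0,1]^d$ into $N^d$ cubes of side $1/N$, I would place on each cube a rescaled bump obtained by composing $\boldsymbol{\ell}^\ast-1$ copies of $\varrho_2$ with suitable affine maps, cutting the resulting degree-$2^{\boldsymbol{\ell}^\ast}$ monomial down to a compactly supported function. Under the weight-magnitude budget $\boldsymbol{c}(n)\asymp n^\theta(\log 2n)^\kappa$, the admissible amplitude turns out to be of order $N^{-(2^{\boldsymbol{\ell}^\ast}-1)}$ up to a mild $n^{\theta+1/2}$ correction, while enforcing membership in the unit ball $U^{\alpha,\infty}_{\boldsymbol{\ell},\boldsymbol{c},\varrho_2}([0,1]^d)$ costs an additional $n^{-\alpha}$ factor. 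Given any algorithm using $m\ll N^d$ point samples, at least one cube is untouched by the sample set, so the algorithm cannot distinguish $\pm u_\nu$ from $0$ and its worst-case error is at least half the bump amplitude; the claimed upper bound on $err$ then emerges after optimising $n$ and $N$ against $m$.

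The principal obstacle, and the source of the non-matching numerical constants $8\alpha$ versus $\alpha$ and the prefactor $64$, is the bump-construction step. The exact formula $\varrho_2^{\circ k}(x)=x^{2^k}$ forces any ReQU realisation of a smooth bump of width $h$ and amplitude $\delta$ to obey $\delta\asymp h^{2^{\boldsymbol{\ell}^\ast}-1}$ under the weight-magnitude budget, and this relation has to hold uniformly across a family of cardinality $N^d$. Certifying that the corresponding weighted bump sums lie in the unit ball while simultaneously respecting the depth cap $\boldsymbol{\ell}^\ast$ (which is why the hypothesis $\boldsymbol{\ell}\geq 5$ enters) and the logarithmic factor $(\log 2n)^\kappa$ in $\boldsymbol{c}$ will require a careful weight-counting argument whose constants inevitably degrade, which is the likely reason the achievability and hardness exponents do not match exactly.
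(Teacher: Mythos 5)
Your overall framing is right: the theorem \emph{is} a direct specialisation of the two main results, \Cref{thm:ErrorBoundUniformApproximation} (lower bound on $err$) and \Cref{thm:UniformApproximationHardness} (upper bound on $err$). But the specialisation is incomplete as stated, and your re-derivation of the two inner arguments contains exponent errors that would not reproduce the stated rates.

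First, the missing piece. Both main theorems are formulated in terms of $\gamma^{\flat}(\boldsymbol{\ell},\boldsymbol{c})$ and $\gamma^{\sharp}(\boldsymbol{\ell},\boldsymbol{c})$ from \eqref{eq:GammaDefinition}. The whole content of this corollary is that under the hypothesis $\boldsymbol{c}(n)\asymp n^\theta(\log 2n)^\kappa$ and $\boldsymbol{\ell}^\ast<\infty$ one has $\gamma^{\flat}=\gamma^{\sharp}=(2^{\boldsymbol{\ell}^\ast}-1)(\theta+1/2)$; this is \Cref{rem:GammaRemark} and must be invoked explicitly. Without it, the two theorems give bounds in terms of $\gamma^\sharp$ and $\gamma^\flat$ separately and the claimed closed-form does not follow.

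Second, your sketch of the achievability argument posits a sampling contribution of order $n^{\theta+1/2}\cdot N^{-(2^{\boldsymbol{\ell}^\ast}-1)}$. That is the wrong exponent structure. Balancing it against the intrinsic error $n^{-\alpha}$ and setting $m\asymp N^d$ gives the rate $m^{-\alpha(2^{\boldsymbol{\ell}^\ast}-1)/(d(\alpha+\theta+1/2))}$, which equals the claimed $m^{-\alpha/(d(\alpha+(2^{\boldsymbol{\ell}^\ast}-1)(\theta+1/2)))}$ only when $2^{\boldsymbol{\ell}^\ast}-1=1$. The paper's algorithm does not control the grid-reconstruction error by a Markov or Bernstein inequality; it bounds the Lipschitz constant of any $F\in\Sigma^{\boldsymbol{\ell},\boldsymbol{c},\varrho_2}_n$ on $[0,1]^d$ directly by $\lesssim C^{2^L-1}n^{(2^L-1)/2}\asymp n^{(2^{\boldsymbol{\ell}^\ast}-1)(\theta+1/2)}$ (\Cref{lem:NetworkLipschitzEstimate}), yielding a grid contribution of the form $n^{(2^{\boldsymbol{\ell}^\ast}-1)(\theta+1/2)}/N$, i.e., the high power sits on $n$ and the exponent on $N$ is $-1$, not the other way around. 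This is the structure that, after optimising, yields $\alpha/(d(\alpha+\gamma^\sharp))$. A Markov-type argument would additionally have to track the unknown piece structure of the ReQU realisation, which the paper's chain-rule bound sidesteps.

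Third, your explanation of the $8\alpha$ and the prefactor $64$ in the hardness bound is too vague to be a proof sketch and also slightly misattributes the cause. The exponents do not come from an amplitude--width relation $\delta\asymp h^{2^{\boldsymbol{\ell}^\ast}-1}$ nor from generic constant degradation; they come from the specific weight count of the ReQU bump implementation. In \Cref{lem:MultidimensionalHatImplementation} the network realising $\frac{C^{2^L-1}n^{(2^L-1)/2}}{4M^8}\vartheta_{M,y}$ has $16n^8 d + 7L$ nonzero weights, i.e., the weight count scales like $n^8$; this $n^8$ propagates into the choice $n:=n_0\lceil M^{8/(8\alpha+\gamma)}\rceil$ and the resulting amplitude $\kappa M^{-64\alpha/(8\alpha+\gamma)}$ in \Cref{lem:MultidimensionalHatInSpace}. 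That is the precise origin of the factor $8$ in the denominator and $64$ in the numerator. Identifying this is not optional: without knowing that the weight count is polynomial (rather than, say, exponential) in $n$, one cannot close the hardness argument at all.

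So the proposal correctly identifies that the theorem follows from \Cref{thm:ErrorBoundUniformApproximation}, \Cref{thm:UniformApproximationHardness} and \Cref{rem:GammaRemark}, but as a standalone proof it has a genuine gap in the achievability step (wrong exponent bookkeeping that does not recover the claimed rate) and an unsubstantiated account of the mismatched constants in the hardness step.
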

We expect that (a qualitatively similar version of) our result holds for 
neural networks activated by Rectified Power Unit (RePU)
$x\mapsto \max(0, x)^p$, for any $p\in \mathbb{N}_{\geq 2}$,
since, for any integer $p\geq 2$
$\max(0, x)^p$ can be exactly represented with a
certain ReQU neural network.
The sampling complexity of learning ReLU neural networks 
has recently been studied in 
\cite{Golowich2018,NNApproximationSpaces, grohs2021,Zhang2019}.
In our paper we consider the ReQU activation function,
thereby going a first step towards considering general activation functions.
\subsection{Notation}%
\label{sub:Notation}

For $n \in \mathbb{N}$, we write $\underline{n} := \{ 1,2,\dots,n \}$.
For any finite set $I \neq \varnothing  $ and any sequence
$(a_i)_{i \in I} \subset \mathbb{R}$,
we define $\avsum_{i \in I} a_i := \frac{1}{|I|} \sum_{i \in I} a_i$.
The expectation of a random variable $X$ will be denoted
by $\mathbb{E}[X]$.
For a subset $M \subset \mathbb{R}^d$,
we write $\overline{M}$ for the closure of $M$
and $M^\circ$ for the interior of $M$.

\subsection{Organization of the Paper}

The organization of this paper is as follows.
In Section \ref{sec:ApproximationSpacesAndSamplingComplexity},
we give a mathematical definition of the architecture and the realization of
ReQU neural networks. Furthermore, we define
notions of neural networks approximation space with ReQU activation function.
We recall the deterministic and Monte Carlo optimal orders.
In Section \ref{sec:HatFunctionConstruction}
we show that the unit ball of the ReQU neural networks approximation space
contains a large family of ``hat functions" constructed by the use of 
ReQU activation function.
Our main results can be found in Sections \ref{sec:UniformApproximationHardness}
and \ref{sec:UniformApproximationErrorBounds}
where we develop error bounds and hardness results for uniform approximation,
in view of ReQU activation function.

\section{Sampling complexity on neural network approximation spaces}%
\label{sec:ApproximationSpacesAndSamplingComplexity}

In this section,
we will introduce the abstract setup and necessary notation
that we will consider throughout
this paper. First of all, we will define neural networks
from a mathematical point of view.
We distinguish between a neural network as a set of weights
and the realization of the neural network as the associated function.
Furthermore, we formally introduce
the neural network approximation spaces
$A^{\alpha,p}_{\boldsymbol{\ell},\boldsymbol{c},\varrho_2}$.
Then we briefly recall the framework of information based complexity,
such as the deterministic and randomized (Monte Carlo) algorithms
and the optimal order of convergence with respect to point samples.

\subsection{Mathematical definition of neural networks}%
\label{sub:NeuralNetworks}
From a functional analysis point of view, \emph{neural networks} are
functions of compositional form that result from repeatedly applying
affine maps and a non linear map called activation function.
In the definition of  neural network,
we differentiate between the \emph{architecture} which is a tuple of matrices and
vectors, describing the parameters of the neural network,
and the \emph{realization} which is the associated functions.
Concretely, we make the following definition:

\begin{definition}
	Let $L\in \mathbb{N}$ and $ N_0,\dots, N_L \in \mathbb{N}$.
	A neural network $\Phi$ with input dimension $N_0$ and $L$ layers
	is a sequence of matrix-vector tuples
	$$
	{\Phi = \big( (A_1,b_1), \dots, (A_L,b_L) \big)},
	$$
	where $A_\ell \in \mathbb{R}^{N_\ell \times N_{\ell-1}}$ and $b_\ell \in \mathbb{R}^{N_\ell}$
	for any $\ell \in \{1, \dots, L	\}$.
	
	We refer to $(N_0,\dots,N_L) \in \mathbb{N}^{L+1}$
	as the \emph{architecture} of $\Phi$.
	We call $L(\Phi) := L$
	the \emph{number of layers}
	\footnote{Note that the number of \emph{hidden} layers is given by $H = L-1$.}
	of $\Phi$,
	and ${W(\Phi) := \sum_{j=1}^L (\| A_j \|_{\ell^0} + \| b_j \|_{\ell^0})}$
	denotes the \emph{number of (non-zero) weights} of $\Phi$.
	We denote by $\| A \|_{\ell^0}$ the number of non-zero entries
	of a matrix (or vector) $A$.
	We write $d_{\mathrm{in}}(\Phi) := N_0$
	and $d_{\mathrm{out}}(\Phi) := N_L$
	for the \emph{input and output dimension} of $\Phi$, respectively.
	Let
	$\| \Phi \|_{\mathcal{NN}} := \max_{j = 1,\dots,L}
	\max \{ \| A_j \|_{\infty}, \| b_j \|_{\infty} \}$,
	where ${\| A \|_{\infty} := \max_{i,j} |A_{i,j}|}$.
	
	The \emph{realization} of $\Phi$
	is the map $R_{\varrho} \Phi$, where $\varrho$
	is the so-called
	\emph{activation function}.
	In this paper, we will consider the
	\emph{rectified Quadratic unit (ReQU)}
	${\varrho_2 : \mathbb{R} \to \mathbb{R}, x \mapsto \max(0, x )^2}$.
	The function $R_{\varrho_2} \Phi : \mathbb{R}^{N_0} \to \mathbb{R}^{N_L}$
	computed by the network $\Phi$ is given by
	\[
	R_{\varrho_2} \Phi
	:= T_L \circ (\varrho_2 \circ T_{L-1}) \circ \cdots \circ (\varrho_2 \circ T_1)
	\quad \text{where} \quad
	T_\ell \, x = A_\ell \, x + b_\ell .
	\]
\end{definition}
	
\subsection{Neural network approximation spaces}%
\label{sub:NeuralNetworkApproxSpaces}
The notion of \emph{neural network approximation spaces}
was originally introduced in
\cite{NNApproximationSpaces}, where $\Sigma_n$ was taken
to be a family of neural networks
of increasing complexity.
We use more restrictive class compared to the class
in \cite{NNApproximationSpaces}.
Indeed, we impose some growth conditions on the size
of the network weights.
In our paper we use a similar class of 
neural network approximation spaces
to the one given in \cite{grohs2021}.
Namely, given a fixed input dimension $d \in \mathbb{N}$ and
non-decreasing functions
${\boldsymbol{\ell} : \mathbb{N} \to \mathbb{N}_{\geq 3} \cup \{ \infty \}}$
and
${\boldsymbol{c}} : \mathbb{N} \to \mathbb{N} \cup \{ \infty \}$
called the \textit{depth-growth function}
and the \textit{coefficient growth function}, respectively.
We define $\Sigma^{\boldsymbol{\ell}, \boldsymbol{c},\varrho_2}_n$
as follows:
\[
\Sigma^{\boldsymbol{\ell}, \boldsymbol{c}, \varrho_2}_n
:= \Big\{
R_{\varrho_2} \Phi
\,\, \colon
\begin{array}{l}
	\Phi \text{ NN with }
	d_{\mathrm{in}}(\Phi) = d,
	d_{\mathrm{out}}(\Phi) = 1, \\
	W(\Phi) \leq n,
	L(\Phi) \leq {\boldsymbol{\ell}}(n),
	\| \Phi \|_{\mathcal{NN}} \leq {\boldsymbol{c}}(n)
\end{array}
\Big\} .
\]
	
In order to characterize the best possible convergence rate
with respect to the number of samples
we need to define the following quantities:
\begin{align}
	{\boldsymbol{\ell}}^\ast
	&:= \sup_{n\in\mathbb{N}} {\boldsymbol{\ell}}(n)
	\in \mathbb{N} \cup \{ \infty \} ;
	\label{eq:MaximalDepth}
	\\
	{\boldsymbol{c}}^\ast
	&:= \sup_{n\in\mathbb{N}} {\boldsymbol{c}}(n)
	\in \mathbb{N} \cup \{ \infty \} .
	\label{eq:MaximalCoef}
\end{align}
Furthermore, let
\begin{equation}
	\begin{aligned}
		\gamma^{\flat} ({\boldsymbol{\ell}},{\boldsymbol{c}})
		& := \sup
		\Big\{
		\gamma \geq 0
		\colon
		\exists \, L \in \mathbb{N}_{\leq {\boldsymbol{\ell}}^\ast},\, C > 0,
		 \forall \, n \in \mathbb{N}
		:&
		n^{\gamma}
		\leq
		C\,  \boldsymbol{c}(n)^{2^{{L}}-1}\cdot	
		n^{(2^{{L}}  -1)/2}
		\Big\} ,
		\\[1ex]
		\gamma^{\sharp} ({\boldsymbol{\ell}}, {\boldsymbol{c}})
		&:= \inf
		\Big\{
		\gamma \geq 0
		\colon
		\exists \; C > 0
		\; \forall \, n \in \mathbb{N} ,\, L \in \mathbb{N}_{\leq {\boldsymbol{\ell}}^\ast} :&
		\boldsymbol{c}(n)^{2^{{L}}-1}\cdot	
		n^{(2^{{L}}  -1)/2}
		\leq C\,  n^{\gamma}
		\Big\} .   
	\end{aligned}
	\label{eq:GammaDefinition}
\end{equation}
	
\begin{remark}\label{rem:GammaRemark}
	In view of the construction of
	$\gamma^{\flat}({\boldsymbol{\ell}},{\boldsymbol{c}})$
	and
	$\gamma^{\sharp}({\boldsymbol{\ell}},{\boldsymbol{c}})$
	it follows that
	$$
	\gamma^{\flat}
	({\boldsymbol{\ell}},{\boldsymbol{c}})
	\leq \gamma^{\sharp}
	({\boldsymbol{\ell}},{\boldsymbol{c}}).
	$$
	Furthermore, since we will only consider settings in which
	${\boldsymbol{\ell}}^\ast \geq 5$,
	we always have
	$$
	{\boldsymbol{\ell}}^\ast
	\leq
	\gamma^{\flat}({\boldsymbol{\ell}},
	{\boldsymbol{c}})
	\leq
	\gamma^{\sharp}({\boldsymbol{\ell}},
	{\boldsymbol{c}}) .
	$$
	Obviously if ${\boldsymbol{\ell}}$ is unbounded,
	that is ${\boldsymbol{\ell}}^\ast = \infty$, then
	$\gamma^{\flat}({\boldsymbol{\ell}},{\boldsymbol{c}})
	= \gamma^{\sharp}({\boldsymbol{\ell}},{\boldsymbol{c}})
	= \infty$.
	We remark that in the case where ${\boldsymbol{\ell}}^\ast< \infty$
	and
	$\boldsymbol{c}(n)\asymp n^\theta\cdot \left(\log(2n)\right)^\kappa$
	for certain $\theta \geq 0$ and $\kappa\in\mathbb{R}$, then
	\[
	\gamma^{\flat}
	({\boldsymbol{\ell}},{\boldsymbol{c}})
	= \gamma^{\sharp}
	({\boldsymbol{\ell}},{\boldsymbol{c}})
	=
	(2^{{{\boldsymbol{\ell}}^\ast}}  -1)(\theta + 1/2).
	\]
	Indeed, for a fixed $C\geq 1$ 
	and any $L\in \mathbb{N}_{\leq {\boldsymbol{\ell}}^\ast}$,
	we have
	\begin{align*}
	 \frac{C\,  n^{\gamma}}{
	 	\boldsymbol{c}(n)^{2^{{L}}-1}\cdot	
	 	n^{(2^{{L}}  -1)/2}} 
 	\geq 
 	\frac{C\,  n^\gamma}{
 		\boldsymbol{c}(n)^{2^{\boldsymbol{\ell}^\ast}-1}\cdot
 		n^{(2^{\boldsymbol{\ell}^\ast}  -1)/2}}  
 	\end{align*}
  	and that 
 	\begin{align*}
 	\frac{C\,  n^\gamma}{
 		\boldsymbol{c}(n)^{2^{\boldsymbol{\ell}^\ast}-1}\cdot
 		n^{(2^{\boldsymbol{\ell}^\ast}  -1)/2}}  
 	\asymp 
 	\frac{C\,  n^{\gamma}}{
 		\log(2n)^{\kappa(2^{{{\boldsymbol{\ell}}^\ast}}-1)}\cdot	
 		n^{(2^{{{\boldsymbol{\ell}}^\ast}}  -1)(\theta + 1/2)}} 
 	\xrightarrow[n]{}
 	C\,  n^{\gamma - (2^{{{\boldsymbol{\ell}}^\ast}}  -1)(\theta + 1/2)}
	\end{align*}
	hence 
	\[
	\inf\{ \gamma \geq0: C\,  n^{\gamma - (2^{{{\boldsymbol{\ell}}^\ast}}  -1)(\theta + 1/2)}\geq 1\}
	=(2^{{{\boldsymbol{\ell}}^\ast}}  -1)(\theta + 1/2)
	= \gamma^{\sharp}
	({\boldsymbol{\ell}},{\boldsymbol{c}}).
	\]
	Similar arguments imply that $\gamma^{\flat}
	({\boldsymbol{\ell}},{\boldsymbol{c}})=
	(2^{{{\boldsymbol{\ell}}^\ast}}  -1)(\theta + 1/2)$.
\end{remark}
	
\par

Let  $\Omega \subset [-1,1]^d$ be a measurable subset,
$p \in [1,\infty]$,
and $\alpha \in (0,\infty)$, for each measurable
$u : \Omega \to \mathbb{R}$, we define
\[
\Gamma_{\alpha,p}^{\varrho_2} (u)
:= \max
\Big\{
\| u \|_{L^p(\Omega)} , \;
\sup_{n \in \mathbb{N}}
\big[
n^{\alpha}
\cdot d_{p} \bigl(u,
\Sigma^{\boldsymbol{\ell}, \boldsymbol{c},\varrho_2}_n\bigr)
\big]
\Big\}
\in [0,\infty],
\]
where $d_{p}(u, \Sigma) := \inf_{g \in \Sigma} \| u - g \|_{L^p(\Omega)}$.

The remaining issue is that since the set
$\Sigma^{\boldsymbol{\ell}, \boldsymbol{c},\varrho_2}_n$ is in general neither
closed under addition nor under multiplication with scalars,
$\Gamma_{\alpha,p}^{\varrho_2}$
is \emph{not a (quasi)-norm}.
To resolve this issue,
taking inspiration from the theory of Orlicz spaces
(see e.g.~\cite[Theorem~3 in Section~3.2]{RaoRenOrliczSpaces}),
we define the \emph{neural network approximation space quasi-norm}
$
\| \cdot \|_{A^{\alpha,p}_{\boldsymbol{\ell},\boldsymbol{c},\varrho_2}}
$
as
\[
\| u \|_{A^{\alpha,p}_{\boldsymbol{\ell},\boldsymbol{c},\varrho_2}}
:= \inf \bigl\{ \theta > 0 \,\,\colon\,\,
\Gamma_{\alpha,p}^{\varrho_2}(u / \theta) \leq 1 \bigr\}
\in [0,\infty],
\]

giving rise to the \emph{approximation space}

\[
A^{\alpha,p}_{\boldsymbol{\ell},\boldsymbol{c},\varrho_2}
:= A^{\alpha,p}_{\boldsymbol{\ell},\boldsymbol{c},\varrho_2} (\Omega)
:= \bigl\{
u \in L^p(\Omega)
\,\,\colon\,\,
\| u \|_{A^{\alpha,p}_{\boldsymbol{\ell},\boldsymbol{c},\varrho_2}} < \infty
\bigr\}
.
\]

The following lemma summarizes the main elementary properties
of these spaces,
the proof can be concluded in a similar way to \cite[Appendix A.1.]{grohs2021},
hence it is left to the reader to check the details.

\begin{lemma}\label{lem:ApproximationSpaceProperties}
	Let $\varnothing \neq \Omega \subset [-1,1]^d$ be measurable,
	let $p \in [1,\infty]$ and $\alpha \in (0,\infty)$.
	Then,
	$A^{\alpha,p}_{\boldsymbol{\ell},\boldsymbol{c},\varrho_2}
	:= A^{\alpha,p}_{\boldsymbol{\ell},\boldsymbol{c},\varrho_2} (\Omega)$
	satisfies the following properties:
	\begin{enumerate}
		\item $(A^{\alpha,p}_{\boldsymbol{\ell},\boldsymbol{c},\varrho_2},
		\| \cdot \|_{A^{\alpha,p}_{\boldsymbol{\ell},\boldsymbol{c}}})$
		is a quasi-normed space.
		Precisely, given arbitrary measurable functions $u,v : \Omega \to \mathbb{R}$, then there exists  $C>0$ such that
		$$
			\| u + v \|_{A^{\alpha,p}_{\boldsymbol{\ell},\boldsymbol{c},\varrho_2}}
			\leq
			C \cdot (\| u\|
					_{A^{\alpha,p}_{\boldsymbol{\ell},\boldsymbol{c},\varrho_2}}
			+ \| v \|_{A^{\alpha,p}_{\boldsymbol{\ell},\boldsymbol{c},\varrho_2}}).
		$$
		
		\item We have $\Gamma_{\alpha,p}^{\varrho_2} (c u) \leq |c|
		\, \Gamma_{\alpha,p}^{\varrho_2}(u)$ for $c \in [-1,1]$.
		
		\item $\Gamma_{\alpha,p}^{\varrho_2}(u) \leq 1$
		if and only if $\| u \|_{A^{\alpha,p}_{\boldsymbol{\ell},\boldsymbol{c},\varrho_2}} \leq 1$.
		
		\item $\Gamma_{\alpha,p}^{\varrho_2}(u) < \infty$
		if and only if
		$\| u \|_{A^{\alpha,p}_{\boldsymbol{\ell},\boldsymbol{c},\varrho_2}} < \infty$.
		
		\item $A^{\alpha,p}_{\boldsymbol{\ell},\boldsymbol{c},\varrho_2} (\Omega)
		\hookrightarrow L^p(\Omega)$.
		Furthermore, if $\Omega \subset \overline{\Omega^\circ}$,
		then $A^{\alpha,\infty}_{\boldsymbol{\ell},\boldsymbol{c},\varrho_2}(\Omega)
		\hookrightarrow C_b(\Omega)$,
		where $C_b (\Omega)$ denotes the Banach space of continuous functions
		that are bounded and extend continuously to the closure $\overline{\Omega}$
		of $\Omega$.
	\end{enumerate}
\end{lemma}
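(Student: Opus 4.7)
The approach follows the blueprint used for ReLU networks in Appendix~A.1 of~\cite{grohs2021}: the arguments transfer to ReQU essentially verbatim, since the only features used are continuity of the activation and three structural properties of $\Sigma^{\boldsymbol{\ell},\boldsymbol{c},\varrho_2}_n$, namely closure under scalar multiplication by factors of modulus $\le 1$, containment of the zero function, and closure under parallel sum at controlled cost in $n$, $\boldsymbol{\ell}$, and $\boldsymbol{c}$. All of these are verified directly from the definition.

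I would start with property~(2). Given $|c|\le 1$ and $g=R_{\varrho_2}\Phi\in\Sigma^{\boldsymbol{\ell},\boldsymbol{c},\varrho_2}_n$ with $\Phi=((A_1,b_1),\dots,(A_L,b_L))$, define $\Phi'$ by replacing only the final pair by $(cA_L,cb_L)$. Since $\varrho_2$ is not applied at the output layer, $R_{\varrho_2}\Phi'=cg$; moreover $W(\Phi')\le W(\Phi)\le n$, $L(\Phi')=L(\Phi)$, and $\|\Phi'\|_{\mathcal{NN}}\le|c|\,\|\Phi\|_{\mathcal{NN}}\le\boldsymbol{c}(n)$, so $cg\in\Sigma^{\boldsymbol{\ell},\boldsymbol{c},\varrho_2}_n$. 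This yields $d_p(cu,\Sigma_n^{\boldsymbol{\ell},\boldsymbol{c},\varrho_2})\le|c|\,d_p(u,\Sigma_n^{\boldsymbol{\ell},\boldsymbol{c},\varrho_2})$; combined with $\|cu\|_{L^p}=|c|\,\|u\|_{L^p}$ this gives~(2). Next, for~(3), the direction $\Gamma_{\alpha,p}^{\varrho_2}(u)\le 1\Rightarrow\|u\|_{A^{\alpha,p}_{\boldsymbol{\ell},\boldsymbol{c},\varrho_2}}\le 1$ is immediate at $\theta=1$. For the converse, (2) makes $\theta\mapsto\Gamma_{\alpha,p}^{\varrho_2}(u/\theta)$ nonincreasing on $(0,\infty)$, so $\Gamma_{\alpha,p}^{\varrho_2}(u/\theta_k)\le 1$ along any $\theta_k\downarrow 1$. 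Since $\Sigma_n^{\boldsymbol{\ell},\boldsymbol{c},\varrho_2}$ is, for each fixed $n$, the continuous image in $L^p(\Omega)$ of a compact parameter set (finitely many admissible architectures combined with a compact weight ball of radius $\boldsymbol{c}(n)$, continuity coming from continuity of $\varrho_2$ and boundedness of $\Omega$), extracting a subsequential $L^p$-limit of near-best approximants to $u/\theta_k$ produces an element of $\Sigma_n^{\boldsymbol{\ell},\boldsymbol{c},\varrho_2}$ witnessing $d_p(u,\Sigma_n^{\boldsymbol{\ell},\boldsymbol{c},\varrho_2})\le 1/n^\alpha$. Together with $\|u\|_{L^p}\le 1$ by the same limiting, this gives $\Gamma_{\alpha,p}^{\varrho_2}(u)\le 1$. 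Property~(4) then follows from~(3) and~(2) by rescaling.

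The quasi-norm claim in~(1) reduces to a quasi-triangle inequality (positive definiteness and symmetry being clear from~(3) and~(2)). Given $u,v$ of finite norm, pick $\theta_u>\|u\|_{A^{\alpha,p}_{\boldsymbol{\ell},\boldsymbol{c},\varrho_2}}$ and $\theta_v>\|v\|_{A^{\alpha,p}_{\boldsymbol{\ell},\boldsymbol{c},\varrho_2}}$ with $\Gamma_{\alpha,p}^{\varrho_2}(u/\theta_u),\Gamma_{\alpha,p}^{\varrho_2}(v/\theta_v)\le 1$, so $d_p(u,\Sigma_n^{\boldsymbol{\ell},\boldsymbol{c},\varrho_2})\le\theta_u/n^\alpha$ and similarly for $v$. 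Stacking the hidden layers of two networks of weight count $\le n$, depth $\le\boldsymbol{\ell}(n)$, and max entry $\le\boldsymbol{c}(n)$ block-diagonally and summing their outputs yields a single network computing the sum with all three quantities controlled by a fixed multiple of the inputs; using monotonicity of $\boldsymbol{\ell},\boldsymbol{c}$, this gives an inclusion $\Sigma_n^{\boldsymbol{\ell},\boldsymbol{c},\varrho_2}+\Sigma_n^{\boldsymbol{\ell},\boldsymbol{c},\varrho_2}\subseteq\Sigma_{Cn}^{\boldsymbol{\ell},\boldsymbol{c},\varrho_2}$ for some absolute constant $C$. Hence $d_p(u+v,\Sigma_{Cn}^{\boldsymbol{\ell},\boldsymbol{c},\varrho_2})\le(\theta_u+\theta_v)/n^\alpha$, and a rescaling absorbing the factor $C^\alpha$ converts this into $\|u+v\|_{A^{\alpha,p}_{\boldsymbol{\ell},\boldsymbol{c},\varrho_2}}\le C''(\|u\|_{A^{\alpha,p}_{\boldsymbol{\ell},\boldsymbol{c},\varrho_2}}+\|v\|_{A^{\alpha,p}_{\boldsymbol{\ell},\boldsymbol{c},\varrho_2}})$.

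For~(5), $\|u\|_{L^p(\Omega)}\le\Gamma_{\alpha,p}^{\varrho_2}(u)\le\|u\|_{A^{\alpha,p}_{\boldsymbol{\ell},\boldsymbol{c},\varrho_2}}$ (the second inequality from~(3) after normalizing) gives the $L^p$ embedding. For $p=\infty$ with $\Omega\subset\overline{\Omega^\circ}$, any $u$ of finite norm is a uniform limit on $\Omega$ of continuous ReQU realizations $g_n\in\Sigma_n^{\boldsymbol{\ell},\boldsymbol{c},\varrho_2}$; the uniform Cauchy property of $(g_n)$ extends from $\Omega^\circ$ to $\overline\Omega$ by the density assumption, producing a continuous bounded extension of $u$. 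The principal obstacle in the whole proof is the converse of~(3): because $\Sigma_n^{\boldsymbol{\ell},\boldsymbol{c},\varrho_2}$ is \emph{not} closed under scalar multiplication by factors $>1$, one cannot pass $\Gamma_{\alpha,p}^{\varrho_2}(u/\theta)\le 1$ through by a homogeneity identity, and the compactness argument, which relies essentially on continuity of $\varrho_2$ together with the boundedness of $\Omega$ and of the weight ball, is the only available substitute.
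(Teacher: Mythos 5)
The overall architecture of your argument is the right one (scaling closure for~(2), Orlicz--type manipulation for~(3)--(4), sum closure for~(1), direct embedding for~(5)), and this is indeed the same skeleton as the paper, which explicitly outsources the details to Lemma~\ref{lem:NetworkSetsClosedUnderSummation} plus the ReLU proof in~\cite{grohs2021}. Your treatment of~(2), and the derivation of~(4) and~(5) from~(2),~(3), are fine. But there are two genuine gaps.

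The most important is in~(1). You assert that ``closure under parallel sum at controlled cost'' is ``verified directly from the definition,'' and your description of the construction is a block-diagonal stacking of the hidden layers followed by summation of outputs. This works without further ado only when the two networks have equal depth. To sum two ReQU realizations of \emph{unequal} depth one must pad the shallower network with identity layers, and --- unlike ReLU, where $\varrho_1(x)-\varrho_1(-x)=x$ gives a global identity --- the ReQU activation can only represent the identity on a bounded interval, e.g.\ via $\tfrac14\bigl(\varrho_2(x+1)-\varrho_2(1-x)\bigr)=x$ for $x\in[-1,1]$. This is precisely why the paper's Lemma~\ref{lem:NetworkSetsClosedUnderSummation} contains the extra hypothesis ``if the realization is bounded in~$[-1,1]$'': the inclusion $\Sigma_n^{\boldsymbol{\ell},\boldsymbol{c},\varrho_2}+\Sigma_n^{\boldsymbol{\ell},\boldsymbol{c},\varrho_2}\subset\Sigma_{9n}^{\boldsymbol{\ell},\boldsymbol{c},\varrho_2}$ is only established under a boundedness condition on the shallower realization, and the identity layers must be built out of the bounded-domain gadget, with the padding length controlled because one also replaces $\boldsymbol{\ell}$ by $\tilde{\boldsymbol{\ell}}(n)=\min\{\boldsymbol{\ell}(n),n\}$ (the depth-shortening part of Lemma~\ref{lem:NetworkSetsClosedUnderSummation}, which you omit and which is needed to keep the number of padding layers $O(n)$). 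Claiming the sum closure as a routine structural fact is exactly the step that would fail if one tried to push it through in the ReQU setting without these two adjustments.

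The second, smaller issue is in your proof of the converse direction of~(3). You invoke compactness of the parameter set defining $\Sigma_n^{\boldsymbol{\ell},\boldsymbol{c},\varrho_2}$. But $\boldsymbol{c}$ is allowed to take the value~$\infty$, in which case the weight ball of radius $\boldsymbol{c}(n)$ is not compact and the argument breaks down; moreover establishing finitely many admissible architectures requires pruning dead neurons, which is extra work. None of this is needed: from $\Gamma_{\alpha,p}^{\varrho_2}(u/\theta)\le1$ for all $\theta>1$ one has $\|u\|_{L^p}\le\theta$ and
$d_p(u,\Sigma_n^{\boldsymbol{\ell},\boldsymbol{c},\varrho_2})\le\|u-u/\theta\|_{L^p}+d_p(u/\theta,\Sigma_n^{\boldsymbol{\ell},\boldsymbol{c},\varrho_2})\le(\theta-1)+n^{-\alpha}$,
and letting $\theta\downarrow1$ yields $\Gamma_{\alpha,p}^{\varrho_2}(u)\le1$ with no closedness or compactness of $\Sigma_n^{\boldsymbol{\ell},\boldsymbol{c},\varrho_2}$ required.
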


\begin{proof}
	Proof is provided in Appendix \ref{appendix}.
\end{proof}

\subsection{Deterministic and Monte Carlo optimal orders}%
\label{sub:DeterministicAlgorithms}
In the current paper, we focus on the unit ball in the neural network
approximation space
$A
^{\alpha,\infty}_{\boldsymbol{\ell},\boldsymbol{c},\varrho_2}([0,1]^d)
$, i.e.,
\begin{equation}
	U_{\boldsymbol{\ell}, \boldsymbol{c},
		 \varrho_2}^{\alpha,\infty}([0,1]^d)
	:= \big\{u \in
	 A^{\alpha,\infty}_{\boldsymbol{\ell},\boldsymbol{c},\varrho_2}([0,1]^d)
	\,\,\colon\,\,
	\| u \|_{A^{\alpha,\infty}_{\boldsymbol{\ell},\boldsymbol{c},\varrho_2}}
	\leq 1
	\big\}.
	\label{eq:UnitBallDefinition}
\end{equation}

Let $\varnothing \neq U \subset C([0, 1]^d )$ be bounded
and $Y$ be a Banach space.
A deterministic method of order $m \in \mathbb{N}$
is a map $A : U \to Y$, written as $A \in \Alg_m (U,Y)$),
such that there exists a sample points
$X = (x_1,\dots,x_m) \in ([0,1]^d)^m$
and a map $Q : \mathbb{R}^m \to Y$ with the following property:

\[
A (u) = Q\bigl(u(x_1),\dots,u(x_m)\bigr)
\qquad \forall \, u \in U .
\]

The optimal error for deterministically approximating
$S : U \to Y$ using $m$ point samples is then

\[
e^{\deterministic}_m (U,S)
:= \inf_{A \in \Alg_m (U,Y)} \sup_{u \in U} \| A(u) - S(u) \|_Y .
\]

The optimal order for deterministically approximating
$S : U \to Y$ using point samples
is given by

\begin{equation}
	err^{\deterministic} (U,S)
	:= \sup
	\big\{
	\beta \geq 0
	\,\,\colon\,\,
	\exists \, C > 0 \,\,
	\forall \, m \in \mathbb{N}: \quad
	e^{\deterministic}_m (U,S) \leq C \cdot m^{-\beta}
	\big\} .
	\label{eq:OptimalOrderDeterministic}
\end{equation}

\par

A Monte Carlo method using $m \in \mathbb{N}$ point samples 
is a tuple $(\boldsymbol{A},\boldsymbol{m})$
where  $\boldsymbol{A}$ is a family of map
$\boldsymbol{A} = (A_\omega)_{\omega \in \Omega}$
and $A_\omega : U \to Y$
indexed by a probability space $(\Omega,\mathcal{F},\mathbb{P})$
and $\boldsymbol{m} : \Omega \to \mathbb{N}$, denoted by
$ \Alg^{\MonteCarlo}_m (U,Y)$,
such that
the following holds true
\begin{enumerate}
	\item \label{enu:MonteCarloNumberMeasurements}
	$\boldsymbol{m}$ is measurable and 
	$\mathbb{E} [\boldsymbol{m}] \leq m$;
	\item \label{enu:MonteCarloMeasurable}
	for any $u \in U$, the map $\omega \mapsto A_\omega(u)$
	is measurable with respect to the Borel $\sigma$-algebra on $Y$;
	\item for each $\omega \in \Omega$,
	we have $A_\omega \in \Alg_{\boldsymbol{m}(\omega)}(U,Y)$.	
\end{enumerate}

The optimal Monte Carlo error for approximating
$S : U \to Y$ using $m$ point samples is

\begin{equation*}
	e_m^{\MonteCarlo} (U, S)
	:= \inf_{(\boldsymbol{A},\boldsymbol{m}) \in \Alg^{\MonteCarlo}_m (U, Y)}
	\sup_{u \in U}
	\mathbb{E}_\omega \bigl[\| S(u) - A_\omega (u) \|_Y\bigr] .
\end{equation*}

The optimal Monte Carlo order for approximating $S : U \to Y$
using point samples is

\[
err^{\MonteCarlo} (U, S)
:= \sup
\big\{
\beta \geq 0
\,\,\colon\,\,
\exists \, C > 0 \,\,
\forall \, m \in \mathbb{N}: \quad
e_m^{\MonteCarlo} (U,S)
\leq  C \cdot m^{-\beta}
\big\} .
\]

The following lemma shows that if a hardness result holds
for deterministic algorithms \emph{in the average case},
then this implies a hardness result for Monte Carlo algorithms.
The proof can be found in \cite[Subsection 2.4]{grohs2021}.

\begin{lemma}\label{lem:MonteCarloHardnessThroughAverageCase}
	Let $\varnothing \neq U \subset C([0,1]^d)$ be bounded, let $Y$ be a Banach space,
	and let $S : U \to Y$.
	Assume that there exist $\lambda \in [0,\infty)$, $\kappa > 0$, and $m_0 \in \mathbb{N}$
	such that for every $m \in \mathbb{N}_{\geq m_0}$
	there exists a finite set $\Gamma_m \neq \varnothing$
	and a family of functions
	$(u_{\gamma})_{\gamma \in \Gamma_m} \subset U$ satisfying
	\begin{equation}
		\avsum_{\gamma \in \Gamma_m}
		\| S(u_\gamma) - A(u_\gamma) \|_Y
		\geq \kappa \cdot m^{-\lambda}
		\qquad \text{for any } \, A \in \Alg_m (U, Y) .
		\label{eq:AverageCaseHardness}
	\end{equation}
	Then $err^{\deterministic}(U,S),err^{\MonteCarlo}(U,S) \leq \lambda$.
\end{lemma}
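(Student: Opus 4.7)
My plan is to treat the deterministic and Monte Carlo bounds in sequence, the latter reducing to the former via a Markov--Fubini argument. For the deterministic conclusion the argument is essentially tautological: for any $m \geq m_0$ and any $A \in \Alg_m(U,Y)$, the assumption together with the fact that the supremum over $U$ dominates any average over a finite subfamily contained in $U$ yields
\[
\sup_{u \in U} \|S(u) - A(u)\|_Y \;\geq\; \avsum_{\gamma \in \Gamma_m} \|S(u_\gamma) - A(u_\gamma)\|_Y \;\geq\; \kappa \cdot m^{-\lambda}.
\]
Taking the infimum over $A$ then gives $e_m^{\deterministic}(U,S) \geq \kappa\, m^{-\lambda}$ for $m \geq m_0$; the finitely many cases $m < m_0$ are absorbed into the constant, so that definition~\eqref{eq:OptimalOrderDeterministic} forces $err^{\deterministic}(U,S) \leq \lambda$.

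For the Monte Carlo case I would fix $(\boldsymbol{A}, \boldsymbol{m}) \in \Alg^{\MonteCarlo}_m(U,Y)$ and set $M := 2m$, assuming $m$ large enough that $M \geq m_0$. On the event $E := \{\boldsymbol{m} \leq M\}$, Markov's inequality together with $\mathbb{E}[\boldsymbol{m}] \leq m$ yields $\mathbb{P}(E) \geq 1/2$, and for each $\omega \in E$ the deterministic algorithm $A_\omega$ belongs to $\Alg_M(U,Y)$ by the trivial inclusion $\Alg_k \subset \Alg_M$ for $k \leq M$ (pad the sample set with unused points and let $Q$ ignore the new coordinates). Applying the average-case hypothesis with $m$ replaced by $M$ pointwise on $E$, integrating against $\mathbf{1}_E\,d\mathbb{P}$, using $\mathbb{P}(E) \geq 1/2$, and then dropping the indicator (the integrand is nonnegative), I would swap the finite sum over $\gamma$ with the expectation by Fubini to obtain
\[
\avsum_{\gamma \in \Gamma_M} \mathbb{E}_\omega\bigl[\|S(u_\gamma) - A_\omega(u_\gamma)\|_Y\bigr] \;\geq\; \tfrac{\kappa}{2} \cdot M^{-\lambda}.
\]
Since a finite average is bounded by its maximum, there is some $\gamma^* \in \Gamma_M$ with $\mathbb{E}_\omega[\|S(u_{\gamma^*}) - A_\omega(u_{\gamma^*})\|_Y] \geq 2^{-(\lambda+1)} \kappa \cdot m^{-\lambda}$, and since $u_{\gamma^*} \in U$ this lower-bounds the supremum appearing in $e_m^{\MonteCarlo}(U,S)$. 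Taking the infimum over $(\boldsymbol{A},\boldsymbol{m})$ produces the desired rate, hence $err^{\MonteCarlo}(U,S) \leq \lambda$.

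The two points that deserve care rather than routine calculation are the inclusion $\Alg_k \subset \Alg_M$ and the measurability setup behind Fubini; I expect the latter to be the main (mild) obstacle. The inclusion is purely definitional but worth writing down. For Fubini, one uses condition~\ref{enu:MonteCarloMeasurable}: $\omega \mapsto A_\omega(u)$ is Borel-measurable into $Y$, so post-composition with the continuous map $y \mapsto \|S(u)-y\|_Y$ is measurable, and the nonnegative integrand is indexed by the finite set $\Gamma_M$, which trivially permits interchange of the average with the expectation. Finally, the finitely many indices $m$ for which $2m < m_0$ are absorbed, as in the deterministic case, into the constant in the definition of $err^{\MonteCarlo}$.
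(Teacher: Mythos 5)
Your proof is correct and follows the same route as the argument in \cite[Subsection 2.4]{grohs2021}, to which the paper delegates this lemma: Markov's inequality restricts to an event of probability at least $1/2$ on which $\boldsymbol{m}(\omega) \leq 2m$, the trivial inclusion $\Alg_k \subset \Alg_M$ (padding the sample set) places each $A_\omega$ in $\Alg_{2m}$, the average-case hypothesis is then applied pointwise on that event, and an averaging argument extracts a single $\gamma^*$ that witnesses the lower bound on $e_m^{\MonteCarlo}$. No gaps; the deterministic case is the routine dominance of the supremum by the average, exactly as you state.
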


\section{Properties of
	\texorpdfstring{ReQU approximation spaces
	$U_{\boldsymbol{\ell},\boldsymbol{c},\varrho_2}^{\alpha,\infty}
	([0,1]^d)$}
	{neural network approximation spaces}}
\label{sec:HatFunctionConstruction}

We start by introducing the so-called \emph{indicator-like function}
$\lambda_{M,y}^p : \mathbb{R} \rightarrow [0,1]$,
where $p\in\mathbb{N}$, $y\in \mathbb{R}$ and $M>0$, as follows:
\begin{equation*}
	\lambda_{M,y}^p(x) =
	\left\{ \begin{array}{ll}
		0 &\text{if} \quad x\leq y-1/M
		\\
		\left(1- (-M(x-y))^p \right)^p &\text{if} \quad y-1/M \leq x\leq y
		\\
		\left(1- (M(x-y))^p \right)^p&\text{if} \quad y \leq x \leq y+1/M
		\\
		0 &\text{if} \quad x\geq y+1/M.
	\end{array}\right.
\end{equation*}
The  function $M^{-p^2}\lambda_{M,y}^p$  can be implemented by two hidden layers 
RePU neural networks as follows:
\begin{equation}\label{eq:ind_p_rep}
	M^{-p^2}\lambda_{M,y}^p(x)
	= \varrho_p\Big(\frac{1}{M^p}- \big(\varrho_p( x-y) +\varrho_p(y-x)\big)\Big).
\end{equation}
Note that, if $p=1$,  then the indicator-like function
$\lambda_{M,y}^1(x)$
represents the \emph{hat function}.
Moreover, when $p=2$,  then  $\lambda_{M,y}^2(x)$ is a \emph{bump function}.
For $x=(x_1, \dots,x_d) \in\mathbb{R}^d$, we denote by
$\Lambda_{M,y}^p:\mathbb{R}^d \rightarrow \mathbb{R}, x \mapsto \lambda_{M,y}^p(x_1)$.

The construction of the function $\Lambda_{M,y}^2 : \mathbb{R}^d \to \mathbb{R}$
clearly shows that it has a controlled support with respect
to the first variable \(x_1\)
and unbounded support for the other variables
\(x_2, \dots, x_d\).
Similar to the construction in \cite{grohs2021}
we modify the indicator-like function in order to get a function
with controlled support with respect to all variables \(x_1, \dots, x_d\).
To that aim we compose the indicator-like function,
for \(p=2\), with a \emph{Heaviside-like} function.
It is worth mentioning that all these functions can be implemented
by ReQU neural networks.

\begin{lemma}\label{lem:MultidimensionalHatProperties}
	Given $d \in \mathbb{N}$, $M > 0$ and $y \in \mathbb{R}^d$, define
	\[
	\begin{alignedat}{5}
		&& \theta : \quad
		& \mathbb{R} \to [0,1], \quad
		&& x \mapsto 2(\varrho_2(x) - 2\varrho_2(x-\tfrac{1}{2}) + \varrho_2(x-1)) ,
		\\[1ex]
		&& \Delta_{M,y} : \quad
		& \mathbb{R}^d \to \mathbb{R}, \quad
		&& x \mapsto \bigg[ \sum_{j=1}^d \lambda_{M,y_j}^2(x_j) \bigg] - (d - 1) ,
		\\[1ex]
		\quad \text{ and } \quad
		&& \vartheta_{M,y} : \quad
		& \mathbb{R}^d \to [0,1], \quad
		&& x \mapsto \theta\bigl(\Delta_{M,y}(x)\bigr) .
	\end{alignedat}
	\]
	Then the function $\vartheta_{M,y}$ has the following properties:
	\begin{enumerate}[label=\arabic*)]
		\item $\vartheta_{M,y}(x) = 0$ for all $x \in \mathbb{R}^d
		\setminus \bigl(y + M^{-1} (-1,1)^d\bigr)$;
		\item $\| \vartheta_{M,y} \|_{L^p (\mathbb{R}^d)}
		\leq (2 / M)^{d/p}$ for arbitrary $p \in (0,\infty]$;
		\item For any $p \in (0,\infty]$ there is a constant
		$C = C(d,p) > 0$ satisfying
		
		\[
		\| \vartheta_{M,y} \|_{L^p([0,1]^d)}
		\geq C \cdot M^{-d/p},
		\quad
		\text{for all } \, y \in [0,1]^d \text{ and } M \geq \tfrac{1}{2d} .
		\]
	\end{enumerate}
\end{lemma}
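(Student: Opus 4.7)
My plan is to first unpack the auxiliary function $\theta$. A direct case analysis of the ReQU combination $2(\varrho_2(x) - 2\varrho_2(x-1/2) + \varrho_2(x-1))$ gives $\theta(x) = 0$ for $x \leq 0$, $\theta(x) = 2x^2$ for $0 \leq x \leq 1/2$, $\theta(x) = 1 - 2(1-x)^2$ for $1/2 \leq x \leq 1$, and $\theta(x) = 1$ for $x \geq 1$. Hence $\theta$ is nondecreasing on $\mathbb{R}$, has range in $[0,1]$, and in particular $\theta(1/2) = 1/2$. From the piecewise definition of the indicator-like function I also record: $\lambda_{M,y_j}^2(x_j) \in [0,1]$ for every $x_j$; $\lambda_{M,y_j}^2(x_j) = 0$ whenever $|x_j - y_j| \geq 1/M$; and $\lambda_{M,y_j}^2(x_j) = (1 - M^2(x_j - y_j)^2)^2$ whenever $|x_j - y_j| \leq 1/M$.

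Properties 1) and 2) are short consequences. If $x \notin y + M^{-1}(-1,1)^d$, some index $j$ must satisfy $|x_j - y_j| \geq 1/M$, so $\lambda_{M,y_j}^2(x_j) = 0$, and using $\lambda_{M,y_{j'}}^2 \leq 1$ on the remaining $d-1$ coordinates forces $\Delta_{M,y}(x) \leq (d-1) - (d-1) = 0$, hence $\vartheta_{M,y}(x) = \theta(\Delta_{M,y}(x)) = 0$. For 2), since $0 \leq \vartheta_{M,y} \leq 1$ and $\vartheta_{M,y}$ is supported on a set of Lebesgue measure $(2/M)^d$, the $L^p$ inequality is immediate (with the case $p = \infty$ trivial from $\vartheta_{M,y} \leq 1$).

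Property 3) is where the work is. I plan to produce a subset of $[0,1]^d$ of controlled volume on which $\vartheta_{M,y}$ is bounded below by $1/2$. Setting $t := 1/(2\sqrt{d})$, for any $x$ with $|x_j - y_j| \leq t/M$ across all $j$, the explicit formula yields $\lambda_{M,y_j}^2(x_j) \geq (1 - t^2)^2 \geq 1 - 2t^2 = 1 - 1/(2d)$, hence $\Delta_{M,y}(x) \geq d(1 - 1/(2d)) - (d-1) = 1/2$ and, by monotonicity of $\theta$, $\vartheta_{M,y}(x) \geq \theta(1/2) = 1/2$. An elementary one-dimensional check (splitting on whether $y_j \leq 1/2$) gives $|[y_j - s, y_j + s] \cap [0,1]| \geq \min(s, 1/2)$ for every $y_j \in [0,1]$ and $s > 0$, and a product argument then yields $|(y + [-t/M, t/M]^d) \cap [0,1]^d| \geq \min(t/M, 1/2)^d$. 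Therefore $\|\vartheta_{M,y}\|_{L^p([0,1]^d)} \geq \frac{1}{2} \min(t/M, 1/2)^{d/p}$. To conclude, I split on whether $t/M \leq 1/2$ (in which case the bound is already $\frac{1}{2} t^{d/p} M^{-d/p}$) or $t/M > 1/2$ (in which case the bound $2^{-1-d/p}$ is converted to the required form using $M \geq 1/(2d)$, so $M^{-d/p} \leq (2d)^{d/p}$), and take the minimum of the two resulting constants. The main subtlety lies in this scaling balance: the choice $t \sim 1/\sqrt{d}$ is forced by requiring $d$ summands of $\lambda_{M,y_j}^2$ to stay above $(d-1/2)/d$, and the hypothesis $M \geq 1/(2d)$ is precisely what is needed to absorb the $\min(t/M, 1/2)^d$ factor when $M$ is small.
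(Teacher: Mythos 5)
Your proof is correct and, for the substantive part 3), takes a genuinely different route from the paper. The paper defers parts 1) and 2) to Lemma 3.4 of Grohs--Voigtlaender and then, for 3), works on the cube $y + [-T,T]^d$ with $T = 1/(2dM)$, bounds $\lambda^2_{M,y_j}$ from below via an explicit Lipschitz estimate ($\operatorname{Lip}(\lambda^2_{M,y_j}) \leq \tfrac{8M}{3\sqrt{3}}$, obtained by maximizing the derivative at $y \pm \tfrac{1}{\sqrt{3}M}$), arrives at $\Delta_{M,y} \geq 1 - \tfrac{4}{3\sqrt{3}}$ and hence $\vartheta_{M,y} \geq \tfrac{2}{81}(9-4\sqrt{3})^2$, and then invokes a cited volume-overlap lemma (Lemma A.2 of the same reference) together with $T \leq 1$ to get $2^{-d} T^d$. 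You instead work on the slightly larger cube with $t/M = 1/(2\sqrt{d}M)$ and exploit the algebraic identity $(1-t^2)^2 \geq 1 - 2t^2$ directly on the closed-form expression $\lambda^2_{M,y_j}(x_j) = (1 - M^2(x_j-y_j)^2)^2$, which gives the cleaner bounds $\Delta_{M,y} \geq 1/2$ and $\vartheta_{M,y} \geq \theta(1/2) = 1/2$, and then prove the one-dimensional measure inequality $|[y_j - s, y_j+s] \cap [0,1]| \geq \min(s,1/2)$ from scratch, handling the residual case $t/M > 1/2$ with the hypothesis $M \geq 1/(2d)$. Your version is more self-contained (no external citations, explicit constants throughout) and replaces the Lipschitz argument with a cleaner pointwise algebraic bound; the paper's version uses a smaller cube for which $T\le 1$ holds automatically under $M\ge 1/(2d)$, so no case split is needed once the cited lemma is available.

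Two small points worth noting if you write this up: your choice $t=1/(2\sqrt d)$ is not quite ``forced'' --- any $t$ with $1-2t^2 \ge (d-\delta)/d$ for some fixed $\delta>0$ would do, and the paper's choice $t=1/(2d)$ also works, just with a worse constant $\theta(1-\tfrac{4}{3\sqrt3})$ instead of $\theta(1/2)$ --- and in your case split you should make sure the final constant $C(d,p)$ is taken as the minimum of the two case constants, which you do say but which is the step that actually uses $M\ge 1/(2d)$.
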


\begin{proof}
	The first and second points in the lemma can be deduced in a similar way
	as in the proof of \cite[Lemma 3.4]{grohs2021},
	hence we left the details for the reader to check.
	It remains to show the last assertion in the lemma.
	Therefore, we let
	$T := \frac{1}{2 d M} \in (0,1]$ and $P := y + [-T,T]^d$.
	For $x \in P$ and arbitrary $j \in \underline{d}$,
	we have $|x_j - y_j| \leq\frac{1}{2 d M}$.
	Using the fact that $\lambda_{M,y_j}^2$ is continuously differentiable
	function with bounded derivative such that
	\begin{equation*}
		\left(\lambda_{M,y_j}^2\right)'(x_1) =
		\left\{ \begin{array}{ll}
			-4M^2(x_1-y)\bigl(1-M^2(x_1-y)^2\bigr)&\text{if}
			\quad y- 1/M \leq x_1 \leq y +1/M
			\\[1ex]
			0 &\text{ otherwise}.
		\end{array}\right.
	\end{equation*}

	Moreover,	
	\(\bigl|\left(\lambda_{M,y_j}^2\right)'\bigr|\)
	attends its maximum at
	\( y\pm \tfrac{1}{\sqrt{3}M} \) with value equal to
	\(\frac{8M}{3\sqrt{3}}\). 
	Then, it follows that
	$\lambda_{M,y_j}^2$ is a Lipschitz function with
	$\operatorname{Lip}(\lambda_{M,y_j}^2) \leq \frac{8M}{3\sqrt{3}}$.
	Furthermore,  $\lambda_{M,y_j}^2(y_j) = 1$, hence we get
	\[
	\lambda_{M,y_j}^2(x_j)
	\geq \lambda_{M,y_j}^2(y_j) -
	\bigl|\lambda_{M,y_j}^2(y_j) -\lambda_{M,y_j}(x_j)\bigr|
	\geq 1 - \frac{8M}{3\sqrt{3}} \cdot \frac{1}{2 d M}
	=    1 - \frac{4}{3\sqrt{3} d} .
	\]
	The fact that this holds for all $j \in \underline{d}$, implies that
	\[{
		\Delta_{M,y}(x)
		= \sum_{j=1}^d \Lambda_{M,y_j}(x_j) - (d \!-\! 1)
		\geq d \!\cdot\! (1 - \frac{4}{3\sqrt{3} d}) - (d \!-\! 1)
		= 1 - \frac{4}{3\sqrt{3} },
	}\]
	and since $\theta$ is non-decreasing we get 
	$$
	\vartheta_{M,y}(x) = \theta(\Delta_{M,y}(x))
	\geq\theta(1 - \frac{4}{3\sqrt{3} }) = \frac{2}{81}(9 - 4 \sqrt(3))^2.
	$$
	All in all, using the previous estimate and \cite[Lemma A.2]{grohs2021},
	for any Lebesgue measure $\boldsymbol{\mu}$,
	it follows that
	$$
	\boldsymbol{\mu}([0,1]^d \cap (x+[-T,T]^d)) \geq 2^{-d} T^d
	\geq C_1 \cdot M^{-d} \text{ such that }C_1= C_1(d) > 0.
	$$
	
	Finally, we get
	\[
	\| \vartheta_{M,y} \|_{L^p([0,1]^d)}
	\geq \frac{2}{81}(9 - 4 \sqrt(3))^2
	[\boldsymbol{\mu}([0,1]^d \cap (x+[-T,T]^d))]^{1/p}
	\geq C^{1/p} M^{-d/p}.
	\]
\end{proof}
	
In the next result we show that $\vartheta_{M,y}$
can be implemented by ReQU neural networks.
	
\begin{lemma}\label{lem:MultidimensionalHatImplementation}
	Let ${\boldsymbol{\ell}} : \mathbb{N} \to \mathbb{N}_{\geq 5} \cup \{ \infty \}$
	and ${\boldsymbol{c}} : \mathbb{N} \to \mathbb{N} \cup \{ \infty \}$ be non-decreasing.
	Let $n \in \mathbb{N}$, $M\geq 1$ and
	$1\leq C \leq C^8 \leq {\boldsymbol{c}}(n)$,
	as well as $L \in \mathbb{N}_{\geq 5}$ with $L \leq {\boldsymbol{\ell}}(n)$.
	Then
	\[\frac{C^{2^{L}-1}n^{(2^{L}-1)/2}}{4M^8}\cdot \vartheta_{M,y}
	\in \Sigma^{\boldsymbol{\ell}, \boldsymbol{c},\varrho_2}_{16n^8d  + 7L},
	\quad \text{ for any } \, y \in [0,1]^d .
	\]
\end{lemma}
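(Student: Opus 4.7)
The plan is to split the required realization into two pieces running in parallel: a compact \emph{base network} of $3$ ReQU layers that produces $M^{-8}\vartheta_{M,y}$ with $O(d)$ weights of magnitude $\le 1$, and an \emph{amplification tower} that scales this by the large factor $C^{2^L-1}n^{(2^L-1)/2}/4$ using only $O(L)$ further weights. The amplification will rest on two elementary identities for $\varrho_2$: the squaring identity $\varrho_2(z)=z^2$ for $z\ge 0$, which grows a side-channel constant via a single ReQU unit per layer, and the product identity $\varrho_2(s+u)-\varrho_2(s-u)=4su$ (valid for $s\ge|u|\ge 0$), which multiplies a main-channel value by the side-channel constant using two ReQU units plus an affine difference in the next layer.

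For the base, I would apply \eqref{eq:ind_p_rep} with $p=2$ in parallel across the $d$ coordinates to realize $M^{-4}\lambda_{M,y_j}^2(x_j)=\varrho_2\bigl(M^{-2}-\varrho_2(x_j-y_j)-\varrho_2(y_j-x_j)\bigr)$ in $2$ ReQU layers. A third ReQU layer then computes $\varrho_2(M^{-4}\Delta_{M,y}-kM^{-4}/2)$ for $k\in\{0,1,2\}$; by the scaling identity $\varrho_2(M^{-4}z)=M^{-8}\varrho_2(z)$ these equal $M^{-8}\varrho_2(\Delta_{M,y}-k/2)$, whose affine combination with weights $(2,-4,2)$ reproduces $M^{-8}\vartheta_{M,y}=M^{-8}\theta(\Delta_{M,y})$. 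In parallel with these three layers, a side channel is seeded in layer~$1$ at a value of order $C\sqrt n$ by aggregating $\sim n$ weighted contributions of magnitude $\sim C/\sqrt n$ (this consumes the bulk of the $n^{8}d$ weight budget and is the origin of the $n^{(2^L-1)/2}$ factor), and is thereafter self-squared via $\varrho_2$, yielding side-channel values $s_k=s_0^{2^k}$ of doubly-exponential size at a cost of one weight per layer.

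Coupling the side channel to the main channel $r_k\vartheta_{M,y}$ via the product identity at each amplification level yields the recursion $r_{k+1}=4\,s_k\,r_k$, $s_{k+1}=s_k^2$, hence $r_K=4^K\,s_0^{2^K-1}\,r_0$, starting from $r_0=M^{-8}$. Tuning the number of pre-coupling squarings of the side channel (so that the exponent of $s_0$ when multiplied by $2^K-1$ equals $2^L-1$) and the number $K$ of amplification layers, and absorbing the accumulated factor $4^{K+1}$ into the final affine rescaling (whose weights remain well within $\boldsymbol{c}(n)\ge C^8$), produces precisely $\tfrac{C^{2^L-1}n^{(2^L-1)/2}}{4M^8}\vartheta_{M,y}$. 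Each amplification layer contributes at most $7$ new nonzero weights—two per $\varrho_2$ in the product identity, one for the side-channel squaring, plus a few for the next-layer affine combination—fitting the $7L$ overhead. The main obstacle will be the algebraic bookkeeping of exponents and constants in this coupled recursion, together with verifying the side condition $s_k\ge r_k\vartheta_{M,y}$ throughout the tower on which the product identity depends; the latter reduces to $s_0\ge 4^K M^{-8}$, which is automatic since $s_0\ge C\sqrt n\ge 1$ and $M\ge 1$, so the hypothesis $L\ge 5$ provides enough room for the $3$-layer base plus $K\le L-3$ amplification layers.
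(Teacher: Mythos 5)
Your high-level architecture (a compact base network producing $M^{-8}\vartheta_{M,y}$, a side channel amplified via $\varrho_2$-squaring, and the product identity $\varrho_2(s+u)-\varrho_2(s-u)=4su$) is in the right spirit, but the coupled recursion you propose cannot produce the required prefactor. With $r_{k+1}=4s_k r_k$ and $s_{k+1}=s_k^2$ one gets $r_K=4^K\,s_0^{\,2^K-1}\,r_0$. Matching the target prefactor $\tfrac14 (C\sqrt{n})^{2^L-1}M^{-8}$ with $s_0=C\sqrt{n}$, $r_0=M^{-8}$ forces the exponent identity $p\,(2^K-1)=2^L-1$ for your ``pre-coupling squaring'' exponent $p$; since $2^L-1$ is odd this forces $p=1$, hence $K=L$. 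That leaves no room for the base ($3$ layers) or a final affine output, so the layer budget $L$ is exceeded. Choosing $K\le L-3$ instead produces $(C\sqrt n)^{2^K-1}$ rather than $(C\sqrt n)^{2^L-1}$, leaving a residual factor $(C\sqrt n)^{2^L-2^K}$ that cannot be absorbed into weights bounded by $C^8\le\boldsymbol{c}(n)$. In short, repeated coupling accumulates a telescoping product $s_0^{1+2+\cdots+2^{K-1}}$, which is structurally different from $2^L-1$.

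The paper avoids this by \emph{not} coupling the two channels at each layer. The intermediate layers use the main-channel preservation identity $\tfrac14\bigl(\varrho_2(a+1)-\varrho_2(-a+1)\bigr)=a$ (for $|a|\le 1$, encoding the main channel as $\pm\vartheta/(4M^8)+1$) and square only the side channel, so after $L-1$ applications of $\varrho_2$ the side channel equals $(C\sqrt n)^{2^L}$. A single weight of magnitude $1/(C\sqrt n)\le 1$ then divides by $C\sqrt n$ and the product identity is applied exactly once, at the final two layers, yielding the exponent $2^L-1$ as ``square to $2^L$ then subtract one.'' This also sidesteps the $4^K$ accumulation entirely.

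Separately, your side-condition check is not automatic: you need $s_k\ge r_k\vartheta$, i.e.\ $s_0 M^8\ge 4^k$ for all $k\le K$; from $s_0\ge 1$, $M\ge 1$ one only gets $s_0 M^8\ge 1$, which does not dominate $4^K$ when $K\ge 1$. In the paper's scheme this issue does not arise because the main channel stays at fixed scale $\vartheta/(4M^8)\in[0,1]$ throughout the amplification tower.
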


\begin{proof}
	Let $y \in [0,1]^d$ be fixed.
	For $j \in \underline{d}$, denote by $e_{j, d} $ the $j$-th standard basis vector
	in \(\mathbb{R}^{d \times 1}\).
	Let \(\zeta := \sqrt{\frac{d-1}{d}}\) and 
	\begin{align*}
		A_1
		& := \Big(
		\underbrace{e_{1, d} \big| -e_{1, d}, \big| \dots
			\big| e_{1,d} \big| -e_{1, d} }_{2n \text{ times}},
		\dots,\;
		\underbrace{e_{d, d} \big| -e_{d,d} \big| \dots
			\big| e_{d,d}\big| -e_{d,d}}_{2n \text{ times}}
		\Big)^T \in \mathbb{R}^{2 n d \times d},
		\\[1ex]
		b_1
		& :=
		\Big(
		\underbrace{-y_1, y_1  , \dots , -y_1, y_1}_{2n \text{ times}},
		\;\dots,\;
		\underbrace{-y_d, y_d   , \dots , -y_d, y_d  }_{2n \text{ times}}
		\Big)^T \in \mathbb{R}^{2 n d},
	\end{align*}

	\begin{align*}
		A_{2}
		& := \left(
		\begin{matrix}
			-1&	-1& 0& 0& \dots& 0&0
			\\
			 0&	 0&	0&	0& \dots&  0& 0
			\\
			0&0 &0&0&\dots& 0& 0
			\\
			0& 0& -1& -1& \dots&0 & 0
			\\
			0& 0& 0& 0& \dots&0 & 0
			\\
			0& 0& 0& 0& \dots&0 & 0
			\\
			\vdots&\vdots&\vdots&\vdots&\vdots&\vdots&\vdots
			\\
			0& 0& \dots&0& 0&-1& -1
			\\
			0& 0& \dots&0& 0&0& 0
			\\
			0& 0& \dots&0& 0&0& 0
		\end{matrix}
		\right)
		\in \mathbb{R}^{3n^8d\times 2 n d},
		\\[1ex]
		b_2
		&:=\frac{1}{M^2}\cdot
		\Big(		1, \zeta, \frac{C^8M^2}{\sqrt{d}},
		1,\zeta, \frac{C^8M^2}{\sqrt{d}},
		\dots,
		1, \zeta,\frac{C^8M^2}{\sqrt{d}}
		\Big)^T \in \mathbb{R}^{3n^8d\times 1},
	\end{align*}

	\begin{align*}
		A_3
		&:=\frac{1}{n^8} \cdot \left(
		\begin{matrix}
			1& -1& 0&1& -1& 0 &\dots&  1& -1& 0
			\\[1ex]
			1&-1& 0&1&-1& 0 &\dots&  1& -1& 0
			\\[1ex]
			1&-1& 0&1&-1& 0 &\dots&  1& -1& 0
			\\[1ex]
			0& 0& n^8& 0& 0& n^8 &\dots&  0& 0& n^8
		\end{matrix}
		\right) \in \mathbb{R}^{4\times 3n^8d}, 
		\\[1ex]
		b_3
		&:= \frac{1}{M^4}\cdot
		\left(	0,	-\frac{1}{2},-1,0	\right)^T \in \mathbb{R}^{4\times 1},
	\end{align*}

	\begin{align*}
		D_1
		:=\frac{1}{2} \cdot \Bigg(
		\begin{matrix}
			1&-2& 1& 2/C\sqrt{n}
			\\[1ex]
			-1 & 2 &-1 & 2/C\sqrt{n}
		\end{matrix}
		\Bigg) \in \mathbb{R}^{2\times 4}, 
		\qquad
		D_2
		:=\frac{1}{2} \cdot \left(
		\begin{matrix}
			1&-2& 1& 0
			\\[1ex]
			-1&2& -1& 0
			\\[1ex]
			0 & 0 &0 & 2
		\end{matrix}
		\right) \in \mathbb{R}^{3\times 4},
	\end{align*}

	\begin{align*}
		A
		&:=\frac{1}{4} \cdot \Bigg(
		\begin{matrix}
			1&-1& 0
			\\
			-1& 1& 0
			\\
			0& 0& 4
		\end{matrix}
		\Bigg) \in \mathbb{R}^{3\times 3},
		\qquad
		&&
		\alpha
		:= 
		\left(	1,	1,	0\right)^T \in \mathbb{R}^{3\times 1},
		\\[1ex]
		K
		&:=\frac{1}{4} \cdot \Bigg(
		\begin{matrix}
			1&-1& {4}/{C\sqrt{n}}
			\\[1ex]
			-1& 1& {4}/{C\sqrt{n}}
		\end{matrix}
		\Bigg) \in \mathbb{R}^{2\times 3},
		\qquad
		&&E
		:= \frac{1}{4}\Big(1,-1\Big) \in R^{1\times 2}.
	\end{align*}
	
	For any \(j\in \{1,\dots, d\}\), in view of \eqref{eq:ind_p_rep}
	where $p=2$,
	we can represent the function \(\lambda _{M, y_j}^2 (x_j)\)
	through ReQU network
	\[
	\frac{1}{M^4}\cdot\lambda_{M,y_j}^2 (x_j)
	= 
	\varrho_2
	\Big(
	\frac1{M^2}- 
	\big(
	\varrho_2(x_j-y_j)+\varrho_2(y_j-x_j)
	\big)
	\Big)\text{ for any } x_j, y_j\in \mathbb{R}.
	\]
	Consequently, for the realization of the architecture
	\(\phi_3 = \left( (A_1, b_1),(A_2, b_2),(A_3, b_3)\right)\)
	with respect to the ReQU activation function, assuming that 
	$\eta_j = x_j -y_j$ for any $j\in \{1, \dots, d\}$,
	we have
	\begin{equation*}
		A_1x+b_1= \Bigg(\underbrace{\eta_1, -\eta_1, 
			\dots
			, \eta_1, -\eta_1}_{2n\text{ times}},
		\dots, 
		\underbrace{\eta_d, -\eta_d,\dots, \eta_d, -\eta_d}_{2n \text{ times}}\Bigg)^T,
	\end{equation*}
	and
	\begin{multline*}
		A_2\varrho_2\left(A_1x+b_1\right) + b_2=
		\\
		\Bigg(\underbrace
		{-\varrho_2(\eta_1)-\varrho_2(-\eta_1)+\frac{1}{M^2},
			\frac{\zeta}{M^2},\frac{C^8}{\sqrt{d}},
			\dots,
			-\varrho_2(\eta_1)-\varrho_2(-\eta_1)+
			\frac{1}{M^2},
			\frac{\zeta}{M^2},\frac{C^8}{\sqrt{d}}}_{3n^8\text{ times}},
		\\[1ex]\dots,\\
		\underbrace
		{-\varrho_2(\eta_d)-\varrho_2(-\eta_d)+
			\frac{1}{M^2},
			\frac{\zeta}{M^2},\frac{C^8}{\sqrt{d}},
			\dots, -\varrho_2(\eta_d)-\varrho_2(-\eta_d)+
			\frac{1}{M^2},
			\frac{\zeta}{M^2},\frac{C^8}{\sqrt{d}}}_{3n^8 \text{ times}}\Bigg)^T.
	\end{multline*}
	
	\begin{align*}
		\Big(R_{\varrho_2}\phi_3(x)\Big)_1
		&= \Bigl(A_3\varrho_2\bigl(A_2 \, \varrho_2(A_1 x + b_1) + b_2\bigr)+b_3\Bigr)_1
		\\
		& = \frac{1}{n^8}\cdot
		\sum_{j=1}^d
		\sum_{\ell=1}^{n^8}
		\Big[
		\varrho_2
		\Big(\frac{1}{M^2}- 
		\left(
		\varrho_2 \bigl( \langle x, e_j \rangle - y_j \bigr)
		+ \varrho_2 \bigl( -\langle x, e_j \rangle + y_j )
		\right)\Big)
		- \varrho_2(\frac{\zeta}{M^2})
		\Big] \\
		& = \frac{1}{M^4}\cdot
		\sum_{j=1}^d
		\Big[
		\lambda_{M,y_j}^2 (x_j)
		- \frac{d-1}{d}
		\Big]
		= \frac{1}{M^4}\cdot\Delta_{M,y} (x) .
	\end{align*}
	In the same way, we get
	\begin{align*}
		\Big(R_{\varrho_2}\phi_3(x)\Big)_2
		&=
		\frac{1}{M^4}\cdot \left(\Delta_{M,y}(x)-\tfrac{1}{2}\right)
		\\[1ex] 
		\Big(R_{\varrho_2}\phi_3(x)\Big)_3
		&=
		\frac{1}{M^4}\cdot \left(\Delta_{M,y}(x) - 1\right)
		\intertext{and}
		\Big(R_{\varrho_2}\phi_3(x)\Big)_4
		&=
		C^{16}n^8= C^{2^4}n^{2^3}.
	\end{align*}
	Let $\psi_1 = \Big( (A_1, b_1),(A_2, b_2),(A_3, b_3), (D_1,0)\Big)$,
	a straightforward computation shows that
	\begin{align*}
		R_{\varrho_2}\psi_1(x)&=
		\frac{1}{2} \cdot \Bigg(
		\begin{matrix}
			1&-2& 1& 2/C\sqrt{n}
			\\[1ex]
			-1 & 2 &-1 & 2/C\sqrt{n}
		\end{matrix}
		\Bigg)
		\left(
		\begin{matrix}
			\frac{1}{M^8}\cdot \varrho_2\left(\Delta_{M,y}(x)\right)
			\\[1ex]
			\frac{1}{M^8}\cdot\varrho_2\left(\Delta_{M,y}(x) - \frac12\right)
			\\[1ex]
			\frac{1}{M^8}\cdot \varrho_2\left(\Delta_{M,y}(x) - 1\right)
			\\[1ex]
			C^{2^5}n^{2^4}
		\end{matrix}
		\right) ,
	\end{align*}
	hence, we have
	\begin{align*}
		\Big(R_{\varrho_2}\psi_1(x)\Big)_1
		&=
		\frac{1}{2M^8}\cdot
		\Big(
		\varrho_2\bigl( \Delta_{M,y}(x)\bigr)
		-2\varrho_2\bigl(\Delta_{M,y}(x) - \tfrac{1}{2}\bigr)
		+ \varrho_2\bigl(\Delta_{M,y}(x) - 1\bigr) 
		\Big)+ \frac{C^{2^5}n^{2^4}}{C\sqrt{n}}
		\\[1ex]
		&= \frac{1}{4M^8}\cdot \theta(\Delta_{M,y}(x))+ C^{2^5-1}n^{2^4-\frac 12}
		=\dfrac{\vartheta_{M,y}(x)}{4M^8} +C^{2^5-1}n^{(2^5-1)/2}
		\intertext{and}
		\Big(R_{\varrho_2}\psi_1(x)\Big)_2
		&= -\dfrac{\vartheta_{M,y}(x)}{4M^8}+ C^{2^5-1}n^{(2^5-1)/2}.
	\end{align*}
	Then, 
	\[
	R_{\varrho_2}\psi_1(x)
	=
	\left(
	\begin{matrix}
		\dfrac{\vartheta_{M,y}(x)}{4M^8} + C^{2^5-1}n^{(2^5-1)/2}
		\\[2ex]
		-\dfrac{\vartheta_{M,y}(x)}{4M^8}+ C^{2^5-1}n^{(2^5-1)/2}
	\end{matrix}
	\right).
	\]
	In  a similar way, we let 
	$\psi_2 = \Big( (A_1, b_1),(A_2, b_2),(A_3, b_3), (D_2,\alpha)\Big)$.
	We then get
	\begin{equation}\label{eq:psi_2Realization}
		R_{\varrho_2}\psi_2(x)
		=
		\left(
		\begin{matrix}
			\dfrac{\vartheta_{M,y}(x)}{4M^8}+1
			\\[2ex]
			-\dfrac{\vartheta_{M,y}(x)}{4M^8} +1
			\\[2ex]
			C^{2^5}n^{2^4}
		\end{matrix}
		\right).
	\end{equation}

	\par
	
	Since $y \in [0,1]^d, n\in \mathbb{N}$, $\zeta\leq 1\leq M$ and that
	$1\leq C\leq C^8\leq \boldsymbol{c}(n)$
	it is clear that the weights 
	\[
	\| A_1 \|_{\infty},
	\| A_2 \|_{\infty},
	\| A_3 \|_{\infty},
	\| D_1 \|_{\infty},
	\| D_2 \|_{\infty},
	\| A \|_{\infty},
	\| K \|_{\infty},
	\| E \|_{\infty}
	\leq \boldsymbol{c}(n)
	\]
	and the biases
	\[
	\|b_1  \|_{\infty},
	\| b_2 \|_{\infty},
	\| b_3 \|_{\infty},
	\| \alpha \|_{\infty},
	\leq \boldsymbol{c}(n).
	\]
	Furthermore, we have
	$$
	\| A_1 \|_{\ell^0} \leq 2 nd,\;
	\| A_2 \|_{\ell^0} \leq 2 n^8d,\;	
	\| A_3 \|_{\ell^0} \leq 3\times 2n^8d + n^8d=7n^8d,
	$$
	and
	\begin{align*}	
		\| D_1 \|_{\ell^0}&= 8,\; \| D_2 \|_{\ell^0}= 7,
		\\[1ex]
		\| A \|_{\ell^0}&=5, \;\| K \|_{\ell^0}=6,\;
		\| E \|_{\ell^0} =2.
	\end{align*}
	Moreover, the biases satisfy
	\begin{align*}
		\| b_1 \|_{\ell^0} &\leq 2 nd,\; \| b_2 \|_{\ell^0} \leq 3n^8d
		\\[1ex]
		\| b_3 \|_{\ell^0} &=  \| \alpha \|_{\ell^0}=2.
	\end{align*}
	In order to prove the claim of the lemma, we distinguish two cases
	depending on the depth of the network.

	\textbf{Case 1:} $L=5$.
	Here we let  $\Phi$ the chosen architecture 
	such that 
	$$
	\Phi:= \big( (A_1,b_1), (A_2,b_2),(A_3, b_3), (D_1,0), (E,0) \big)
	$$
	The ReQU realization of $\Phi$
	shows that	
	\begin{align*}
		R_{\varrho_2} \Phi (x)
		&= \frac 14 \big(1, -1\big)R_{\varrho_2} \psi_1 (x)
		\\[1ex]
		&=\frac 14 \big(1, -1\big)
		\left(
		\begin{matrix}
			\varrho_2\left(\dfrac{\vartheta_{M,y}(x)}{4M^8} +C^{2^5-1}n^{(2^5-1)/2}\right)
			\\[2ex]
			\varrho_2\left(-\dfrac{\vartheta_{M,y}(x)}{4M^8}+C^{2^5-1}n^{(2^5-1)/2}\right)
		\end{matrix}
		\right)
		\intertext{since for any $x$, and $n, M,C\geq 1$
			it holds that $0\leq \dfrac{\vartheta_{M,y}(x)}{4M^8} \leq C^{2^5-1}n^{(2^5-1)/2}$, we get
		}
		R_{\varrho_2} \Phi (x)
		&=\frac{C^{2^5-1}n^{(2^5-1)/2}}{4M^8}\cdot\vartheta_{M,y}(x)
		=\frac{C^{2^L-1}n^{(2^L-1)/2}}{4M^8}\cdot\vartheta_{M,y}(x).
	\end{align*}
	
	The monotonicity of the functions ${\boldsymbol{c}}$
	and ${\boldsymbol{\ell}}$ and the fact that $L=5$
	imply  the complexity of the network $\Phi$, indeed
	\begin{align*}
		W(\Phi) &= \sum_{k=1}^{3}
		\left(\| A_k \|_{\ell^0} + \| b_k \|_{\ell^0}\right)
		+ \| D_1 \|_{\ell^0} +	\| E \|_{\ell^0}
		\\
		&\leq 2nd +2nd  + 2n^8d+3n^8d  + 7n^8d +2
		+8 + 2
		\\
		&\leq 12n^8d + 4nd +12 \leq 16n^8d  + 7L.
		\\[1ex]
		L(\Phi)&= L\leq \boldsymbol{\ell}(n) \leq \boldsymbol{\ell}(16n^8d  + 7L),
		\\[1ex]
		\left\Vert\Phi\right\Vert_{\mathcal{NN}}& \leq C^8 \leq\boldsymbol{c}(n) \leq \boldsymbol{c}(16n^8d  + 7L).
	\end{align*}

	We conclude that
	$\frac{C^{2^L-1}n^{(2^L-1)/2}}{4M^8}\cdot\vartheta_{M,y} \in
	\Sigma^{\boldsymbol{\ell}, \boldsymbol{c},\varrho_2}_{16n^8d  + 7L}$.

	\textbf{Case 2:} $L \geq 6$.
	In this case, we recall that $\psi_2  = \left((A_1, b_1),
	(A_2, b_2),
	(A_3, b_3),
	(D_2, \alpha)\right)$ and let the architecture $\Phi$ given by
	\[
	\Phi
	= \Big(\psi_2,
	\underbrace{(A, \alpha),\dots,(A, \alpha)}_{L-6 \text{ times}},(K,0),(E, 0).
	\Big)
	\]
	
	The fact that
	$0\leq \dfrac{\vartheta_{M,y}(x)}{4M^8}\leq 1$
	implies that 
	\begin{equation}\label{eq:RequIdentity}
		\frac14\left(
		\varrho_2\left(\dfrac{\vartheta_{M,y}(x)}{4M^8}+1\right)
		-
		\varrho_2\left(-\dfrac{\vartheta_{M,y}(x)}{4M^8}+1\right)
		\right)
		= \dfrac{\vartheta_{M,y}(x)}{4M^8},
	\end{equation}
	Then, from \eqref{eq:psi_2Realization} and \eqref{eq:RequIdentity},
	we get
	\begin{align*}
		A\varrho_2\left(R_{\varrho_2}\psi_2(x)\right)+\alpha
		&=\frac{1}{4} \cdot \Bigg(
		\begin{matrix}
			1&-1& 0
			\\
			-1& 1& 0
			\\
			0& 0& 4
		\end{matrix}
		\Bigg) 
		\left(
		\begin{matrix}
			\varrho_2\left(\dfrac{\vartheta_{M,y}(x)}{4M^8}+1\right)
			\\[2ex]
			\varrho_2\left(-\dfrac{\vartheta_{M,y}(x)}{4M^8} +1\right)
			\\[2ex]
			\varrho_2\left(C^{2^5}n^{2^4}\right)
		\end{matrix}
		\right)+
		\left(
		\begin{matrix}
			1
			\\[1ex]
			1
			\\[1ex]
			0
		\end{matrix} 
		\right)
		\\[2ex]
		&=\frac 14
		\left(
		\begin{matrix}
			\varrho_2\left(\dfrac{\vartheta_{M,y}(x)}{4M^8}+1\right)
			-\varrho_2\left(-\dfrac{\vartheta_{M,y}(x)}{4M^8} +1\right)
			\\[2ex]
			-\varrho_2\left(\dfrac{\vartheta_{M,y}(x)}{4M^8}+1\right)
			+
			\varrho_2\left(-\dfrac{\vartheta_{M,y}(x)}{4M^8} +1\right)
			\\[2ex]
			4\,C^{2^6}n^{2^5}
		\end{matrix}
		\right)
		+
		\left(
		\begin{matrix}
			1
			\\[1ex]
			1
			\\[1ex]
			0
		\end{matrix} 
		\right)
		\\[2ex]
		&=
		\left(
		\begin{matrix}
			\dfrac{\vartheta_{M,y}(x)}{4M^8}
			\\[2ex]
			-\dfrac{\vartheta_{M,y}(x)}{4M^8}
			\\[2ex]
			C^{2^6}n^{2^5}
		\end{matrix}
		\right)
		+
		\left(
		\begin{matrix}
			1
			\\[1ex]
			1
			\\[1ex]
			0
		\end{matrix} 
		\right)=
		\left(
		\begin{matrix}
			\dfrac{\vartheta_{M,y}(x)}{4M^8}+1
			\\[2ex]
			-\dfrac{\vartheta_{M,y}(x)}{4M^8}+1
			\\[2ex]
			C^{2^6}n^{2^5}
		\end{matrix}
		\right).
	\end{align*}
	We define, for any $j\in \{0,1,2,\dots\}$,
	the architecture $\Phi_j$ as 
	\[
	\Phi_j :=
	\begin{cases}
		\Big(\psi_2, \underbrace{
			(A, \alpha), \dots, (A, \alpha)}_{j\text{ times}}\Big), \quad&\text{when } j\in \mathbb{N},	
		\\
		\psi_2, \quad&\text{when } j=0.	
	\end{cases} 
	\]
	Thus, from the previous computations,
	we get
	\[
	R_{\varrho_2}\Phi_j(x)=
	\left(
	\begin{matrix}
		\dfrac{\vartheta_{M,y}(x)}{4M^8}+1
		\\[2ex]
		-\dfrac{\vartheta_{M,y}(x)}{4M^8}+1
		\\[2ex]
		C^{2^{j+5}}n^{2^{j+4}}
	\end{matrix}
	\right).
	\]
	
	Applying \(K \,\varrho_2(\bullet)\) to the previous 
	equation, we get
	\begin{align*}
		K \varrho_2&\left(R_{\varrho_2}\Phi_j(x)\right)
		= \frac{1}{4} \cdot \Bigg(
		\begin{matrix}
			1&-1& {4}/{C\sqrt{n}}
			\\[1ex]
			-1& 1& {4}/{C\sqrt{n}}
		\end{matrix}
		\Bigg) 
		\left(
		\begin{matrix}
			\varrho_2\left(\dfrac{\vartheta_{M,y}(x)}{4M^8}+1\right)
			\\[2ex]
			\varrho_2\left(-\dfrac{\vartheta_{M,y}(x)}{4M^8}+1\right)
			\\[2ex]
			\varrho_2\left(C^{2^{j+5}}n^{2^{j+4}}\right)
		\end{matrix}
		\right)
		\\[2ex]
		&= \left(
		\begin{matrix}
			\frac 14\left(\varrho_2\left(\dfrac{\vartheta_{M,y}(x)}{4M^8}+1\right)
			-\varrho_2\left(-\dfrac{\vartheta_{M,y}(x)}{4M^8}+1\right)\right)
			+C^{2^{j+6}}n^{2^{j+5}}/C\sqrt{n}
			\\[2ex]
			-\frac14\left(\varrho_2\left(\dfrac{\vartheta_{M,y}(x)}{4M^8}+1\right)
			-\varrho_2\left(-\dfrac{\vartheta_{M,y}(x)}{4M^8}+1\right)\right)
			+ C^{2^{j+6}}n^{2^{j+5}}/C\sqrt{n}
		\end{matrix}
		\right)
		\\[2ex]
		&=
		\left(
		\begin{matrix}
			\dfrac{\vartheta_{M,y}(x)}{4M^8}
			+ C^{2^{j+6}}n^{2^{j+5}}/C\sqrt{n}
			\\[2ex]
			-\dfrac{\vartheta_{M,y}(x)}{4M^8}
			+ C^{2^{j+6}}n^{2^{j+5}}/C\sqrt{n}
		\end{matrix}
		\right)
		=
		\left(
		\begin{matrix}
			\dfrac{\vartheta_{M,y}(x)}{4M^8}
			+ C^{2^{j+6}-1}n^{(2^{j+6}-1)/2}
			\\[2ex]
			-\dfrac{\vartheta_{M,y}(x)}{4M^8}
			+ C^{2^{j+6}-1}n^{(2^{j+6}-1)/2}
		\end{matrix}
		\right).
	\end{align*}
	
	Finally, we have
	\begin{align*}
		R_{\varrho_2}\Phi(x)&= E\varrho_2\left(
		K \varrho_2\left(R_{\varrho_2}\Phi_{L-6}(x)\right)
		\right)
		=\frac{1}{4}\Big(1,-1\Big) 
		\left(
		\begin{matrix}
			\varrho_2\left(\dfrac{\vartheta_{M,y}(x)}{4M^8}
			+ C^{2^{L}-1}n^{(2^{L}-1)/2}\right)
			\\[1ex]
			\varrho_2\left(-\dfrac{\vartheta_{M,y}(x)}{4M^8}
			+C^{2^{L}-1}n^{(2^{L}-1)/2}\right)
		\end{matrix}
		\right)
		\\[1ex]
		&=\frac 14\left( 
		\varrho_2\left(\dfrac{\vartheta_{M,y}(x)}{4M^8}
		+ C^{2^{L}-1}n^{(2^{L}-1)/2}\right)
		-
		\varrho_2\left(-\dfrac{\vartheta_{M,y}(x)}{4M^8}
		+ C^{2^{L}-1}n^{(2^{L}-1)/2}\right)
		\right)
		\\[1ex]
		&=\frac{C^{2^{L}-1}n^{(2^{L}-1)/2}}{4M^8}\cdot \vartheta_{M,y}(x).
	\end{align*} 
	
	\par
	
	As in the previous case, we determine the complexity of the network
	$\Phi$ when $L\geq 6$
	\begin{align*}
		W(\Phi) &=
		\sum_{k=1}^{3}
		\left(\| A_k \|_{\ell^0} + \| b_k \|_{\ell^0}\right)
		+
		\| D_2 \|_{\ell^0} +\| \alpha \|_{\ell^0}+
		(L-6)\left(\| A \|_{\ell^0} + \| \alpha \|_{\ell^0} \right)+
		\| E \|_{\ell^0}
		\\[1ex]
		&\leq2nd +2nd  + 2n^8d+3n^8d  + 7n^8d +2 +7 +2 +
		(L-6 )(5 +2) + 2
		\\[1ex]
		&\leq 4nd+ 12n^8d + 13+ (L-6)7 \leq 12n^8d + 4nd + 7L -29
		\\[1ex]
		&\leq 16n^8d  + 7L.
		\\[1ex]
		L(\phi)&= L\leq \boldsymbol{\ell}(n) \leq \boldsymbol{\ell}(16n^8d  + 7L),
		\\[1ex]
		\left\Vert\Phi\right\Vert_{\mathcal{NN}}& \leq C^8 \leq\boldsymbol{c}(n) \leq \boldsymbol{c}(16n^8d  + 7L).
	\end{align*}

	\par
	
	Consequently, we see that
	$\frac{C^{2^{L}-1}n^{(2^{L}-1)/2}}{4M^8} \cdot \vartheta_{M,y} \in \Sigma^{\boldsymbol{\ell}, \boldsymbol{c},\varrho_2}_{16n^8d  + 7L}$.	
\end{proof}	
	
\begin{lemma}\label{lem:MultidimensionalHatInSpace}
	Let ${\boldsymbol{\ell}},{\boldsymbol{c}} : \mathbb{N} \to \mathbb{N} \cup \{ \infty \}$
	be non-decreasing
	with ${\boldsymbol{\ell}}^\ast \geq 5$.
	Let $d \in \mathbb{N}$, $\alpha \in (0,\infty)$,
	and $0 < \gamma < \gamma^{\flat}({\boldsymbol{\ell}},{\boldsymbol{c}})$.
	Then there exists a constant
	$\kappa = \kappa(\gamma,\alpha,d,{\boldsymbol{\ell}},{\boldsymbol{c}}) > 0$
	such that for any $M \geq1$,
	we have
	\[
	g_{M,y}
	:= \kappa \cdot M^{-64\alpha / (8\alpha + \gamma)} \, \vartheta_{M,y}
	\in A_{\boldsymbol{\ell}, \boldsymbol{c},\varrho_2}^{\alpha,\infty}([0,1]^d)
	\qquad \text{with} \qquad
	\big\| g_{M,y} \big\|_
	{A_{\boldsymbol{\ell}, \boldsymbol{c},\varrho_2}^{\alpha,\infty}([0,1]^d)}
	\leq 1.
	\]
\end{lemma}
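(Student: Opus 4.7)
The plan is to reduce to the definition of $\Gamma^{\varrho_2}_{\alpha,\infty}$: by \Cref{lem:ApproximationSpaceProperties}, item (3), the claim $\|g_{M,y}\|_{A^{\alpha,\infty}_{\boldsymbol{\ell},\boldsymbol{c},\varrho_2}}\leq 1$ is equivalent to $\Gamma^{\varrho_2}_{\alpha,\infty}(g_{M,y})\leq 1$, i.e., to proving both $\|g_{M,y}\|_{L^\infty([0,1]^d)}\leq 1$ and $n^\alpha\,d_\infty\bigl(g_{M,y},\Sigma^{\boldsymbol{\ell},\boldsymbol{c},\varrho_2}_n\bigr)\leq 1$ for every $n\in\mathbb{N}$. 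The $L^\infty$ bound is immediate: \Cref{lem:MultidimensionalHatProperties} gives $0\leq\vartheta_{M,y}\leq 1$, so $\|g_{M,y}\|_{L^\infty}\leq\kappa\,M^{-64\alpha/(8\alpha+\gamma)}\leq\kappa$, which is arranged to be at most $1$ by the final choice of $\kappa$.

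For the approximation estimate I plan to introduce a threshold $N_0$ of order $M^{64/(8\alpha+\gamma)}$ and split into two regimes. For $n<N_0$, approximate $g_{M,y}$ by the zero function (which lies in $\Sigma^{\boldsymbol{\ell},\boldsymbol{c},\varrho_2}_n$): the error is $\|g_{M,y}\|_{L^\infty}\leq\kappa\,M^{-64\alpha/(8\alpha+\gamma)}$, and $n^\alpha\,d_\infty(g_{M,y},\Sigma_n)\leq N_0^\alpha\cdot\kappa\,M^{-64\alpha/(8\alpha+\gamma)}$ is a constant multiple of $\kappa$, which becomes $\leq 1$ after shrinking $\kappa$. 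For $n\geq N_0$ I aim to show $g_{M,y}\in\Sigma^{\boldsymbol{\ell},\boldsymbol{c},\varrho_2}_{N_0}$; by the monotonicity of $\Sigma^{\boldsymbol{\ell},\boldsymbol{c},\varrho_2}_n$ in $n$ (since $\boldsymbol{\ell},\boldsymbol{c}$ are non-decreasing), this gives $g_{M,y}\in\Sigma^{\boldsymbol{\ell},\boldsymbol{c},\varrho_2}_n$ and hence $d_\infty(g_{M,y},\Sigma_n)=0$.

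The construction of $g_{M,y}$ inside $\Sigma_{N_0}$ combines \Cref{lem:MultidimensionalHatImplementation} with the definition of $\gamma^\flat$. Since $\gamma<\gamma^\flat(\boldsymbol{\ell},\boldsymbol{c})$, the definition \eqref{eq:GammaDefinition} supplies $L^*\in\mathbb{N}_{\leq\boldsymbol{\ell}^*}$ and $C_0>0$ with $n^\gamma\leq C_0\,\boldsymbol{c}(n)^{2^{L^*}-1}n^{(2^{L^*}-1)/2}$ for every $n\in\mathbb{N}$. I will apply \Cref{lem:MultidimensionalHatImplementation} with depth $L=L^*$ and an integer parameter $\tilde n$ tuned so that $N:=16\tilde n^8 d+7L^*$ is a constant multiple of $M^{64/(8\alpha+\gamma)}$, and with $C$ taken at the maximum value compatible with membership in $\Sigma_N$, namely $C^8\leq\boldsymbol{c}(N)$. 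The resulting scaling $C^{2^{L^*}-1}\tilde n^{(2^{L^*}-1)/2}/(4M^8)$, when raised to the eighth power and combined with $\tilde n^8\asymp N/d$, becomes a constant multiple of $\boldsymbol{c}(N)^{2^{L^*}-1}N^{(2^{L^*}-1)/2}/M^{64}$, which by the $\gamma^\flat$ inequality dominates $N^\gamma/M^{64}\asymp M^{-512\alpha/(8\alpha+\gamma)}$. Hence the attainable scaling is at least a constant multiple of $M^{-64\alpha/(8\alpha+\gamma)}$, and a final rescaling of the output layer by some $t\in(0,1]$ (which does not enlarge $\|\Phi\|_{\mathcal{NN}}$ and so preserves membership in $\Sigma_N$) delivers exactly the factor $\kappa\,M^{-64\alpha/(8\alpha+\gamma)}$, i.e., $g_{M,y}\in\Sigma^{\boldsymbol{\ell},\boldsymbol{c},\varrho_2}_N$.

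The main obstacle is the bookkeeping needed to reconcile the power of $\boldsymbol{c}$ that naturally appears in \Cref{lem:MultidimensionalHatImplementation} (namely $\boldsymbol{c}(\cdot)^{(2^{L^*}-1)/8}$, stemming from the constraint $C^8\leq\boldsymbol{c}(\cdot)$) with the power $\boldsymbol{c}(\cdot)^{2^{L^*}-1}$ appearing in the definition of $\gamma^\flat$; matching them requires measuring $C$ against $\boldsymbol{c}(N)$ rather than $\boldsymbol{c}(\tilde n)$, which is legitimate because the weight bound needed for membership in $\Sigma^{\boldsymbol{\ell},\boldsymbol{c},\varrho_2}_N$ is $\|\Phi\|_{\mathcal{NN}}\leq\boldsymbol{c}(N)$, but must be made explicit in the proof. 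Once this alignment is in place, the constants are chosen in order: fix $(L^*,C_0)$ from the definition of $\gamma^\flat$, determine the implicit constant in $N_0$ so that the $\gamma^\flat$ bound yields scaling at least $\kappa\,M^{-64\alpha/(8\alpha+\gamma)}$, and finally take $\kappa=\kappa(\gamma,\alpha,d,\boldsymbol{\ell},\boldsymbol{c})$ small enough to guarantee simultaneously $\|g_{M,y}\|_{L^\infty}\leq 1$ and $N_0^\alpha\cdot\kappa\,M^{-64\alpha/(8\alpha+\gamma)}\leq 1$ uniformly in $M\geq 1$ and $y\in[0,1]^d$.
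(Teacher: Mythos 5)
Your proposal is correct in outline and reaches the stated conclusion, but it takes a route that genuinely differs from the paper's at the one technically delicate point, and the difference is worth spelling out because your version is arguably more careful there.

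Both arguments reduce to showing $\Gamma_{\alpha,\infty}^{\varrho_2}(g_{M,y})\leq 1$, both split the supremum over $n$ at a threshold of order $M^{64/(8\alpha+\gamma)}$ (below which one uses the crude bound $n^\alpha\,d_\infty\leq n^\alpha\|g_{M,y}\|_{L^\infty}$, above which one shows $g_{M,y}$ lies in the corresponding $\Sigma$-set), and both obtain membership in $\Sigma_N$ by rescaling the network of \Cref{lem:MultidimensionalHatImplementation}. Where the two diverge is in reconciling the constraint $C^8\leq\boldsymbol{c}(\cdot)$ of that lemma with the exponent $\boldsymbol{c}(\cdot)^{2^L-1}$ occurring in the definition of $\gamma^\flat$. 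The paper sets $n:=n_0\lceil M^{8/(8\alpha+\gamma)}\rceil$ and asserts the existence of $C$ with $1\leq C\leq C^8\leq\boldsymbol{c}(n)$ and $n^\gamma\leq C_1\,C^{2^L-1}n^{(2^L-1)/2}$, deducing the latter from the $\gamma^\flat$ inequality $n^\gamma\leq C_1\,\boldsymbol{c}(n)^{2^L-1}n^{(2^L-1)/2}$. But under $C^8\leq\boldsymbol{c}(n)$ one only has $C^{2^L-1}\leq\boldsymbol{c}(n)^{(2^L-1)/8}$, which for unbounded $\boldsymbol{c}$ is a strictly weaker power: this extraction step is not a consequence of the $\gamma^\flat$ inequality as written, so the paper's derivation silently loses a factor of $8$ in the exponent of $\boldsymbol{c}$. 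Your proposal avoids this by (i) measuring $C$ against $\boldsymbol{c}(N)$, $N:=16\tilde n^8 d + 7L^*$, rather than $\boldsymbol{c}(\tilde n)$ — you correctly note this requires re-reading the proof of \Cref{lem:MultidimensionalHatImplementation}, since the actual weight bound needed is $\|\Phi\|_{\mathcal{NN}}\leq\boldsymbol{c}(N)$ and the stated hypothesis $C^8\leq\boldsymbol{c}(\tilde n)$ is merely a sufficient (stronger) condition — and (ii) raising the scaling $\frac{C^{2^{L^*}-1}\tilde n^{(2^{L^*}-1)/2}}{4M^8}$ to the eighth power before invoking the $\gamma^\flat$ inequality, which reproduces the product $\boldsymbol{c}(N)^{2^{L^*}-1}N^{(2^{L^*}-1)/2}$ exactly (since $(C^8)^{2^{L^*}-1}\approx\boldsymbol{c}(N)^{2^{L^*}-1}$ and $\tilde n^{4(2^{L^*}-1)}=(\tilde n^8)^{(2^{L^*}-1)/2}\asymp N^{(2^{L^*}-1)/2}$), and then taking eighth roots. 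This matches the definition of $\gamma^\flat$ with no loss of exponent and covers all $\gamma<\gamma^\flat$ uniformly.

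Two small items you should make explicit when writing this up: first, you still need the same $n_0$ device as the paper to guarantee $L^*\leq\boldsymbol{\ell}(\tilde n)$ (or at least $L^*\leq\boldsymbol{\ell}(N)$) for small $M$, since $\tilde n$ can be close to $1$; second, the rescaling step needs a brief justification that multiplying the final output matrix $E$ by $t\in(0,1]$ keeps $\|\Phi\|_{\mathcal{NN}}$, $W(\Phi)$, and $L(\Phi)$ within the required bounds, which is straightforward. Apart from these bookkeeping points, your argument is sound.
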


\begin{proof}
	From the fact that
	$\gamma < \gamma^{\flat}({\boldsymbol{\ell}},{\boldsymbol{c}})$,
	it follows that there exist
	$L = L(\gamma,{\boldsymbol{\ell}},{\boldsymbol{c}})
	\in \mathbb{N}_{\leq {\boldsymbol{\ell}}^\ast}$
	and $C_1 = C_1(\gamma,{\boldsymbol{\ell}},{\boldsymbol{c}}) > 0$ where
	$n^{\gamma} \leq C_1  \cdot
	\boldsymbol{c}(n)^{2^{{L}}-1}\cdot	
	n^{(2^{{L}}  -1)/2}$
	for any $n \in \mathbb{N}$.
	Without loss of generality, in view of the fact that 
	${\boldsymbol{\ell}}^\ast\geq 5$, we assume that $L \geq 5$.
	Using \Cref{eq:MaximalDepth,eq:MaximalCoef,eq:GammaDefinition},
	the fact that 
	$L \leq {\boldsymbol{\ell}}^\ast$
	implies the existence of 
	$n_0 \in \mathbb{N}$ depends on $\gamma,{\boldsymbol{\ell}}$ and ${\boldsymbol{c}} $
	such that $L \leq {\boldsymbol{\ell}}(n_0)$.
 
	Furthermore, let \( M\geq 1\) and 
	set $n := n_0 \cdot \big\lceil M^{8/(8\alpha+\gamma)} \big\rceil$,
	noting that $n \geq n_0$.
	Since
	$$
	n^{\gamma} \leq C_1 \cdot
	\boldsymbol{c}(n)^{2^{{L}}-1}\cdot	
	n^{(2^{{L}}  -1)/2},
	$$
	there exists $1 \leq C \leq C^8 \leq {\boldsymbol{c}}(n)$ such that
	$$
	n^{\gamma} \leq C_1  \cdot
	C^{2^{{L}}-1}\cdot	
	n^{(2^{{L}}  -1)/2}.
	$$
	
	Let
	$$
	\kappa := \min \{ \left((16d + 7L) \cdot (2 n_0)^8\right)^{-\alpha},
	\,  C_1 	^{-1}\} > 0,
	$$
	and note that $\kappa$ depends on
	$d,\alpha,\gamma, {\boldsymbol{\ell}}$ and ${\boldsymbol{c}}$.
	Moreover, we have $n \geq M^{8/(8\alpha + \gamma)}$
	which implies that
	\begin{align*}
		\kappa \, M^{-64\alpha/(8\alpha+\gamma)}
		&= \frac{\kappa}{M^8} \, M^{{8\gamma}/({8\alpha+\gamma})}
		\leq \frac{\kappa}{M^8} \cdot n^{\gamma}
		\\[1ex]
		&\leq  \frac{\kappa}{M^8}  \cdot C_1 
		\cdot
		C^{2^{{L}}-1}\cdot	
		n^{(2^{{L}}  -1)/2}
		\\[1ex]
		&\leq \frac 1 {M^8} \cdot C^{2^{{L}}-1}\cdot	
		n^{(2^{{L}}  -1)/2}.
	\end{align*}
	
	Note that the inclusion
	$c \Sigma^{\boldsymbol{\ell}, \boldsymbol{c},\varrho_2}_t
	\subset \Sigma^{\boldsymbol{\ell}, \boldsymbol{c},\varrho_2}_t$ holds true
	for any $c$ such that $c \in [-1,1]$.
	Consequently, in view of  \Cref{lem:MultidimensionalHatImplementation}
	and the fact that $5 \leq L \leq {\boldsymbol{\ell}}(n_0) \leq {\boldsymbol{\ell}}(n)$
	it follows that
	$$
	g_{M,y} = \kappa \, M^{-64\alpha/(8\alpha+\gamma)}
	\, \vartheta_{M,y} \in \Sigma^{\boldsymbol{\ell}, \boldsymbol{c},\varrho_2}_{16n^8d  + 7L}.
	$$
	
	By construction it is clear that
	$\| g_{M,y} \|_{L^\infty} \leq \| \vartheta_{M,y} \|_{L^\infty} \leq 1$.
	Furthermore, for $t \in \mathbb{N}$, we distinguish two cases:
	First, $t \geq 16n^8d  + 7L$, we have $g_{M,y} \in
	\Sigma^{\boldsymbol{\ell}, \boldsymbol{c},\varrho_2}_t$, hence
	$t^{\alpha} \, d_{\infty} (g_{M,y},
	\Sigma^{\boldsymbol{\ell}, \boldsymbol{c},\varrho_2}_t) = 0 \leq 1$.
	On the other hand, if $t \leq 16n^8d  + 7L$, in view of the fact that
	\[
	n \leq 1 + n_0 \, M^{8/(8\alpha+\gamma)}
	\leq 2 n_0 \, M^{8/(8\alpha+\gamma)},
	\]
	we have
	\begin{align*}
		t^{\alpha }\, d_\infty (g_{M,y},
		\Sigma^{\boldsymbol{\ell}, \boldsymbol{c},\varrho_2}_t)
		& \leq \bigl(16n^8d + 7L\bigr)^{\alpha}
		\, \| g_{M,y} \|_{L^\infty}
		\leq \bigl(16d + 7L\bigr)^{\alpha} \, n^{8\alpha}
		\, \| g_{M,y} \|_{L^\infty}
		\\
		&\leq
		\bigl(16d + 7L\bigr)^{\alpha} \,n^{8\alpha} \,
		\kappa \, M^{-64\alpha / (8\alpha+\gamma)}
		\\
		& \leq \bigl(16d + 7L\bigr)^{\alpha} \,
		(2 n_0)^{8\alpha} \,
		M^{64\alpha/(8\alpha+\gamma)}\,
		\kappa  \,
		M^{-64\alpha / (8\alpha+\gamma)}
		\\[1ex]
		&\leq 1.
	\end{align*}
	Hence, $\Gamma_{\alpha, \infty}^{\varrho_2}(g_{M,y}) \leq 1$,
	consequently \Cref{lem:ApproximationSpaceProperties}
	shows that
	$\| g_{M,y} \|_{A_{\boldsymbol{\ell},
			\boldsymbol{c},\varrho_2}^{\alpha,\infty}} 	
	\leq 1$.
\end{proof}

\section{Error bounds for uniform approximation}
\label{sec:UniformApproximationErrorBounds}

Theorem \ref{thm:ErrorBoundUniformApproximation}
is the main theorem in this section,
where we characterize the error bound
based on samples complexity
for the computational problem
of uniform approximation on the ReQU neural network
approximation space $A_{\boldsymbol{\ell},
	\boldsymbol{c},\varrho_2}^{\alpha,\infty}([0,1]^d)$.
We need \Cref{lem:NetworkLipschitzEstimate} in our main theorem,
in this lemma we derive an upper bound on  Lipschitz constant
of functions $F \in \Sigma^{\boldsymbol{\ell}, \boldsymbol{c},\varrho_2}_n$
on bounded domain and ReQU
activation function.
For any matrix $A \in \mathbb{R}^{k \times m}$,
define $\|A\|_{\infty}:=\max _{i, j}\left|A_{i, j}\right|$
and denote by $\|A\|_{\ell^{0}}$ the number of non-zero entries of $A$.

\begin{lemma}\label{lem:NetworkLipschitzEstimate}
	Let $B(0, R)$ be the ball of center $0$ and radius $R\geq 1$ in $\mathbb{R}^d$,
	${\boldsymbol{\ell}} : \mathbb{N} \to \mathbb{N} \cup \{ \infty \}$
	and ${\boldsymbol{c}} : \mathbb{N} \to [1,\infty]$.
	Let $n \in \mathbb{N}$ and assume that $L := {\boldsymbol{\ell}}(n)$ and
	$C := {\boldsymbol{c}}(n)$ are finite.
	Then each
	$F \in \Sigma^{\boldsymbol{\ell}, \boldsymbol{c},\varrho_2}_n$
	satisfies:
	\begin{align*}
		\operatorname{Lip}_{(B(0, R), \| \cdot \|_{\ell^1}) \to \mathbb{R}}(F)
		&\leq
		2^{2^{{L}}+{L}-3}
		R^{2^{{L}-1} -1}\cdot
		C^{2^{{L}}-1}\cdot	
		n^{(2^{{L}}  -1)/2}
		\intertext{and}
		\operatorname{Lip}_{(B(0, R), \| \cdot \|_{\ell^\infty}) \to \mathbb{R}}(F)
		&\leq d \cdot
		2^{2^{{L}}+{L}-3}
		R^{2^{{L}-1} -1}\cdot
		C^{2^{{L}}-1}\cdot	
		n^{(2^{{L}}  -1)/2}.
	\end{align*}
	Furthermore, we have
	\begin{align*}
		\operatorname{Lip}_{([0,1]^d, \| \cdot \|_{\ell^1}) \to \mathbb{R}}(F)
		&\leq
		2^{2^{{L}}+{L}-3}
		d^{(2^{{L}-1} -1)/2}\cdot
		C^{2^{{L}}-1}\cdot	
		n^{(2^{{L}}  -1)/2}
		\intertext{and}
		\operatorname{Lip}_{([0,1]^d, \| \cdot \|_{\ell^\infty}) \to \mathbb{R}}(F)
		&\leq d \cdot
		2^{2^{{L}}+{L}-3}
		d^{(2^{{L}-1} -1)/2}\cdot
		C^{2^{{L}}-1}\cdot	
		n^{(2^{{L}}  -1)/2}.
	\end{align*}
\end{lemma}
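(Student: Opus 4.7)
The plan is to exploit the compositional structure $F = T_L \circ \varrho_2 \circ T_{L-1} \circ \cdots \circ \varrho_2 \circ T_1$ and track, by induction on the layer index, both the magnitude and the Lipschitz constant of each intermediate output. The central observation is that $\varrho_2$ is continuously differentiable with $\varrho_2'(t) = 2\max(t, 0)$, so on any interval $[-M, M]$ it has Lipschitz constant at most $2M$. Hence each pair ``affine map + $\varrho_2$'' essentially squares both the magnitude bound and the combined magnitude-Lipschitz quantity, producing the doubly-exponential-in-$L$ factors $2^{2^L}$, $C^{2^L - 1}$, $n^{(2^L - 1)/2}$ and $R^{2^{L-1} - 1}$ appearing in the claim.

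Concretely, set $\alpha := C\sqrt{n}$. Since $W(\Phi) \leq n$ and $\|\Phi\|_{\mathcal{NN}} \leq C$, the Frobenius bound gives $\|A_\ell\|_{\ell^2 \to \ell^2} \leq \|A_\ell\|_F \leq C \sqrt{\|A_\ell\|_{\ell^0}} \leq \alpha$ and $\|b_\ell\|_2 \leq \alpha$. Define $z_\ell : \mathbb{R}^d \to \mathbb{R}^{N_\ell}$ by $z_0(x) = x$ and $z_{\ell+1}(x) := \varrho_2(A_{\ell+1} z_\ell(x) + b_{\ell+1})$, and set
\[
V_\ell := \sup_{x \in B(0,R)} \|z_\ell(x)\|_2,
\qquad
M_\ell := \operatorname{Lip}_{(B(0,R), \|\cdot\|_2) \to (\mathbb{R}^{N_\ell}, \|\cdot\|_2)} (z_\ell),
\]
so $V_0 = R \geq 1$ and $M_0 = 1$. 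Writing $y_{\ell+1} := A_{\ell+1} z_\ell + b_{\ell+1}$, the bounds $\|\varrho_2(y)\|_2 \leq \|y\|_\infty \|y\|_2 \leq \|y\|_2^2$ and the componentwise Lipschitz estimate (derivative bounded by $2\|y\|_\infty \leq 2\|y\|_2$) yield
\[
V_{\ell+1} \leq (\alpha V_\ell + \alpha)^2 \leq 4\alpha^2 V_\ell^2,
\qquad
M_{\ell+1} \leq 2\|y_{\ell+1}\|_2 \cdot \alpha M_\ell \leq 4\alpha^2 V_\ell M_\ell,
\]
which are easily solved by induction to give $V_\ell \leq (2\alpha)^{2^{\ell+1}-2} R^{2^\ell}$ and $M_\ell \leq (2\alpha)^{2^{\ell+1}-2} R^{2^\ell - 1}$.

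Composing with the final affine layer $T_L$ and using $\|A_L\|_{\ell^2 \to \mathbb{R}} \leq \alpha$ gives
\[
\operatorname{Lip}_{(B(0,R), \|\cdot\|_2) \to \mathbb{R}}(F) \leq \alpha \cdot M_{L-1} \leq 2^{2^L - 2} (C\sqrt{n})^{2^L - 1} R^{2^{L-1} - 1},
\]
which is within the claimed constant $2^{2^L + L - 3}$. The $\ell^1 \to \mathbb{R}$ bound follows from $\|\cdot\|_2 \leq \|\cdot\|_1$, which only tightens the denominator in the Lipschitz quotient; for $\ell^\infty \to \mathbb{R}$, the comparison $\|\cdot\|_1 \leq d \|\cdot\|_\infty$ contributes the extra factor $d$. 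The two statements for $[0,1]^d$ then follow directly by substituting $R = \sqrt{d}$, using $[0,1]^d \subset B(0, \sqrt{d})$ in the Euclidean sense, which produces $R^{2^{L-1}-1} = d^{(2^{L-1}-1)/2}$.

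The main obstacle is purely bookkeeping: verifying the closed form of the two coupled recursions and tracking how the exponents of $2$, of $R$, and of $\alpha = C\sqrt{n}$ accumulate into the precise expressions $2^{2^L + L - 3}$, $R^{2^{L-1}-1}$, $C^{2^L - 1}$, $n^{(2^L - 1)/2}$. No conceptually difficult step arises; the doubly-exponential growth is forced by the repeated squaring built into $\varrho_2$.
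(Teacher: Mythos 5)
Your proof is correct in spirit and takes a somewhat cleaner route than the paper's: you work entirely in the $\ell^2$ norm, bounding each affine layer via the Frobenius estimate $\|A_j\|_{\ell^2\to\ell^2}\le C\sqrt{n}$, and you track a single coupled recursion for the magnitude $V_\ell$ and Lipschitz constant $M_\ell$ of the intermediate outputs. The paper instead alternates $\ell^1/\ell^\infty$ operator norms across layers (giving Lipschitz bounds $Cn$ and $C$ alternately) while separately tracking $\ell^2$ radii $R_j$ of the intermediate ranges, and multiplies out the per-layer Lipschitz constants at the end. Both avenues hinge on the same mechanism (the local Lipschitz constant of $\varrho_2$ grows with the magnitude of its input, which squares each layer); your version is arguably more transparent and in fact yields the slightly tighter constant $2^{2^L-2}$ in place of the paper's $2^{2^L+L-3}$, so your bound implies the claim. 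The norm-comparison steps and the passage to $[0,1]^d\subset B(0,\sqrt{d})$ match the paper's.

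There is one small but real flaw in the recursion step: the inequalities $V_{\ell+1}\le 4\alpha^2 V_\ell^2$ and $M_{\ell+1}\le 4\alpha^2 V_\ell M_\ell$ both rely on $V_\ell+1\le 2V_\ell$, i.e.\ $V_\ell\ge 1$, in order to absorb the bias contribution $\alpha$ into $\alpha V_\ell$. You only know $V_0=R\ge 1$; nothing prevents an intermediate $z_\ell$ from having small (even zero) sup-norm, e.g.\ if some $A_j$ vanishes and $b_j<0$, in which case $V_\ell<1$ and the claimed recursion $M_{\ell+1}\le 4\alpha^2 V_\ell M_\ell$ undershoots the true bound $2\alpha^2(V_\ell+1)M_\ell$. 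The fix is immediate: run the recursion with $\widetilde V_\ell:=\max(V_\ell,1)$ in place of $V_\ell$. Since $\alpha\ge 1$, one has $4\alpha^2\widetilde V_\ell^2\ge 1$, so $\widetilde V_{\ell+1}\le 4\alpha^2\widetilde V_\ell^2$ and $M_{\ell+1}\le 2\alpha^2(V_\ell+1)M_\ell\le 4\alpha^2\widetilde V_\ell M_\ell$ hold unconditionally, with $\widetilde V_0=R$ unchanged, and the rest of your computation goes through verbatim. With that one-line correction the argument is complete.
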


\begin{proof}
	Let $F$ be an element of
	$\Sigma^{\boldsymbol{\ell}, \boldsymbol{c},\varrho_2}_n$,
	$\Phi$ is a network given by
	$\Phi = \big( (A_1,b_1),\dots,(A_{\widetilde{L}},b_{\widetilde{L}}) \big)$
	such that $F = R_{\varrho_2} \Phi$ 	
	where $\widetilde{L} \leq {\boldsymbol{\ell}}(n) = L$ and
	$\max\{\| A_j \|_{\infty}, \| b_j \|_{\infty}\} \leq \| \Phi \|_{\mathcal{NN}} \leq {\boldsymbol{c}}(n) = C$,
	as well as $(\| A_j \|_{\ell^0} +\| b_j \|_{\ell^0} )\leq W(\Phi) \leq n$ for all
	$j \in \underline{\widetilde{L}}$.
	Let $T_j$ be an affine function defined as 
	$T_j( x) := A_j \, x + b_j$,
	where $A_j \in \mathbb{R}^{N_j \times N_{j-1}}$,
	$b_j \in \mathbb{R}^{N_j}$  and set	
	\[
	p_j
	= 
	\begin{cases}
		1, & \text{if } j \text{ is even}, \\
		\infty ,                        & \text{if } j \text{ is odd}.
	\end{cases}
	\]	
	We recall that $\|A\|_{\infty}:=\max _{i, j}\left|A_{i, j}\right|$
	and that $\|A\|_{\ell^{0}}$ is the number of non-zero entries of $A$,
	for any matrix $A \in \mathbb{R}^{k \times m}$.
	In view of the Step 1 in the proof of \cite[Lemma 4.1]{grohs2021},
	the function
	\(
	T_j :
	\bigl(
	\mathbb{R}^{N_{j-1}}, \| \cdot \|_{\ell^{p_{j - 1}}}
	\bigr) \to
	\bigl(
	\mathbb{R}^{N_j}, \| \cdot \|_{\ell^{p_j}}
	\bigr)
	\)
	is Lipschitz with
	\[
	\operatorname{Lip}(T_j)
	= \| A_j \|_{\ell^{p_{j-1}} \to \ell^{p_j}}
	\leq \begin{cases}
		\| A_j \|_{\infty} \, \| A_j \|_{\ell^0} \leq C n,
		& \text{if } j \text{ is even}, \\
		\| A_j \|_{\infty} \leq C , 
		& \text{if } j \text{ is odd},
	\end{cases}
	\]
	we refer the reader to \cite[Lemma 4.1]{grohs2021}
	for more details concerning this step.

	The major difference in our constructed network is that our activation function
	is not globally Lipschitz, instead it is locally Lipschitz.
	Thus, it is impossible to get the inequalities in
	Lemma \ref{lem:NetworkLipschitzEstimate} for unbounded domains
	of $\mathbb{R}^d$, which is
	the main reason behind the restriction of our result to bounded domains.
	
	Since for any bounded set $K$ such that $K\subset \mathbb{R}^{N_{j-1}}$,
	we have $\varrho_2(K)$ and $T_j(K)$
	are bounded subsets of $\mathbb{R}^{N_{j-1}}$,
	for any $j$.
	More precisely we have
	\[
	T_j(B(0, R)) \subset B(0, 2\sqrt{n}CR),
	\]
	indeed, for any fixed $x\in B(0,R)\subset \mathbb{R}^{N_{j-1}}$,
	$A\in \mathbb{R}^{N_{j}\times N_{j-1}}$ and	$b \in \mathbb{R}^{N_{j}}$, we have
\begin{multline}\label{eq:T_j_bound}
	\| T_j(x)\|_{\ell^2}^2 \!
	\leq \!\sum_{i=1}^{N_j}
	\left|\sum_{k=1}^{N_{j-1}} (A_j)_{i,k}x_k + b_i\right|^2\!\!
	\!\leq \!
	\sum_{i=1}^{N_{j}} \left[\left|\sum_{k=1}^{N_{j-1}}(A_j)_{i,k}x_k\right|^2
	\!\!+\! 2\left|\sum_{k}^{N_{j-1}}(A_j)_{i,k}x_k\right||b_i|
	\!+ \!|b_i|^2\right]
	\\[1ex] 
	\leq
	\sum_{i=1}^{N_{j}}\left[ \left(\sum_{k=1}^{N_{j-1}}|(A_j)_{i,k}|^2\sum_{k=1}^{N_{j-1}}|x_k|^2\right)
	+ 2\left(\sum_{k=1}^{N_{j-1}}|(A_j)_{i,k}|^2\right)^{1/2}
	\left(\sum_{k=1}^{N_{j-1}}|x_{k}|^2\right)^{1/2}|b_i|
	+ |b_i|^2\right]
	\\[1ex] 
	\leq
	\sum_{i=1}^{N_{j}}\left[ \left(\sum_{k=1}^{N_{j-1}}|(A_j)_{i,k}|^2R^2\right)
	+ 2\left(\sum_{k=1}^{N_{j-1}}|(A_j)_{i,k}|^2\right)^{1/2}
	R|b_i|
	+ |b_i|^2\right]
	\\[1ex] 
	\leq	
	 \sum_{i=1}^{N_{j}}\!\!\left(\!\|A_j\|_{\infty}^2R^2
	 	\sum_{k=1}^{N_{j-1}}\mathds{1}_{(A_j)_{i,k}\neq 0}\!\right)
	\!\!+\! 2\|A_j\|_{\infty}R\sum_{i=1}^{N_{j}}\left(\sum_{k=1}^{N_{j-1}}\mathds{1}_{(A_j)_{i,k}\neq 0}\!\right)^{1/2}\!\!
	|b_i|
	+ \!\|b\|_{\infty}^2\sum_{i=1}^{N_{j}}|\mathds{1}_{b_i\neq 0}
	\\[1ex] 
	\leq
	\|A_j\|_{\infty}^2R^2\sum_{i=1}^{N_{j}}\sum_{k=1}^{N_{j-1}}\mathds{1}_{(A_j)_{i,k}\neq 0}
	+ 2\|A_j\|_{\infty}\|b\|_{\infty}R\sum_{i=1}^{N_{j}}\sum_{k=1}^{N_{j-1}}\mathds{1}_{(A_j)_{i,k}\neq 0}\mathds{1}_{b_i\neq 0}
	+ \|b\|_{\infty}^2\sum_{i=1}^{N_{j}}|\mathds{1}_{b_i\neq 0}
	\\[1ex] 
	\leq
	C^2R^2n+2C^2Rn+C^2n = nC^2(R^2+2R +1)\leq nC^2(R+1)^2
	\leq
	4nC^2R^2
\end{multline}

	\noindent where we used the fact that $ C, R\geq 1$.
	Furthermore, since $R\geq 1$ we have
	\begin{equation}\label{eq:varrho_2_bound}
		\varrho_2(B(0, R)) \subset B(0, R^2).
	\end{equation}
	For any $x, y\in B(0, R)$, we define $\{X_j\}_{j=1}^{\widetilde{L} - 1}$
	and $\{Y_j\}_{j=1}^{\widetilde{L} - 1}$,
	such that
	\begin{align*}
		X_1 &= T_1(x), &Y_1 &= T_1(y),
		\\
		X_2 &= T_2\circ \varrho_2 \circ T_1(x),
		&Y_2 &= T_2\circ \varrho_2 \circ T_1(y),
		\\
		&\vdots &\vdots&
		\\
		X_{\widetilde{L} - 1} &= T_{\widetilde{L} - 1}\circ \varrho_2\circ
		\dots \circ \varrho_2\circ T_1(x),
		&Y_{\widetilde{L} - 1} &= T_{\widetilde{L} - 1}\circ \varrho_2\circ
		\dots \circ \varrho_2\circ T_1(y).
	\end{align*}

	Using the fact that we can write the difference $\varrho_2(X_j)- \varrho_2(Y_j)$
	for any $X_j, Y_j\in B(0, R_j)$ as
	$$
	\varrho_2(X_j)- \varrho_2(Y_j) = 
	2(X_j-Y_j)\int_{0}^{1}\varrho_1(tX_j - (1-t)Y_j) dt,
	$$
	where $2\varrho_1$ is the derivative of $\varrho_2$.
	Furthermore, we have
	$$
	\left| \varrho_2(X_j)- \varrho_2(Y_j) \right|\leq 
	2\sup_{t\in [0,1]}\varrho_1(tX_j - (1-t)Y_j)|X_j-Y_j|
	\leq 4R_j|X_j-Y_j|,
	$$
	which implies that $\varrho_2$ is $4R_j$-Lipschitz function
	from $(B(0,R_j), \|\cdot \|_{\ell^q}) $ to $(B(0,R_j^2), \|\cdot \|_{\ell^q})$
	for any $q\in [1, \infty]$.
	Here we need to find out the values of $R_j$ such that
	$j\in \{1,\dots,{\widetilde{L}-1}\}$.
	In view of \eqref{eq:T_j_bound} and \eqref{eq:varrho_2_bound},
	we characterize $R_j$ by the sequence 
	$$
	R_0 = R,\; R_1 = 2\sqrt{n}CR_0,\; R_2 =  2\sqrt{n}CR_1^2,\;
	\dots,\; R_j = 2\sqrt{n}CR_{j-1}^2,
	$$
	hence we can be write $R_j$ as 
	$$
	R_j  =
	\begin{cases}
		2\sqrt{n}CR, &j=1
		\\[1ex]
		2\sqrt{n}CR_{j-1}^2, &j\geq 2.
	\end{cases}
	$$
	Next we give a characterization of $R_j$
	depending on the layer number $j$.
	Indeed, for any $j\in \{1, \dots,{\widetilde{L} - 1}\}$
	we have 
	\begin{equation}\label{eq:R_j_characterization}
		R_j = (2\sqrt{n}C)^{\sum _{k=1}^{j}2^{k-1}}R^{2^{j-1}}
		= (2\sqrt{n} C)^{2^j-1}R^{ 2^{j-1}}.
	\end{equation}
	Using \eqref{eq:R_j_characterization}, the result holds for
	$R_1 = (2\sqrt{n} C)^{{2^1-1}} R^{2^{0}} = 2\sqrt{n} CR$
	and 
	$R_2 = (2\sqrt{n} C)^{{2^2-1}} R^{2}
	=(2\sqrt{n} C)^3R^2$,
	assuming that $R_j$ is true, we have
	\begin{align*}
		R_{j+1} &= 2\sqrt{n} CR_j^2 =
		2\sqrt{n}C\left((2\sqrt{n}C)^{2^j-1}R^{ 2^{j-1}}\right)^2
		\\
		& =(2\sqrt{n} C)\cdot ((2\sqrt{n} C)^{2^j-1})^2(R^{ 2^{j-1}})^2
		=(2\sqrt{n} C)^{1+ 2^{j+1}-2} R^{2^{j}}
		\\
		& = (2\sqrt{n} C)^{2^{j+1}-1} R^{2^j},
	\end{align*}
	which concludes the induction argument.
	Consequently, our network 
	\[
	F
	= R_{\varrho_2} \Phi
	= T_{\widetilde{L}}
	\circ ({\varrho_2} \circ T_{\widetilde{L} - 1})
	\circ \cdots
	\circ ({\varrho_2} \circ T_1)
	:\quad (B(0,R), \| \cdot \|_{\ell^1}) \to (\mathbb{R},
	\| \cdot \|_{\ell^{p_{\widetilde{L}}}} ) = (\mathbb{R}, |\cdot|)
	\]
	is Lipschitz continuous on $B(0,R)$ as a composition of Lipschitz maps.
	Then, the overall Lipschitz constant for $R_{\varrho_2} \Phi$ is given by
	\begin{align*}
		\operatorname{Lip}(R_{\varrho_2} \Phi)
		&\leq C \cdot n_{\widetilde{L}}\prod_{j=1}^{\widetilde{L}-1}
		\bigl(4R_jC \cdot n_j\bigr)
		\leq
		\left(\prod_{j=1}^{\widetilde{L}-1} 
		4(2\sqrt{n}C)^{2^j-1}R^{ 2^{j-1}}\right)
		\cdot C^{\widetilde{L}} \cdot n^{\lfloor \widetilde{L} / 2 \rfloor}
		\\
		&= 
		4^{\widetilde{L}-1}
		(2\sqrt{n}C)^{\sum_{j=1}^{\widetilde{L}-1}(2^{j}-1)}
		R^{\sum_{j=1}^{\widetilde{L}-1}2^{j-1} }\cdot
		C^{\widetilde{L}}\cdot
		n^{\lfloor \widetilde{L} / 2 \rfloor}
		\\
		&=
		4^{\widetilde{L}-1}
		(2\sqrt{n}C)^{(2^{\widetilde{L}}-\widetilde{L}-1)}
		R^{2^{\widetilde{L}-1} -1}\cdot
		C^{\widetilde{L}}\cdot
		n^{\lfloor \widetilde{L} / 2 \rfloor}
		\\
		&=
		2^{2\widetilde{L}-2}2^{2^{\widetilde{L}}-\widetilde{L}-1}
		R^{2^{\widetilde{L}-1} -1}\cdot
		C^{2^{\widetilde{L}}-\widetilde{L}-1 +\widetilde{L}}\cdot
		n^{\frac 12(2^{\widetilde{L}}-\widetilde{L}-1 )+ \lfloor \widetilde{L} / 2 \rfloor}
		\\
		&\leq
		2^{2^{\widetilde{L}}+\widetilde{L}-3}
		R^{2^{\widetilde{L}-1} -1}\cdot
		C^{2^{\widetilde{L}}-1}\cdot	
		n^{(2^{\widetilde{L}}  -1)/2}
		\\
		&\leq
		2^{2^{{L}}+{L}-3}
		R^{2^{{L}-1} -1}\cdot
		C^{2^{{L}}-1}\cdot	
		n^{(2^{{L}}  -1)/2}
	\end{align*}
	where we used the notation $n_j := n$ if $j$
	is even and $n_j := 1$ otherwise and that $R, C \geq 1$.
	The final claim of the lemma holds, since
	$\| x \|_{\ell^1} \leq d \cdot \| x \|_{\ell^\infty}$
	for $x \in B(0, R)\subset \mathbb{R}^d$.
	
	\par
	
	The remaining part of the lemma is straightforward, since $[0,1]^d\subset B(0, \sqrt{d})$.
\end{proof}

\par

The bound in the previous lemma is the main reason for the choice of 
$\gamma^{\flat}$ and $\gamma^{\sharp}$ in \eqref{eq:GammaDefinition}.
Generalizing the upper bounds in the previous lemma
for different activation functions can be thought of as a future
line of research. Also one could investigate
the impact of the magnitude of the weights,
for instance if ${\boldsymbol{c}}(n)\leq\frac 1{\sqrt{n}}$ we remark that
the upper bound in the previous lemma depends only on the depth in the first case,
while in the second one it depends on the depth and the dimension of the data.

\par

\begin{theorem}\label{thm:ErrorBoundUniformApproximation}
	Let ${\boldsymbol{\ell}},{\boldsymbol{c}} : \mathbb{N} \to \mathbb{N} \cup \{ \infty \}$
	be non-decreasing, $d \in \mathbb{N}$ and $\alpha \in (0,\infty)$ be arbitrary
	and suppose that
	$\gamma^{\sharp}({\boldsymbol{\ell}},{\boldsymbol{c}}) < \infty$.
	Let $U_{\boldsymbol{\ell},
		\boldsymbol{c}, \varrho_2}^{\alpha,\infty}([0,1]^d)$ as in \eqref{eq:UnitBallDefinition}.
	Furthermore, let 
	$
	\iota_\infty : A_{\boldsymbol{\ell},
		\boldsymbol{c},\varrho_2}^{\alpha,\infty}([0,1]^d) \to C([0,1]^d).
	$
	Then, we have
	\[
	err^{\deterministic} \bigl(U_{\boldsymbol{\ell},
		\boldsymbol{c}, \varrho_2}^{\alpha,\infty}([0,1]^d), \iota_\infty\bigr),
  err^{\MonteCarlo}(U_{\boldsymbol{\ell},
		\boldsymbol{c}, \varrho_2}^{\alpha,\infty}([0,1]^d),\iota_\infty)
	\geq \alpha/d(\alpha+ \gamma^{\sharp}({\boldsymbol{\ell}},{\boldsymbol{c}}) ).
	\]
\end{theorem}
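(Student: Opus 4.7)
The plan is to construct, for each $m \in \mathbb{N}$, an explicit deterministic sampling algorithm that achieves the claimed rate. Since any deterministic algorithm lies in $\Alg^{\MonteCarlo}_m$ (take the probability space to be a singleton), one has $e_m^{\MonteCarlo}\leq e_m^{\deterministic}$, hence $err^{\MonteCarlo}\geq err^{\deterministic}$, so it suffices to bound the deterministic exponent from below. Fix $\gamma>\gamma^{\sharp}(\boldsymbol{\ell},\boldsymbol{c})$. For each $m$ choose $k\in\mathbb{N}$ with $k^d\leq m$ and $k\asymp m^{1/d}$, take $x_1,\dots,x_{k^d}$ to be the centers of the uniform partition of $[0,1]^d$ into cubes of side $1/k$, and let $\{\varphi_i\}$ be a continuous partition of unity (e.g.\ piecewise multilinear) subordinate to an enlargement of these cubes, so that $A(u)(x):=\sum_{i=1}^{k^d}u(x_i)\,\varphi_i(x)$ lies in $C([0,1]^d)$, uses $k^d\leq m$ samples, and is an element of $\Alg_m(U,C([0,1]^d))$.

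Given $u\in U_{\boldsymbol{\ell},\boldsymbol{c},\varrho_2}^{\alpha,\infty}([0,1]^d)$ and $n\in\mathbb{N}$, the definition of the approximation space supplies some $F_n\in\Sigma^{\boldsymbol{\ell},\boldsymbol{c},\varrho_2}_n$ with $\|u-F_n\|_{L^\infty}\leq n^{-\alpha}$. For any $x\in[0,1]^d$, using $\sum_i\varphi_i(x)=1$ and the fact that only $x_i$ with $\|x-x_i\|_\infty\lesssim 1/k$ contribute to the convex combination, a triangle inequality and a Lipschitz bound on $F_n$ yield
\[
|A(u)(x)-u(x)|\leq 2\|u-F_n\|_{L^\infty}+\sup_{\|x-x_i\|_\infty\lesssim 1/k}|F_n(x)-F_n(x_i)|\leq 2\,n^{-\alpha}+\frac{c_0}{k}\operatorname{Lip}_{([0,1]^d,\|\cdot\|_{\ell^\infty})\to\mathbb{R}}(F_n).
\]
Applying \Cref{lem:NetworkLipschitzEstimate} on $[0,1]^d$ in the $\ell^\infty$-norm with $L=\boldsymbol{\ell}(n)\leq\boldsymbol{\ell}^\ast$ gives $\operatorname{Lip}_{\ell^\infty}(F_n)\leq c_1\cdot\boldsymbol{c}(n)^{2^L-1}n^{(2^L-1)/2}$ for a constant $c_1=c_1(d,\boldsymbol{\ell}^\ast)$, and the definition of $\gamma^{\sharp}$ (together with $\gamma>\gamma^{\sharp}$) promotes this to $\operatorname{Lip}_{\ell^\infty}(F_n)\leq c_2 n^{\gamma}$ uniformly in $n$ and in $L\leq\boldsymbol{\ell}^\ast$.

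Combining these estimates produces $\|A(u)-u\|_{L^\infty}\leq 2n^{-\alpha}+c_3 n^{\gamma}m^{-1/d}$, uniformly over $u\in U_{\boldsymbol{\ell},\boldsymbol{c},\varrho_2}^{\alpha,\infty}([0,1]^d)$. Tuning the free parameter by $n:=\lceil m^{1/(d(\alpha+\gamma))}\rceil$ balances the two contributions and yields $e_m^{\deterministic}(U,\iota_\infty)\leq C_\gamma\,m^{-\alpha/(d(\alpha+\gamma))}$, whence $err^{\deterministic}\geq \alpha/(d(\alpha+\gamma))$. Since $\gamma>\gamma^{\sharp}$ was arbitrary, letting $\gamma\downarrow\gamma^{\sharp}$ completes the argument.

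The main obstacle is bookkeeping rather than conceptual: one must verify that the constant $c_2$ in the Lipschitz estimate can genuinely be taken independent of both $n$ and $L$, and that the continuous-reconstruction algorithm lands in $\Alg_m(U,C([0,1]^d))$ while losing at most a constant factor in the interpolation error. Both points reduce to the finiteness of $\boldsymbol{\ell}^\ast$ (which absorbs the $2^{2^L+L-3}d^{(2^{L-1}-1)/2}$ factor from \Cref{lem:NetworkLipschitzEstimate} into a single constant) and the uniform-in-$L$ quantifier in the definition of $\gamma^{\sharp}$; the remainder is a direct calculation.
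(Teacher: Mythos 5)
Your proof is correct and arrives at the same rate by essentially the same calculation --- grid sampling, the Lipschitz bound of \Cref{lem:NetworkLipschitzEstimate} with the $\gamma^{\sharp}$ promotion, and balancing $n \asymp m^{1/(d(\alpha+\gamma))}$ against $N\asymp m^{1/d}$ --- but the reconstruction map differs from the paper's. The paper defines $\Xi:\mathbb{R}^I\to C([0,1]^d)$ by choosing, for each sample vector $z$, a representative $f_z\in U$ with $\Psi(f_z)=z$ and then a near-best network $F_z$, so the algorithm is $\Xi\circ\Psi$; this is nonconstructive (it invokes choice in selecting $f_z$ and $F_z$) and requires the intermediate comparison $\|F-F_z\|_{L^\infty}\le 2\mu/N+4n^{-\alpha}$ by pinning $F$ and $F_z$ down on the grid. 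You instead use a concrete partition-of-unity interpolant $A(u)=\sum_i u(x_i)\varphi_i$; the triangle inequality then reads off $\|A(u)-u\|_{L^\infty}\le 2\|u-F_n\|_{L^\infty}+\text{osc}_{1/k}(F_n)$ directly, avoiding the auxiliary $\Xi$ and the two-network bookkeeping. You also observe that $err^{\MonteCarlo}\ge err^{\deterministic}$ follows trivially because every deterministic method is a degenerate Monte Carlo method, which replaces the paper's citation of Heinrich's Proposition 3.3 with a one-line argument. Both routes are sound; yours is somewhat more elementary and self-contained, while the paper's formulation through $\Xi\circ\Psi$ is the one it reuses as a template elsewhere in that line of work.

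One small point worth nailing down when formalizing your version: with sample points at cell centers $x_i=(2\boldsymbol{j}-\mathbf{1})/(2k)$, a standard multilinear hat basis does not sum to one on the boundary cells of $[0,1]^d$, so the partition of unity needs either a clamped/extended construction near $\partial[0,1]^d$ or grid points chosen as $\{0,1/k,\dots,1\}^d$ (which uses $(k+1)^d$ samples, still $O(m)$). This is routine but should be stated so that $\sum_i\varphi_i\equiv 1$ on all of $[0,1]^d$ and the support condition $\varphi_i(x)=0$ for $\|x-x_i\|_\infty>c/k$ both genuinely hold.
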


\begin{proof}
	Using \Cref{rem:GammaRemark} and the fact that $\gamma^{\sharp}({\boldsymbol{\ell}},{\boldsymbol{c}}) < \infty$,
	it follows that $L := {\boldsymbol{\ell}}^\ast < \infty$.
	Let $\gamma\in \mathbb{R}$, such that
	$1\leq \gamma^{\sharp}({\boldsymbol{\ell}},{\boldsymbol{c}})<\gamma $.
	In view of the definition of $\gamma^{\sharp}({\boldsymbol{\ell}},{\boldsymbol{c}})$
	in \eqref{eq:GammaDefinition},
	there exists
	a constant $C_0 = C_0(d, \gamma,{\boldsymbol{\ell}},{\boldsymbol{c}}) > 0$
	such that
	\begin{equation}\label{eq:C_0_bound}
		C^{2^{{L}}-1}\cdot	
		n^{(2^{{L}}  -1)/2}
		\leq C_0 \cdot n^{\gamma}
	\end{equation}
	for all $n \in \mathbb{N}$.
	Suppose that 
	\begin{equation}\label{eq:N_n_construction}
		N := \big\lfloor m^{{1/d}} \big\rfloor 
		\qquad \text{ and } \qquad
		n := \big\lceil m^{1/d( \alpha+\gamma)} \big\rceil,
	\end{equation}
	for any  $m \in \mathbb{N}$.
	In view of the previous construction, we get that
	$N\geq 1$ and $n \in \mathbb{N}$.
	Furthermore, let
	$$
	I := \bigl\{ 0, \frac 1{N}, \dots,
	\frac{N-1} {N} \bigr\}^d,
	$$
	it follows that $I \subset [0,1]^d$.
	Let $C := {\boldsymbol{c}}(n)$ and
	$$
	\mu := d \cdot 	2^{2^{{L}}+{L}-3}
	d^{(2^{{L}-1} -1)/2}\cdot
	C^{2^{{L}}-1}\cdot	
	n^{(2^{{L}}  -1)/2}.
	$$
	Using \eqref{eq:C_0_bound}, we have
	$$
	\mu \leq d \cdot 2^{2^{{L}}+{L}-3}
	d^{(2^{{L}-1} -1)/2}\cdot C_0 \cdot n^{\gamma} =:C_1 \, n^{\gamma}
	$$
	such that $C_1$ depends on $\gamma,{\boldsymbol{\ell}},{\boldsymbol{c}}$ and $d$.
	Furthermore, 
	we have that
	$
	|I| = N^d \leq	m.
	$
	Recalling that
	\[
	U_{\boldsymbol{\ell},\boldsymbol{c}, \varrho_2}^{\alpha,\infty}([0,1]^d)
	=
	\bigl\{
	f \in A_{\boldsymbol{\ell},
		\boldsymbol{c},\varrho_2}^{\alpha,\infty}([0,1]^d)
	\colon
	\| f \|_{A_{\boldsymbol{\ell},
			\boldsymbol{c},\varrho_2}^{\alpha,\infty}([0,1]^d)}
	\leq 1
	\bigr\}.
	\]
	Let $\Lambda := \Psi(U_{\boldsymbol{\ell},
		\boldsymbol{c}, \varrho_2}^{\alpha,\infty}([0,1]^d))$ such that
	$\Psi$ defined as follows:
	\[
	\Psi : \quad
	C([0,1]^d) \to \mathbb{R}^I, \quad
	f \mapsto \big( f(i) \big)_{i \in I} .
	\]
	For any element $z\in \Lambda$, $z$ can be written as $z=(z_i)_{i\in I}$.
	Moreover, there exits $f_z \in U_{\boldsymbol{\ell},
		\boldsymbol{c}, \varrho_2}^{\alpha,\infty}([0,1]^d)$
	such that $z = \Psi(f_z)$.
	Furthermore, by construction of $U_{\boldsymbol{\ell},
		\boldsymbol{c}, \varrho_2}^{\alpha,\infty}([0,1]^d)$
	and the fact that  $f_z \in U_{\boldsymbol{\ell},
		\boldsymbol{c}, \varrho_2}^{\alpha,\infty}([0,1]^d)$,
	we have	$\| f_z \|_{A_{\boldsymbol{\ell},
			\boldsymbol{c},\varrho_2}^{\alpha,\infty}([0,1]^d)}	\leq 1$,
	which implies that $\Gamma_{\alpha, \infty}^{\varrho_2}(f_z) \leq 1$ ( in view of 
	\Cref{lem:ApproximationSpaceProperties}).
	Consequently, by definition of $\Gamma_{\alpha, \infty}^{\varrho_2}(f_z)$
	and the fact that $\Gamma_{\alpha, \infty}^{\varrho_2}(f_z) \leq 1$,
	there exists
	$F_z \in \Sigma^{\boldsymbol{\ell}, \boldsymbol{c},\varrho_2}_n$
	satisfying
	$$
	\| f_z - F_z \|_{L^\infty} \leq 2 \cdot n^{-\alpha}.
	$$
	Given this choice, let
	\[
	\Xi : \quad
	\mathbb{R}^I \to C([0,1]^d), \quad
	z \mapsto \begin{cases}
		F_z, & \text{if } z \in \Lambda, \\
		0 , & \text{otherwise} .
	\end{cases}
	\]
	Let $f \in U_{\boldsymbol{\ell},
		\boldsymbol{c}, \varrho_2}^{\alpha,\infty}([0,1]^d)$ and
	$z := \Psi(f) \in \Lambda$.
	Similar to the argument above, there exists
	$F \in \Sigma^{\boldsymbol{\ell}, \boldsymbol{c},\varrho_2}_n$
	such that
	$\| f - F \|_{L^\infty} \leq2 \cdot n^{-\alpha}$.
	Consequently, we have
	$$
	f(i) = (\Psi(f))_i = z_i = (\Psi(f_z))_i = f_z(i),
	\quad \text{for any}\; i \in I
	$$
	which implies that
	\[
	| F(i) - F_z (i) |
	\leq |F(i) - f(i)| + |f_z (i) - F_z (i)|
	\leq \| F - f \|_{L^\infty} + \| f_z - F_z \|_{L^\infty}
	\leq 4 \cdot n^{-\alpha} .
	\]
	In view of the fact that $F$ and $F_z$ belong to
	$\Sigma^{\boldsymbol{\ell}, \boldsymbol{c},\varrho_2}_n$,
	\Cref{lem:NetworkLipschitzEstimate} implies that
	$F$ and $F_z$ are
	$(d \cdot	2^{2^{{L}}+{L}-3}
	d^{(2^{{L}-1} -1)/2}\cdot
	C^{2^{{L}}-1}\cdot	
	n^{(2^{{L}}  -1)/2})$
	-Lipschitz from
	$([0,1]^d, \|\cdot \|_{\ell^\infty})$ to $(\mathbb{R}, |\cdot|)$.
	
	Then $F - F_z : ([0,1]^d, \| \cdot \|_{\ell^\infty}) \to (\mathbb{R}, |\cdot|)$
	is Lipschitz continuous	with Lipschitz constant at most $2 \mu$.
	Furthermore, for any $x \in [0,1]^d$,
	we can choose $i = i(x) \in I$ such that
	$\| x - i \|_{\ell^\infty} \leq N^{-1}$.
	Hence, we have 
	\begin{align*}
		|(F - F_z)(x)|
		&\leq
		|(F - F_z)(x)-(F - F_z)(i)|+|(F - F_z)(i)|
		\\&
		\leq \frac{2 \mu}{ N} + |(F - F_z)(i)|
		\leq \frac{2 \mu}{ N} + 4 \, n^{-\alpha} .
	\end{align*}
	Consequently, we have
	\begin{equation}\label{eq:upper_bound_F-Fz}
		\| F - F_z \|_{L^\infty}
		\leq \frac{2 \mu}{ N} + 4 \, n^{-\alpha}.
	\end{equation}
	Since $\Xi(\Psi(f)) = \Xi(z) = F_z$, \Cref{eq:upper_bound_F-Fz} implies that
	\[
	\big\| f - \Xi(\Psi(f)) \big\|_{L^\infty}
	\leq \| f - F \|_{L^\infty} + \| F - F_z \|_{L^\infty}
	\leq 6 \, n^{-\alpha} + \frac{2 \mu}{N}.
	\]
	
	It remains to note that our choice of
	$N$ in \eqref{eq:N_n_construction}
	implies	that $m^{{1/d}} \leq 1 + N \leq 2 N$
	and that $\frac{1}{N}\leq 2m^{-1/d}$.
	Moreover, in view of the construction given in \eqref{eq:N_n_construction},
	we have
	$$
	n \leq 1 + m^{1/d(\gamma + \alpha)}
	\leq 2 \,m^{1/d(\gamma + \alpha)}.
	$$
	Hence, recalling that $\mu \leq C_1 \, n^{\gamma}$, it follows that
	\begin{align*}
		\frac{\mu}{N}
		&\leq 2m^{-1/d}  \cdot \mu
		\leq 2m^{-1/d}  \cdot  C_1 n^{\gamma}
		\\&
		\leq  2m^{-1/d}  \cdot
		C_1 (2 \,m^{1/d(\gamma + \alpha)} )^{\gamma}
		= 2^{\gamma+1} C_1
		m^{\gamma/d(\gamma + \alpha)-\frac{1}{d}}
		\\&
		=2^{\gamma+1} C_1
		m^{-\alpha/d(\gamma + \alpha)}.	
	\end{align*}
	
	\par	
	
	Observing that in view of \eqref{eq:N_n_construction}, we have
	$n \geq m^{1/d(\gamma + \alpha)}$, which implies that
	$$
	n^{-\alpha} \leq m^{-\alpha/d(\gamma + \alpha)}.
	$$
	Finally, using the previous approximations, we get the following
	upper bound
	$$
	\| f - \Xi(\Psi(f)) \|_{L^\infty}
	\leq C_2 \cdot m^{-\alpha/d(\gamma + \alpha)}
	$$
	where $C_2$ is a positive constant depends on
	$d,\gamma,{\boldsymbol{\ell}}$ and ${\boldsymbol{c}}$.
	It follows that
	\[
	err^{\deterministic} ( U_{\boldsymbol{\ell},
		\boldsymbol{c}, \varrho_2}^{\alpha,\infty}([0,1]^d), \iota_\infty)
	\geq \alpha/d(\gamma + \alpha).
	\]
    In order to conclude our proof we use 
    \cite[Proposition 3.3]{Heinrich1994}
	which then implies that
    \[
	err^{\MonteCarlo} ( U_{\boldsymbol{\ell},
		\boldsymbol{c}, \varrho_2}^{\alpha,\infty}([0,1]^d), \iota_\infty)
	\geq \alpha/d(\gamma + \alpha).
	\]
    hence the claim of the theorem holds.
\end{proof}

\section{Hardness of uniform approximation}
\label{sec:UniformApproximationHardness}
	
The following result is our second main result of the current paper.
In \Cref{thm:UniformApproximationHardness} we determine an upper bound
for the best possible convergence rate
with respect to the number of samples.

\begin{theorem}\label{thm:UniformApproximationHardness}
	Let ${\boldsymbol{\ell}}: \mathbb{N} \to \mathbb{N}_{\geq 5} \cup \{ \infty \}$
	and ${\boldsymbol{c}} : \mathbb{N} \to \mathbb{N} \cup \{ \infty \}$ be non-decreasing functions.
	Given $d \in \mathbb{N}$ and $\alpha \in (0,\infty)$,
	let $U_{\boldsymbol{\ell},
		\boldsymbol{c}, \varrho_2}^{\alpha,\infty}([0,1]^d)$
	be defined as in \Cref{eq:UnitBallDefinition},
	$\gamma^{\flat}({\boldsymbol{\ell}},{\boldsymbol{c}})$
	from \eqref{eq:GammaDefinition}
	and consider the embedding
	$\iota_\infty :
	A_{\boldsymbol{\ell},
		\boldsymbol{c},\varrho_2}^{\alpha,\infty}([0,1]^d)
	\hookrightarrow C([0,1]^d)$.
	Then
	\[
	err^{\deterministic}(U_{\boldsymbol{\ell},
		\boldsymbol{c}, \varrho_2}^{\alpha,\infty}([0,1]^d),\iota_\infty), 	err^{\MonteCarlo}(U_{\boldsymbol{\ell},
		\boldsymbol{c}, \varrho_2}^{\alpha,\infty}([0,1]^d),\iota_\infty)
	\leq
	{{64\alpha}/d
		({8\alpha+\gamma^{\flat}({\boldsymbol{\ell}},{\boldsymbol{c}})})}.
	\]
\end{theorem}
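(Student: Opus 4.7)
The plan is to exhibit, for each sufficiently large $m$, a large family of bump functions whose pairwise-disjoint supports force any $m$-sample algorithm to miss many of them, and then apply \Cref{lem:MonteCarloHardnessThroughAverageCase} to transfer the resulting average-case hardness to both the deterministic and Monte Carlo settings.

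First I would fix an arbitrary $\gamma\in(0,\gamma^{\flat}({\boldsymbol{\ell}},{\boldsymbol{c}}))$; by \Cref{lem:MultidimensionalHatInSpace} there is a constant $\kappa>0$ such that for every $M\geq 1$ and $y\in[0,1]^d$ the bump $g_{M,y}=\kappa M^{-64\alpha/(8\alpha+\gamma)}\vartheta_{M,y}$ lies in $U^{\alpha,\infty}_{\boldsymbol{\ell},\boldsymbol{c},\varrho_2}([0,1]^d)$. From \Cref{lem:MultidimensionalHatProperties} the support of $g_{M,y}$ is contained in the cube $y+M^{-1}(-1,1)^d$, and $g_{M,y}(y)=\kappa M^{-64\alpha/(8\alpha+\gamma)}$ since $\theta(1)=1$. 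Also $-g_{M,y}\in U$ by the symmetry of the quasi-norm (property (ii) of \Cref{lem:ApproximationSpaceProperties}).

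Next, for each $m\in\mathbb{N}$ I would set $N:=\lceil(3m)^{1/d}\rceil$ and $M:=N$, and place grid centers $y_i:=(2i-1)/(2N)$ on each axis, thereby obtaining a set $\{y_\xi:\xi\in\Xi_m\}\subset[0,1]^d$ with $|\Xi_m|=N^d\geq 3m$ and whose associated bump supports $y_\xi+M^{-1}(-1,1)^d$ are pairwise disjoint. Form the index set $\Gamma_m:=\Xi_m\times\{+1,-1\}$ and, for $(\xi,\epsilon)\in\Gamma_m$, put $u_{(\xi,\epsilon)}:=\epsilon\cdot g_{M,y_\xi}\in U$.

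The core combinatorial observation is the following: given any $A\in\Alg_m$ with sample points $x_1,\ldots,x_m$, at most $m$ of the (disjoint) bump supports can contain a sample point, so the set $\Xi_m^{\mathrm{miss}}$ of unseen indices satisfies $|\Xi_m^{\mathrm{miss}}|\geq |\Xi_m|-m\geq |\Xi_m|/2$. For any $\xi\in\Xi_m^{\mathrm{miss}}$ the algorithm sees only zeros on both inputs $\pm g_{M,y_\xi}$, hence $A(g_{M,y_\xi})=A(-g_{M,y_\xi})=A(0)$; applying the triangle inequality
\[
\|g_{M,y_\xi}-A(0)\|_{L^\infty}+\|{-g_{M,y_\xi}}-A(0)\|_{L^\infty}
\geq 2\,\|g_{M,y_\xi}\|_{L^\infty}
=2\kappa\, M^{-64\alpha/(8\alpha+\gamma)}.
\]
Summing over $\Xi_m^{\mathrm{miss}}$ and dividing by $|\Gamma_m|=2|\Xi_m|$ yields
\[
\avsum_{(\xi,\epsilon)\in\Gamma_m}\|u_{(\xi,\epsilon)}-A(u_{(\xi,\epsilon)})\|_{L^\infty}
\;\geq\;\tfrac{\kappa}{2}\,M^{-64\alpha/(8\alpha+\gamma)}
\;\geq\;C\cdot m^{-64\alpha/(d(8\alpha+\gamma))}
\]
for a constant $C>0$ independent of $m$ and $A$. \Cref{lem:MonteCarloHardnessThroughAverageCase} then gives both $err^{\deterministic}$ and $err^{\MonteCarlo}\leq 64\alpha/(d(8\alpha+\gamma))$, and letting $\gamma\nearrow\gamma^{\flat}({\boldsymbol{\ell}},{\boldsymbol{c}})$ completes the proof.

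I expect the main obstacle to be purely bookkeeping: getting the constants in the grid choice right so that $|\Xi_m|\geq 2m$ for all $m$ (possibly only $m\geq m_0$) and ensuring the disjointness/support statements from \Cref{lem:MultidimensionalHatProperties} line up with the chosen $M=N$. There is no hard conceptual step beyond (i) the bump construction (already supplied by Lemmas 3.2–3.3) and (ii) the elementary pigeonhole plus sign-flipping trick, both of which are standard in information-based complexity.
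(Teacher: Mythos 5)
Your proposal follows essentially the same route as the paper: choose $\gamma<\gamma^{\flat}(\boldsymbol\ell,\boldsymbol c)$, invoke \Cref{lem:MultidimensionalHatInSpace} to produce scaled bumps $\kappa\,M^{-64\alpha/(8\alpha+\gamma)}\vartheta_{M,y}$ inside the unit ball, place them on a fine grid with pairwise-disjoint supports of cardinality strictly larger than $m$, run the pigeonhole-plus-sign-flip argument for the missed indices, and close via \Cref{lem:MonteCarloHardnessThroughAverageCase} and $\gamma\nearrow\gamma^{\flat}$. The paper does exactly this with $M := 4\lceil m^{1/d}\rceil$, $N := 2\lceil m^{1/d}\rceil$ grid points per axis, and centers spaced $2/M$ apart, matching the support width $2/M$ of $\vartheta_{M,y}$.

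One concrete slip: with your choice $M = N$ and centers $y_i = (2i-1)/(2N)$, the spacing between adjacent centers is $1/N = 1/M$, but each $\vartheta_{M,y_i}$ is supported on an interval of width $2/M$; consecutive supports therefore overlap on intervals of length $1/M$, which breaks the pigeonhole step. You already flag this as a bookkeeping concern; the fix is to take $M = 2N$ (equivalently, halve the support width relative to the grid spacing), after which everything else in your outline --- the count $|\Xi_m^{\mathrm{miss}}| \geq |\Xi_m| - m$, the $2\|g_{M,y_\xi}\|_{L^\infty}$ lower bound from the sign trick, and the polynomial translation $M \asymp m^{1/d}$ --- goes through exactly as in the paper and yields the same rate $64\alpha/\bigl(d(8\alpha+\gamma)\bigr)$.
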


\begin{proof}
	Let $0 < \gamma < \gamma^{\flat}({\boldsymbol{\ell}},{\boldsymbol{c}})$
	and choose an arbitrary $m \in \mathbb{N}$ and let
	$M := 4 \lceil m^{1/d}\rceil$.	
	Furthermore, let $I_m := \underline{2 \lceil m^{1/d}\rceil}^d \times \{ \pm 1 \}$ and 
	$$
	y^{(\boldsymbol{i})}
	:= \frac {2\left(\boldsymbol{i} - (1,\dots,1)\right)+(1,\dots,1)}{M} \in \mathbb{R}^d
	\quad \text{ for any }\boldsymbol{i}
	\in \underline{2 \lceil m^{1/d}\rceil}^d.
	$$
	In view of the previous construction it follows that
	\begin{align*}
		y^{(\boldsymbol{i})} + (-\frac 1M, \frac 1M)^d
		& = \frac {2\left(\boldsymbol{i} - (1,\dots,1)\right)+(1,\dots,1)}{M}
		+  (-\frac 1M, \frac 1M)^d
		\\
		& = \frac 1M \left({2\left(\boldsymbol{i} - (1,\dots,1)\right)+(1,\dots,1)}
		+ (-1, 1)^d \right)
		\\
		&=\frac{1}{M} \left(2\left(\boldsymbol{i} - (1,\dots,1)\right)
		+ (0, 2)^d \right)
		\\
		&=\frac{2}{M} \Big(\boldsymbol{i} - (1,\dots,1) + (0,1)^d \Big)
		\subset (0,1)^d.
	\end{align*}
	Using $\vartheta_{M,y^{(\boldsymbol{i})}}$ defined in \Cref{lem:MultidimensionalHatProperties},
	the fact that $y^{(\boldsymbol{i})} + (-\frac 1M, \frac 1M)^d\subset (0,1)^d$
	implies that $\vartheta_{M,y^{(\boldsymbol{i})}}$
	have disjoint supports contained in $[0,1]^d$, for any
	$\boldsymbol{i} \in \underline{2 \lceil m^{1/d}\rceil}^d$.
	In view of \Cref{lem:MultidimensionalHatInSpace}
	there exists a constant
	$\kappa_1>0$ depends on $\gamma,\alpha,d,{\boldsymbol{\ell}}$
	and ${\boldsymbol{c}}$ such that for any $(\boldsymbol{i},\nu) \in I_m$
	the following holds true:
	\[
	f_{\boldsymbol{i},\nu}
	:= \kappa_1
	\cdot M^{-64\alpha/(8\alpha+\gamma)}
	\cdot \nu
	\cdot \vartheta_{M,y^{(\boldsymbol{i})}}
	\in U_{\boldsymbol{\ell}, \boldsymbol{c}, \varrho_2}^{\alpha,\infty}([0,1]^d).
	\]
	Let $A \in \Alg_m ( U_{\boldsymbol{\ell}, \boldsymbol{c}, \varrho_2}^{\alpha,\infty}([0,1]^d), C([0,1]^d))$
	then, by definition, there exist $m$ sample points in $[0,1]^d$,
	i.e.,
	$X = (x^{(1)},\dots,x^{(m)}) \in\left( [0,1]^d\right)^m$
	and a function $Q : \mathbb{R}^m \to \mathbb{R}$
	with the property that
	$$
	A(f) = Q(f(x_1),\dots,f(x_m))\text{ for all }
	f \in U_{\boldsymbol{\ell}, \boldsymbol{c}, \varrho_2}^{\alpha,\infty}([0,1]^d).
	$$
	Furthermore, let
	\(\Gamma_{X}
	:= \big\{
	\boldsymbol{i} \in \underline{2 \lceil m^{1/d}\rceil}^d
	\colon
	\forall \, n \in \underline{m} : \vartheta_{M,y^{(\boldsymbol{i})}} (x_n) = 0
	\big\}
	\)
	and denote the complement of the set $\Gamma_X$ in $\underline{2 \lceil m^{1/d}\rceil}^d$
	by
	$$
	\mathcal{C}(\Gamma_X , \underline{2 \lceil m^{1/d}\rceil}^d)=
	\underline{2 \lceil m^{1/d}\rceil}^d \setminus \Gamma_X.
	$$
	The definition of the previous set implies that
	for each $\boldsymbol{i} \in
	\mathcal{C}(\Gamma_X , \underline{2 \lceil m^{1/d}\rceil}^d)$,
	there exists $n_{\boldsymbol{i}} \in \underline{m}$
	such that $\vartheta_{M,y^{(\boldsymbol{i})}}(x_{n_{\boldsymbol{i}}}) \neq 0$.
	Consequently, $	\mathcal{C}(\Gamma_X , \underline{2 \lceil m^{1/d}\rceil}^d) \to \underline{m},
	\boldsymbol{i} \mapsto n_{\boldsymbol{i}}$
	is an injective map,
	since $\vartheta_{M,y^{(\boldsymbol{i})}}\vartheta_{M,y^{(\boldsymbol{j})}} = 0$
	for $\boldsymbol{i}, \boldsymbol{j} \in  \underline{2 \lceil m^{1/d}\rceil}^d$
	with $\boldsymbol{i}\neq \boldsymbol{j}$.
	Hence $|\mathcal{C}(\Gamma_X , \underline{2 \lceil m^{1/d}\rceil}^d)| \leq m$ 
	which implies that $|\Gamma_X| \geq ({2 \lceil m^{1/d}\rceil})^d - m \geq m$. 
	Then for any $\boldsymbol{i} \in \Gamma_X$
	and $\nu \in \{ \pm 1 \}$, we have $f_{\boldsymbol{i},\nu}(x_n) = 0$
	for all $n \in \underline{m}$
	and  $A(f_{\boldsymbol{i},\nu}) = Q(0,\dots,0)$.
	Therefore for any  $\boldsymbol{i} \in \Gamma_X$
	we have
	\begin{equation}
		\begin{split}
			\avsum_{\nu \in \{ \pm 1 \}}
			\| f_{\boldsymbol{i},\nu} - A(f_{\boldsymbol{i},\nu}) \|_{L^\infty}
			&=\frac 12 \left( \| f_{\boldsymbol{i},1} - A(f_{\boldsymbol{i},1}) \|_{L^\infty}
			+ \| f_{\boldsymbol{i},-1} - A(f_{\boldsymbol{i},-1}) \|_{L^\infty}\right)
			\\
			& =\frac 12 \left(\| f_{\boldsymbol{i},1} - Q(0,\dots,0) \|_{L^\infty}
			+\| - f_{\boldsymbol{i},1} - Q(0,\dots,0) \|_{L^\infty}\right)
			\\[1ex]
			&= \frac 12 \left(\| f_{\boldsymbol{i},1} - Q(0,\dots,0) \|_{L^\infty}
			+ \| Q(0,\dots,0) + f_{\boldsymbol{i},1} \|_{L^\infty} \right)
			\\[1ex]
			& \geq\frac 12\left( \| f_{\boldsymbol{i},1} - Q(0,\dots,0)
			+ Q(0,\dots,0) + f_{\boldsymbol{i},1} \|_{L^\infty}\right)
			\\[1ex]
			&=    \| f_{\boldsymbol{i},1} \|_{L^\infty}
			=   \kappa_1 \cdot M^{-64\alpha/(8\alpha+\gamma)}.
		\end{split}
		\label{eq:UniformApproximationHardnessElementaryEstimate}
	\end{equation}
	Recalling that $M= 4{\lceil m^{1/d}\rceil}$ 
	which implies that $M \leq 1+  4{ m^{1/d}}\leq 8m^{1/d}$,
	and hence  
	$$
	M^{-{64\alpha}/({8\alpha+\gamma})}
	\geq 8^{-{64\alpha}/
		({8\alpha+\gamma})}
	m^{-{{64\alpha}}/{d(8\alpha+\gamma)}}
	\geq \frac 1{8^8}
	m^{-{{64\alpha}}/{d(8\alpha+\gamma)}}.
	$$
	The fact that $|\Gamma_X| \geq m$,
	$\lceil m^{1/d}\rceil \leq 1+m^{1/d}\leq 2m^{1/d}$ and
	\Cref{eq:UniformApproximationHardnessElementaryEstimate}
	imply that
	\begin{align*}
		\avsum_{(\boldsymbol{i},\nu) \in I_m}
		\| f_{\boldsymbol{i},\nu} - A(f_{\boldsymbol{i},\nu}) \|_{L^\infty}
		& \geq \frac1{(2\lceil m^{1/d}\rceil )^d} \,
		\sum_{\boldsymbol{i} \in \Gamma_X} 
		\avsum_{\nu \in \{ \pm 1 \}}
		\| f_{\boldsymbol{i},\nu} - A(f_{\boldsymbol{i},\nu}) \|_{L^\infty}
		\\[1ex]
		& \geq \frac1{(2\lceil m^{1/d}\rceil )^d} 
		\cdot |\Gamma_X|
		\cdot \kappa_1
		\cdot M^{-64\alpha/(8\alpha+\gamma)}
		\\[1ex]
		&\geq \frac{1}{4^d}\cdot \kappa_1
		\cdot  M^{-64\alpha/(8\alpha+\gamma)}
		\\[1ex]
		& \geq \frac{1}{2^{2d}}\cdot \kappa_1\cdot \frac 1{2^{24}}
		m^{-{64\alpha}/d({8\alpha+\gamma})}
		= \kappa \cdot m^{-{64 \alpha}/d({8\alpha+\gamma})},
	\end{align*}
	where $\kappa := \frac{1}{2^{2d+24}}\cdot \kappa_1$.
	Consequently, we show that there is a constant
	$\kappa = \kappa(d,\alpha,\gamma,{\boldsymbol{\ell}},{\boldsymbol{c}}) > 0$
	and a family of functions
	$(f_{\boldsymbol{i},\nu})_{(\boldsymbol{i},\nu) \in I_m}
	\subset U_{\boldsymbol{\ell}, \boldsymbol{c}, \varrho_2}^{\alpha,\infty}([0,1]^d)$
	such that  for any
	\(
	A \in \Alg_m\bigl(U_{\boldsymbol{\ell},
		\boldsymbol{c}, \varrho_2}^{\alpha,\infty}([0,1]^d),C([0,1]^d)\bigr)
	\)
	it follows that
	\begin{equation}
		\avsum_{(\boldsymbol{i},\nu) \in I_m}
		\big\| f_{\boldsymbol{i},\nu} - A(f_{\boldsymbol{i},\nu}) \big\|_{L^\infty}
		\geq  \kappa \cdot m^{-{64\alpha}/d({8\alpha+\gamma})}.
		\label{eq:AverageCaseHardnessUniformApproximation}
	\end{equation}
	In view of 
	\Cref{lem:MonteCarloHardnessThroughAverageCase} it follows that
	\[
	err^{\deterministic}(U_{\boldsymbol{\ell},
		\boldsymbol{c}, \varrho_2}^{\alpha,\infty}([0,1]^d),\iota_\infty), err^{\MonteCarlo}(U_{\boldsymbol{\ell},
		\boldsymbol{c}, \varrho_2}^{\alpha,\infty}([0,1]^d),\iota_\infty)
	\leq {{64\alpha}/d({8\alpha+\gamma})},
	\]
	which implies the claim of the theorem.
\end{proof}


\appendix
\section{Appendix: Proof of Lemma \ref{lem:ApproximationSpaceProperties}}\label{appendix}

\begin{lemma}\label{lem:NetworkSetsClosedUnderSummation}
	Let $\widetilde{\boldsymbol{\ell}}(n) := \min \{ \boldsymbol{\ell}(n), n \}$,
	for  any $n \in \mathbb{N}_{\geq 2}$,
	it follows that
\begin{align}
	\Sigma^{\boldsymbol{\ell}, \boldsymbol{c}, \varrho_2}_n& =
	\Sigma^{\tilde{\boldsymbol{\ell}}, \boldsymbol{c}, \varrho_2}_n
	\label{eq:sigma_equality}
	\intertext{ furthermore, if the realization is bounded in $[-1,1]$,
		we get}
	\Sigma^{\boldsymbol{\ell}, \boldsymbol{c}, \varrho_2}_n
	+\Sigma^{\boldsymbol{\ell}, \boldsymbol{c}, \varrho_2}_n
	&\subset
	\Sigma^{\boldsymbol{\ell}, \boldsymbol{c}, \varrho_2}_{9n},
	\label{eq:sigma_imbedding}
\end{align}
\end{lemma}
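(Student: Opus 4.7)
The plan is to prove the two assertions separately. For~\eqref{eq:sigma_equality}, the inclusion $\Sigma^{\widetilde{\boldsymbol{\ell}}, \boldsymbol{c}, \varrho_2}_n \subset \Sigma^{\boldsymbol{\ell}, \boldsymbol{c}, \varrho_2}_n$ is immediate since $\widetilde{\boldsymbol{\ell}}(n) \leq \boldsymbol{\ell}(n)$. For the converse, given $\Phi = ((A_1, b_1), \ldots, (A_L, b_L))$ with $W(\Phi) \leq n$ and $L \leq \boldsymbol{\ell}(n)$, the case $L \leq n$ is trivial. When $L > n$, a pigeonhole argument on the per-layer weight counts produces an index with $A_{\ell_0} = 0$ and $b_{\ell_0} = 0$; letting $\ell^\ast$ be the largest such index and using $\varrho_2(0) = 0$, we see that $R_{\varrho_2}\Phi$ is the constant obtained by applying the tail $T_{\ell^\ast + 1}, \ldots, T_L$ to input $0$. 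I realize this constant by $\Phi' := ((0, b_{\ell^\ast + 1}), (A_{\ell^\ast + 2}, b_{\ell^\ast + 2}), \ldots, (A_L, b_L))$, whose first layer has zero matrix and thus ignores its input. Maximality of $\ell^\ast$ forces every layer after $\ell^\ast$ to contribute at least one to the weight count, so $L(\Phi') = L - \ell^\ast \leq W(\Phi) \leq n$, while the weight magnitudes of $\Phi'$ are inherited from $\Phi$.

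For~\eqref{eq:sigma_imbedding}, take $f_i = R_{\varrho_2}\Phi_i$ with $f_i$ bounded in $[-1,1]$. By the first part I may assume $L_i := L(\Phi_i) \leq n$ and, without loss of generality, $L_1 \leq L_2$. The key ingredient is the exact ReQU representation of the identity on $[-1,1]$,
\[
y = \varrho_2\bigl(\tfrac{y+1}{2}\bigr) - \varrho_2\bigl(\tfrac{-y+1}{2}\bigr),
\]
whose weights and biases all have magnitude at most $1 \leq \boldsymbol{c}(n)$. I pad $\Phi_1$ to depth $L_2$ by iteratively replacing its current last affine map $T(v) = Av + b$ by the pair
\[
v \mapsto \bigl(\tfrac{Av + b + 1}{2},\; \tfrac{-(Av + b) + 1}{2}\bigr)^T,
\qquad
(p, q) \mapsto p - q.
\]
Since the scalar output of $T$ lies in $[-1,1]$ throughout the iteration, each substitution is exact and adds exactly one layer. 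The first substitution costs at most $\|A_{L_1}\|_{\ell^0} + 4 \leq n + 4$ extra weights, whereas each subsequent substitution acts on a last layer with $\|A\|_{\ell^0} = 2$ and zero bias, hence adds at most $6$ weights. After $k := L_2 - L_1 \leq n$ substitutions, $W(\widetilde{\Phi}_1) \leq W(\Phi_1) + (n + 4) + 6(k - 1) \leq 2n + 6k - 2$, and all new weights have magnitude at most $\boldsymbol{c}(n)$.

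With $\widetilde{\Phi}_1$ and $\Phi_2$ aligned at depth $L_2$, I form the sum network in the standard parallel fashion: stack the first-layer matrices (and biases), use block-diagonal intermediate matrices, and in the output layer concatenate the two weight rows horizontally and add the final biases. The resulting network has depth $L_2 \leq \boldsymbol{\ell}(n) \leq \boldsymbol{\ell}(9n)$ and weight count bounded by $W(\widetilde{\Phi}_1) + W(\Phi_2) \leq (2n + 6k - 2) + n \leq 9n$, using $k \leq n$. The main technical obstacle is preserving the weight-magnitude bound $\|\cdot\|_{\mathcal{NN}} \leq \boldsymbol{c}(9n)$ at the combined output layer, where the bias becomes $b_{L_2}^{(1)} + b_{L_2}^{(2)}$ and can a priori reach $2\boldsymbol{c}(n)$; this is handled automatically whenever padding was applied (since the modified last layer of $\widetilde{\Phi}_1$ then has vanishing bias), and in the residual case $L_1 = L_2$ one first applies the substitution trick once to each subnetwork to nullify both final biases before combining, at a cost well within the $9n$ weight budget.
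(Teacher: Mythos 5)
Your proof follows essentially the same strategy as the paper for both claims. For \eqref{eq:sigma_equality} you replace the paper's induction on $\ell$ with a direct argument taking the largest index $\ell^\ast$ of a zero layer, so that the tail network has depth $L-\ell^\ast\le W(\Phi)\le n$; this is cleaner, though you leave the (trivial) case $\ell^\ast=L$, where $R_{\varrho_2}\Phi\equiv 0$, implicit. For \eqref{eq:sigma_imbedding}, your iterated last-layer substitution via the ReQU identity $y=\varrho_2(\tfrac{y+1}{2})-\varrho_2(\tfrac{-y+1}{2})$ on $[-1,1]$ is a variant of the paper's $\Gamma$-block padding, and your weight accounting (at most $2n+6k-2$ for the padded network and at most $9n$ after parallelization) is sound, as are the magnitude bounds on all entries you introduce, since they are all of modulus at most $1\le\boldsymbol{c}(n)$. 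Notably, you caught a subtlety the paper's own proof glosses over: when $L(\Phi^1)=L(\Phi^2)$ no padding occurs, so the combined output bias $b^1_\ell+b^2_\ell$ can have magnitude up to $2\boldsymbol{c}(n)$, which need not be $\le\boldsymbol{c}(9n)$, yet the paper simply asserts $\|\Psi\|_{\mathcal{NN}}\le\boldsymbol{c}(n)$. Your proposed fix (apply one substitution to each network to annihilate both final biases) does handle this in most cases, but has a residual gap: if $L_1=L_2=\boldsymbol{\ell}(9n)$ (e.g.\ $\boldsymbol{\ell}$ constant with both networks already at maximal depth), adding a layer would violate the depth constraint of $\Sigma_{9n}^{\boldsymbol{\ell},\boldsymbol{c},\varrho_2}$. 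A depth-neutral alternative that covers that case: append two extra units in the penultimate hidden layer that compute $\varrho_2(0\cdot v + 1)=1$, route $b^1_\ell$ and $b^2_\ell$ (each of magnitude $\le\boldsymbol{c}(n)$) through them as output-layer weights, and set the output bias to $0$; this costs only $O(1)$ additional weights and leaves the depth unchanged.
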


\begin{proof}
	We start by showing that 
	\(
	\Sigma^{\boldsymbol{\ell}, \boldsymbol{c}, \varrho_2}_n =
	\Sigma^{\tilde{\boldsymbol{\ell}}, \boldsymbol{c}, \varrho_2}_n
	\).
	If $\widetilde{\boldsymbol{\ell}}(n ) = \boldsymbol{\ell}(n ) $
	then the result is trivial.
	Hence, let $n \in \mathbb{N}$ be fixed, 
	then trough an induction argument on $\ell \in \mathbb{N}_{\geq n}$ 
	in order to show that
	$\Sigma^{\boldsymbol{\ell}, \boldsymbol{c}, \varrho_2}_n \subset 
	\Sigma^{n, \boldsymbol{c}, \varrho_2}_n$.
	The case where $\ell = n$ is straightforward
	therefore it implies the first step in the induction.
	Hence, we assume that 
	$\Sigma^{{\ell}, \boldsymbol{c}, \varrho_2}_n
		\subset
			\Sigma^{n, \boldsymbol{c}, \varrho_2}_n$
	for some $\ell \in \mathbb{N}_{\geq n}$.
	Let $\Phi$ be a network architecture such that 
	$\|\Phi\|_{\mathcal{NN}} \leq \boldsymbol{c}(n)$,
	$W(\Phi) \leq n$
	and $L(\Phi) \leq \ell + 1$,
	hence
	$R_{\varrho_2} \Phi \in \Sigma^{\ell+1, \boldsymbol{c}, \varrho_2}_n$.
	In view of the induction process,
	if $L(\Phi) \leq \ell$,
	it follows that $R_{\varrho_2} \Phi \in 
	\Sigma^{\ell, \boldsymbol{c}, \varrho_2}_n
	\subset 
	\Sigma^{n, \boldsymbol{c}, \varrho_2}_n$.
	Consequently, let $L(\Phi) = \ell+1$
	hence
	$\Phi = \big( (A_1,b_1), \dots, (A_{\ell+1}, b_{\ell+1}) \big)$
	where
	$b_j \in \mathbb{R}^{N_j}$ and $A_j \in \mathbb{R}^{N_j \times N_{j-1}}$,
	$j\in \{1, \dots, \ell+1\}$.
	Since 
	\( W(\Phi)
	=
	\sum_{j=1}^{\ell+1} \bigl(\| A_j \|_{\ell^0} + \| b_j \|_{\ell^0}\bigr)
		\leq n
	\)
	and that 
	$n+1
	\leq \ell+1 \leq \sum_{j=1}^{\ell+1} \bigl(\| A_j \|_{\ell^0} + \| b_j \|_{\ell^0}\bigr)$
	it follows that 
	$n+1\leq n$.
	Hence for some  $j_0\in \{1, \dots, \ell+1\}$,
	we have $A_{j_0} =0_{\mathbb{R}^{N_{j_0} \times N_{{j_0}-1}}}$ and
		$b_{j_0} = 0_{\mathbb{R}^{N_{j_0}}}$.
	If $j = \ell+1$, then
	\(
	R_{\varrho_2} \Phi  \equiv 0
		\in \Sigma^{n, \boldsymbol{c}, \varrho_2}_n.
	\)
	Now, we assume that ${j_0} \leq \ell$ and we let
	\[
	\Phi_{j_0}
	:= \big(
	(0_{N_{{j_0}+1} \times d}, b_{{j_0}+1}),
	(A_{{j_0}+2}, b_{{j_0}+2}),
	\dots,
	(A_{\ell+1}, b_{\ell+1})
	\big)
	.
	\]
	The fact that
	$\varrho_2(0) = 0$
	and that 
	\(
	(A_{j_0}, b_{j_0}) =
	(0_{\mathbb{R}^{N_{j_0} \times N_{{j_0}-1}}},
	0_{\mathbb{R}^{N_{j_0}}})
	\)
	imply that
	$R_{\varrho_2} {\Phi_{j_0}} = R_{\varrho_2} \Phi$.
	Furthermore,
	we have
	$\|{{\Phi_{j_0}}} \|_{\mathcal{NN}}
	\leq \|{\Phi}\|_{\mathcal{NN}} \leq \boldsymbol{c}(n)$,
	$W({\Phi_{j_0}}) \leq W(\Phi) \leq n$
	and
	$L({\Phi_{j_0}}) \leq \ell - j_0 + 1 \leq \ell$,
	this implies $R_{\varrho_2} \Phi
	\in \Sigma^{\ell, \boldsymbol{c}, \varrho_2}_n
	\subset \Sigma^{n, \boldsymbol{c}, \varrho_2}_n$,
	where the last inclusion holds by induction.
	Then, we get
	\(
	\Sigma^{\boldsymbol{\ell}, \boldsymbol{c}, \varrho_2}_n 
	\subset
	\Sigma^{\tilde{\boldsymbol{\ell}}, \boldsymbol{c}, \varrho_2}_n
	\).
	Now it remains to show that
	\(
	\Sigma^{\tilde{\boldsymbol{\ell}}, \boldsymbol{c}, \varrho_2}_n
	\subset
	\Sigma^{\boldsymbol{\ell}, \boldsymbol{c}, \varrho_2}_n 
	\).
	This step is trivial since if $R_{\varrho_2}\Phi
	\in \Sigma^{\tilde{\boldsymbol{\ell}}, \boldsymbol{c}, \varrho_2}_n$
	it implies that for the network $\Phi$
	we have
	$
	W(\Phi)\leq n$,
	$
	\|{\Phi}\|_{\mathcal{NN}} \leq \boldsymbol{c}(n)$
	and 
	$
	L(\Phi) \leq \tilde{\boldsymbol{\ell}}
	=
	\min \{ n, \boldsymbol{\ell}(n) \} \leq \boldsymbol{\ell}(n),
	$
	hence the result follows.
	All in all, we have
	\(
	\Sigma^{\boldsymbol{\ell}, \boldsymbol{c}, \varrho_2}_n =
	\Sigma^{\tilde{\boldsymbol{\ell}}, \boldsymbol{c}, \varrho_2}_n
	\).
	
	\medskip{}

	Now we are concerned with the second result of the lemma
	That is, we will show that 
	$\Sigma^{\boldsymbol{\ell}, \boldsymbol{c}, \varrho_2}_n
	+
	\Sigma^{\boldsymbol{\ell}, \boldsymbol{c}, \varrho_2}_n
	\subset 
	\Sigma^{\boldsymbol{\ell}, \boldsymbol{c}, \varrho_2}_{9n}$.
	For that aim, we let 
	$R_{\varrho_2} \Phi^1, R_{\varrho_2} \Phi^2	\in \Sigma^{\boldsymbol{\ell}, \boldsymbol{c}, \varrho_2}_n 
	=
	\Sigma^{\min\{n, \boldsymbol{\ell}\}, \boldsymbol{c}, \varrho_2}_n
	$
	(from \eqref{eq:sigma_equality})
	hence both networks $\Phi^1,\Phi^2$ satisfy
	\begin{align*}
		W(\Phi^1),\; W(\Phi^2) &\leq n
		\\
		\|{\Phi^1}\|_{\mathcal{NN}},\;
		\|{\Phi^2}\|_{\mathcal{NN}} &\leq \boldsymbol{c}(n)
		\intertext{and}
		L(\Phi^1), \;L(\Phi^2) &\leq \min \{ n, \boldsymbol{\ell}(n) \},
	\end{align*}
	Without loss of generality, we can assume that
	$k := L(\Phi^1)$ and $ \ell := L(\Phi^2) $ such that $k\leq \ell$.
	Let 
	\begin{align*}
		\Phi^1&= \big( (A_1^1,b_1^1),\dots,(A_k^1,b_k^1) \big)
		\intertext{and}
		\Phi^2&= \big( (A_1^2,b_1^2),\dots,(A_\ell^2,b_\ell^2) \big),
	\end{align*}
	such that
	$A_i^1\in \mathbb{R}^{N^1_{i}\times N^1_{i-1}}$
	$A_j^2\in \mathbb{R}^{N^2_{j}\times N^2_{j-1}}$
	and $b_i^1\in \mathbb{R}^{N^1_{i}}$
	and $b_j^2\in \mathbb{R}^{N^2_{j}}$,
	$i\in \{1, \dots, k\}$
	and 
	$j\in \{1,\dots, \ell\}$
	where $N_0^1 = N_0^2 = d$
	and $N_k^1 = N_\ell^2 =1$.	
	
	In order to sum up the two realizations $R_{\varrho_2} \Phi^1$
	and $R_{\varrho_2} \Phi^2$ first we need to parallelize them.
	Hence they need to have the same number of layers.
	Consequently, we define ${\Phi^{e}} := \Phi^1$
	if $k = \ell$.
	If otherwise $k < \ell$,
	we let 
	$$
		\Gamma:=
			\big(\tfrac 14\left(
			\begin{smallmatrix}
				1 &-1
				\\
				-1& 1
			\end{smallmatrix}
			\right),
			\left(
			\begin{smallmatrix}
				1
				\\
				1
			\end{smallmatrix}
			\right)	\big),	\quad
		\Lambda
		:= \big(\tfrac 14(1, -1),0\big)
	$$
	and since $A_k \in \mathbb{R}^{1 \times N_{k-1}}$ and $b_k \in \mathbb{R}^1$ we define
	$$
		\tilde{\Phi}^1\! := \!\!
		\bigg(
			(A_1^1,b_1^1),
			\dots,
			(A_{k-2}^1,b_{k-2}^1),
			\left(\left(
			\begin{smallmatrix}
				A_{k-1}^1
				\\
				0_{\mathbb{R}^{N_{k-2}}}
			\end{smallmatrix}
			\right),
			\left(
			\begin{smallmatrix}
				b_{k-1}^1
				\\
				1
			\end{smallmatrix}
			\right)
			\right),
			\left(\left(
			\begin{smallmatrix}
				A_k^1, &1
				\\
				-A_k^1, &1
			\end{smallmatrix}
			\right),
			\left(
			\begin{smallmatrix}
				b_k^1
				\\
				-b_k^1
			\end{smallmatrix}
			\right)
			\right),
			\Gamma,
			\dots,
			\Gamma,
			\Lambda\!
		\bigg),
	$$
	where  $\Gamma$ appears $\ell - k - 1$ times,
	so that $L(\tilde{\Phi}^1) = \ell$.
	Where we use the fact that the realization $R_{\varrho_2}\Phi^1(x)$
	is bounded in $[-1,1]$ for any given date $x$.
	Hence $R_{\varrho_2}\Gamma : [-1,1]^2 \rightarrow [-1,1]^2$
	represents the identity in dimension $2$,
	that is
	$$
	R_{\varrho_2}\Gamma\left( \varrho_2\binom{R_{\varrho_2}\Phi^1(x)+1}
		{-R_{\varrho_2}\Phi^1(x)+1}\right)
	=
	\binom{R_{\varrho_2}\Phi^1(x)+1}{-R_{\varrho_2}\Phi^1(x)+1}.
	$$
	Hence the $j$-fold composition of $\Gamma$
	is defined as 
	$\Gamma^j = \Big( \underbrace{\Gamma,\dots, \Gamma}_{j \text{ times}}\Big)$ 
	hence the realization 
	$$
	R_{\varrho_2}\Gamma^j\left( \varrho_2\binom{R_{\varrho_2}\Phi^1(x)+1}{-R_{\varrho_2}\Phi^1(x)+1}\right)
	=
	\binom{R_{\varrho_2}\Phi^1(x)+1}{-R_{\varrho_2}\Phi^1(x)+1}.
	$$
	Then, we have
	$$
	R_{\varrho_2}\Lambda
	\left(
		\varrho_2\binom{R_{\varrho_2}\Phi^1(x)+1}{-R_{\varrho_2}\Phi^1(x)+1}
	\right)
	=
	R_{\varrho_2}\Phi^1(x).
	$$
	In conclusion,  we get
	$R_{\varrho_2} \tilde{\Phi}^1 = R_{\varrho_2} \Phi^1 $,
	such that	
	$\|{\tilde{\Phi}^1}\|_{\mathcal{NN}} \leq \boldsymbol{c}(n) $
	and
	$W({\tilde{\Phi}^1}) \leq 2 W(\Phi^1) + 6 (\ell - k-1)+2 \leq 8 n$.

	We write
	$\tilde{\Phi}^1 = \big( (\tilde{A}_1^1, \tilde{b}_1^1),
	\dots, (\tilde{A}_\ell^1, \tilde{A}_\ell^1) \big)$
	such that
	$\tilde{A}^1_j\in \mathbb{R}^{\tilde{N}_{j}\times \tilde{N}_{j-1}}$
	and $\tilde{b}^1_j\in \mathbb{R}^{\tilde{N}_{j}}$,
	$j\in \{1,\dots, \ell\}$
	where $\tilde{N}_0 =d$ and $\tilde{N}_\ell= 1$.
	We define 
	$$
	\Psi := \Big(
	\Theta_1,	\dots,	\Theta_{\ell-1},
	\big(
	(\tilde{A}^1_\ell \mid A^2_\ell),	\tilde{b}^1_\ell + b^2_\ell
	\big)
	\Big),
	$$
	where
	\[
	\Theta_1
	:= \left(
	\left(
	\begin{smallmatrix}
		\tilde{A}^1_1
		\\
		A_1^2 
	\end{smallmatrix}
	\right) ,
	\left(
	\begin{smallmatrix}
		\tilde{b}^1_1
		\\
		b_1^2
	\end{smallmatrix}
	\right)
	\right)
	\text{and} \quad
	\Theta_m
	:= \left(
	\left(
	\begin{smallmatrix}
		\tilde{A}^1_m & 0 
		\\
		0   & A_m^2
	\end{smallmatrix}
	\right) ,
	\left(
	\begin{smallmatrix}
		c_m \vphantom{\tilde{A}^1_m}
		\\
		e_m \vphantom{A_m^2}
	\end{smallmatrix}
	\right)
	\right)
	\text{for } m \in \{ 2,\dots,\ell-1 \}.
	\]
	We conclude that
	$R_{\varrho_2} \Psi =
	R_{\varrho_2}\tilde{\Phi}^1 + R_{\varrho_2} \Phi^2$.

	Moreover, we have
	\begin{align*}
		W(\Psi) &\leq W(\widetilde{\Phi}^1) + W(\Phi^2) \leq 9 n
		\\[1ex]
		\|{\Psi}\|_{\mathcal{NN}} &\leq
			\boldsymbol{c}(n) \leq \boldsymbol{c}(9n),
		\\[1ex]
		L(\Psi) &= \ell \leq \boldsymbol{\ell}(n) \leq \boldsymbol{\ell}(9n),
	\end{align*}
	Where we used the fact that $\boldsymbol{\ell}$
	and $\boldsymbol{c}$ are non-decreasing and that $\ell \leq n$.
	All in all, it follows that
	$\Sigma^{\boldsymbol{\ell}, \boldsymbol{c}, \varrho_2}_n
	+ \Sigma^{\boldsymbol{\ell}, \boldsymbol{c}, \varrho_2}_n
	\in \Sigma^{\boldsymbol{\ell}, \boldsymbol{c}, \varrho_2}_{9n}$,
	which conclude the lemma.

\end{proof}

\begin{proof}[Proof of \Cref{lem:ApproximationSpaceProperties}]
	The proof follows from Lemma \ref{lem:NetworkSetsClosedUnderSummation}
	and similar arguments as in the proof of \cite[Lemma 2.1]{grohs2021}.
	The details are left for interesting reader to check.
\end{proof}
\end{document}